\definecolor{mygray}{RGB}{211,211,211}
\definecolor{cvprblue}{RGB}{0.21,0.49,0.74}
\definecolor{mypurple}{RGB}{100, 50, 168}
	\let\Cref\crtCref
	\let\cref\crtcref
\DeclareMathAlphabet{\mathpzc}{OT1}{pzc}{m}{it}
\newcommand{\appallingunderline}[1]{%
	\underline{\smash{#1}\vphantom{T}}\vphantom{#1}%
}
\newcommand{\cE}{\mathcal{E}}
\newcommand{\cF}{\mathcal{F}}
\newcommand{\cN}{\mathcal{N}}
\newcommand{\bbE}{\mathbb{E}}
\newcommand{\bbR}{\mathbb{R}}
\NewDocumentCommand{\norm}{mG{2}}{\big\|#1\big\|_{#2}}
\DeclareMathOperator{\diag}{diag}
\DeclareMathOperator{\rank}{rank}
\newcommand{\argmin}{\mathop{\rm argmin}}
\newcommand{\argmax}{\mathop{\rm argmax}}
\NewDocumentCommand{\seqp}{mG{n}}{{#1}_1-\cdots+ {#1}_{#2}}
\NewDocumentCommand{\seqm}{mG{n}}{{#1}_1-\cdots- {#1}_{#2}}
\newcommand{\myparagraph}[1]{\textbf{#1.}}
\DeclareMathOperator{\trace}{tr}
\newtheorem{theorem}{Theorem}
\newtheorem{prop}{Proposition}
\newtheorem{lemma}{Lemma}
\theoremstyle{definition}
\newtheorem{definition}{Definition}
\newtheorem{assumption}{Assumption}
\crefname{assumption}{Assumption}{Assumptions} % For lowercase
\Crefname{assumption}{Assumption}{Assumptions} % For capitalized
\newtheorem{example}{Example}
\theoremstyle{remark}
\newtheorem{remark}{Remark}
\definecolor{cvprblue}{rgb}{0.21,0.49,0.74}
\definecolor{myred}{rgb}{0.82, 0.1, 0.26}
\begin{document}

\title{Mathematics of Continual Learning}

\author{Liangzu Peng \quad \quad Ren\'e Vidal,~\IEEEmembership{Fellow,~IEEE}
        % <-this % stops a space
\thanks{This document is prepared by the Special Issue Area Editor, Selin Aviyente  and by the Editor-in-Chief, T\"{u}lay~Adal{\i}.}% <-this % stops a space
\thanks{Selin Aviyente is with the Department of Electrical and Computer Engineering, Michigan State University.}% <-this % stops a space
\thanks{Special Issue Area Editor email: SPM-SI-AREA@LISTSERV.IEEE.ORG.}
\thanks{Manuscript received July 1, 2024.}}

% The paper headers
\markboth{IEEE Signal Processing Magazine,~Vol.~XX, No.~XX, June~2024}%
{Aviyente \MakeLowercase{\textit{et al.}}: Author Guidelines for Special Issue Articles of IEEE SPM}
%\IEEEpubid{0000--0000/00\$00.00~\copyright~2021 IEEE}
% Remember, if you use this you must call \IEEEpubidadjcol in the second
% column for its text to clear the IEEEpubid mark.

\maketitle

%\begin{abstract}
Continual learning is an emerging subject in machine learning that aims to solve multiple tasks presented sequentially to the learner without forgetting previously learned tasks. Recently, many deep learning based approaches have been proposed for continual learning, however the mathematical foundations behind existing continual learning methods remain underdeveloped. On the other hand, adaptive filtering is a classic subject in signal processing with a rich history of mathematically principled methods. However, its role in understanding the foundations of continual learning has been underappreciated. In this tutorial, we review the basic principles behind both continual learning and adaptive filtering, and present a comparative analysis that highlights multiple connections between them. These connections allow us to enhance the mathematical foundations of continual learning based on existing results for adaptive filtering, extend adaptive filtering insights using existing continual learning methods, and discuss a few research directions for continual learning suggested by the historical developments in adaptive filtering.
%\end{abstract}

\section{Introduction}

\subsection{Continual Learning and Deep Continual Learning}% \label{subsection:intro-CL}
Throughout their lives, humans need to solve multiple tasks---addressing them one at a time or, at times, simultaneously. The tasks could be learning to sit, crawl, stand, walk,  run, and bike. During the process, humans acquire knowledge and learn continually. They might forget what they previously learned, but if needed, they could regain it much more quickly than when they learned it the first time.

Machines, particularly machine learning models, need to solve multiple tasks, too. \textit{Multitask learning} aims to achieve so by joint training on all data of all tasks. However, in many cases we do not have access to all data or all tasks at once; rather, data often come in a streaming fashion, and a new task might arise only when the current one is sufficiently learned (e.g., humans learn to walk before running). Learning multiple tasks that are presented sequentially is the main goal of \textit{continual learning} (also known as \textit{lifelong learning}, or \textit{incremental learning}). A typical continual learning pipeline has two stages. In the first stage, a model is learned for task 1, which is typically achieved via standard machine learning approaches. In the second stage, the model is updated as new tasks arrive sequentially, ideally without forgetting previous tasks. This is where a continual learning method is often designed and tailored. 

\begin{example}[\textit{Continual Learning of Large Language Models}]\label{example:ChatGPT}
    ChatGPT was released on November 30, 2022. With a large amount of new data available after its initial release, how can we update ChatGPT from its previous version, as efficiently and economically as possible, without affecting its original performance? 
\end{example}

Given a new task, a continual learning method is usually designed with two goals in mind: (1) improve its performance on the new task; and (2) maintain its performance on previous tasks. This often requires striking the right balance, as illustrated by the following two extreme cases. \textit{Case 1}: If we keep the model unchanged, we attain goal (2), but we may fail at goal (1), e.g., if the new data has a different distribution from that of the previously seen data. \textit{Case 2}: If we train on the new data until convergence using standard algorithms such as \textit{stochastic gradient descent} (SGD), we could attain goal (1), but we may fail at goal (2), e.g., if the distribution of the data is different, even if we initialize at the previously learned model. In the latter case, the phenomenon called \textit{catastrophic forgetting} \cite{Nccloskey-1989} could occur: The new task is learned at the cost of significant performance degradation on the previous tasks. 

A direct method to overcome catastrophic forgetting is to perform multitask learning as a new task arrives, but it assumes full access to past data and can be both memory and computationally inefficient. \textbf{Replay-based methods} address these issues by storing only part of past data, used together with new data for training. Methods of this type are also called \textit{rehearsal}. In \textit{Psychology}, rehearsal is considered a \textit{control process} that maintains \textit{short-term memory} and is key for humans to grow \textit{long-term memory} \cite{Atkinson-1968}. But in our machine learning context, replaying past data might still forget when a new task begins to be trained \cite{Nccloskey-1989}. This phenomenon has recently been rediscovered and called the \textit{stability gap} \cite{Lange-ICLR2023}.

While rehearsal often puts equal weights on the losses of different tasks, \textbf{constrained optimization methods} emphasize the past by solving the current task under the constraints that past tasks are solved reasonably well  \cite{Lopez-NeurIPS2017} or to global optimality \cite{Peng-ICML2023}. For example, the \textit{Ideal Continual Learner} (ICL) of \cite{Peng-ICML2023} prevents forgetting by design, and can be implemented using a primal-dual constrained learning framework \cite{Chamon-TIT2022}. ICL can also be implemented by \textbf{gradient projection methods}, which modify the standard training of the current task by projecting the gradient onto a carefully chosen subspace \cite{Farajtabar-AISTATS2020,Saha-ICLR2021,Wang-CVPR2021,Min-CDC2022}. %that prevents forgetting \cite{Farajtabar-AISTATS2020,Saha-ICLR2021,Wang-CVPR2021,Min-CDC2022}.

While constrained optimization and gradient projection use extra memory to work with the constraints and projection matrices, respectively, regularization- and expansion-based methods entail almost no extra storage (except the models themselves). \textbf{Regularization-based methods} optimize the current task with some regularization term that promotes proximity to a previously learned model; this philosophy recently finds its signature use in \textit{reinforcement learning with human feedback} \cite{Ziegler-arXiv2019}, where the current task is to align the model behavior with human values. \textbf{Expansion-based methods} add new trainable parameters to learn new tasks; one of the earliest such methods is \cite{Ash-CS1989}, where new neurons are continually added to the deep network as the training proceeds---until the given task is solved to a desired accuracy.

The recent rise of \textit{foundational models for vision and language} %\textit{large language/vision models} very recently
has steered research on deep continual learning towards at least two directions. One is where, in addition to overcoming catastrophic forgetting, the emphasis is on continual learning of larger and larger models \textit{efficiently} (\cref{example:ChatGPT}). %\cref{example:ChatGPT} illustrates a use case of continual learning for pre-training. % 
The other direction involves attaching trainable parameters to a large, pre-trained model, enabling continual learning of downstream tasks; this approach, by definition, belongs to the category of expansion-based methods. While the performance continues to improve along either of the directions, the mathematical reasons behind the improvements are only becoming more elusive as models get larger and more complicated.

% \begin{figure}
% 	\centering
% 	\includegraphics[width=0.8\linewidth]{}
% 	\caption{A goal of the paper is to tell the connections between adaptive filtering and continual learning. }
% 	\label{fig:history}
% \end{figure}

\subsection{History of Adaptive Filtering and Our Conjectured Connection with Continual Learning }%\label{subsection:history-CL}
% \textit{Deep} continual learning is arguably a young subject that considers training deep neural networks in a sequential fashion; it acquires attention and popularity as deep learning dominates the scene in the 2010s. 
To the best of our knowledge, the 1989 paper \cite{Nccloskey-1989} is commonly cited as one of the earliest works in continual learning. This work shows that (shallow) \textit{connectionist networks}, trained by backpropagation, exhibit catastrophic forgetting in a sequential learning setting. However, the rudimentary idea of continual learning dates back to at least 1960 \cite{LMS-1960}, when Widrow \& Hoff proposed a single-layer neural network called \textit{Adaline} (\textit{Adaptive Linear Element}), trained via \textit{Least Mean Squares} (LMS). The design of LMS was guided by the \textit{minimal disturbance principle}, that is (quoting  \cite{Widrow-IEEE1990}), ``\textit{adapt to reduce the output error for the current training pattern, with minimal disturbance to responses already learned}'' or ``\textit{... inject new information into a network in a manner that disturbs stored information to the smallest extent possible}''. How is this not an early description of \textit{continual learning without forgetting}?

In \cite{Widrow-IEEE1990} it was also recollected that, after some unsuccessful attempts to devise learning algorithms for multi-layer neural networks, Widrow shifted his research focus to \textit{adaptive filtering}, a subject that has since become a classic in signal processing. LMS has found so many applications there that it is often regarded as \textit{the birthmark of modern adaptive filtering theory} (quoting \cite{Sayed-2008book}). These adaptive filtering applications are typically concerned with an online linear regression setting, where consecutive regressor vectors are highly correlated (i.e., the current regressor is a \textit{shifted} version of the previous regressor with an entry removed and a new entry added). This differs from continual learning, as the latter is typically concerned with nonlinear models (in particular, deep networks) and does not exhibit such data correlation.

Despite these differences, we argue that 
adaptive filtering has deep yet often overlooked connections to continual learning---connections that extend far beyond their shared origin (arguably, LMS). The key observations that allow us to connect them are three-fold. First, there has been a growing body of work exploring theoretical aspects of continual learning under linear models; we find some of them extend the classic LMS insights. Second, representative adaptive filtering methods, such as LMS, \textit{Affine Projection Algorithm} (APA), \textit{Recursive Least-Squares} (RLS), and  \textit{Kalman Filter} (KF), are trivially extended for the case where the aforementioned data correlation does not exist. Third, these methods can furthermore be extended for nonlinear models or deep networks in three practical ways: (1) linearize the nonlinear models, (2) apply these methods in a layer-wise fashion, or (3) train a linear classifier continually (using output features of some pre-trained model); we find several such extensions in the continual learning literature. Next, we describe the forenamed adaptive filtering methods (LMS, APA, RLS, KF) and the corresponding continual learning methods that share the same or similar philosophy (see \cref{table:algorithm-connection}).

% In this tutorial we argue that adaptive filtering has deep yet often overlooked connections to continual learning---connections that extend far beyond their shared origin (arguably, LMS). Indeed, the two subjects have developed methodologies based on the same or similar principles, and the aforementioned methods (LMS, APA, RLS, KF) all find their reflections in continual learning as we overview next (see \cref{table:algorithm-connection}). 

\textbf{LMS} was originally proposed for training Adaline \cite{LMS-1960}. It processes one sample at a time and is often interpreted as an online SGD-type method for linear regression. A basic result is this \cite[Section 5.5.1]{Theodoridis-2020}: If there is a \textit{true linear model} that perfectly explains all samples, and if all samples are \textit{statistically the same}, then LMS converges to the true model with a proper stepsize. Alternatively, we equate \textit{time} to \textit{task}, and interpret LMS as a continual learning method for a sequence of linear regression tasks; namely, LMS processes tasks sequentially, with one sample per task. This interpretation brings the basic LMS guarantee into contact with the recent work in continual learning and optimization \cite{Evron-COLT2022,Peng-ICML2024}, whose results imply that LMS variants converge to the true model if all linear regression tasks are \textit{cyclically repeated}. These findings are not at odds with the empirical observation that SGD forgets catastrophically in deep continual learning, as in practice the tasks are neither statistically the same nor cyclically repeated. The requirement of revisiting (statistically) the same tasks to find the true model is a natural consequence of being \textit{memoryless}, i.e., LMS stores no extra information except the previous model (and stepsize).

% {\color{blue}
% \textbf{LMS} is an online SGD-type method that was originally designed for solving a single linear regression task. Therefore, LMS corresponds to a simple continual learning setting in which all tasks are identical (a shared linear regression task) and at each step we receive one sample per task. In this setting, one can show that under mild assumptions LMS converges to the shared task. Recent work in continual learning considers the more general case in which, at each step, we have a different linear regression task, but all tasks share a common solution; this assumption holds in the \textit{overparameterized} regime in which the number of model parameters outnumbers the number of samples (tasks). 
% Under this assumption, recent work on continual learning and optimization \cite{Evron-COLT2022,Peng-ICML2024} shows that LMS variants converge to the common solution if all linear regression tasks are \textit{cyclically repeated}. These findings are not at odds with the empirical observation that SGD forgets catastrophically in deep continual learning, as in practice the tasks are neither statistically the same nor cyclically repeated. The requirement of revisiting (statistically) the same tasks to prevent forgetting and to jointly solving all tasks is a natural consequence of being \textit{memoryless}, i.e., LMS stores no extra information except the previous model (and stepsize).}

\textbf{APA} was originally proposed for \textit{learning identification} \cite{Hinamoto-1975} and adaptive filtering \cite{Ozeki-1984} and was intended to attain faster convergence (e.g., than LMS). For the current task (or at the current time), APA receives a new sample; it projects the previous model onto the constraint set defined by this sample and several past samples. Since APA and its variants reuse past data, they are referred to as the \textit{data-reusing family} in adaptive filtering, which corresponds to replay-based methods in continual learning. We will show an APA variant is connected to ICL \cite{Peng-ICML2023} and gradient projection methods \cite{Saha-ICLR2021,Wang-CVPR2021,Min-CDC2022}, as they all find the same optimal (minimum-norm) solution to linear regression tasks (a phenomenon that might be called \textit{implicit bias}). Their optimality illustrates the suitable use of extra memory that prevents forgetting. 

% \textbf{APA} operates under the same setting as LMS but leverages extra memory. For the current task it projects the previous model onto the constraint set defined by the current and past data.
% Since APA and its variants reuse past data, they are often referred to as the \textit{data-reusing family} in adaptive filtering, which corresponds to replay-based methods in continual learning. Via geometric arguments, we will show an APA variant is connected to ICL \cite{Peng-ICML2023} and gradient projection methods \cite{Saha-ICLR2021,Wang-CVPR2021,Min-CDC2022}, as they all find the same optimal (minimum-norm) solution to linear regression tasks (a phenomenon that might be called \textit{implicit bias}). Their optimality illustrates the suitable use of extra memory that prevents forgetting. 

\begin{table}[]
    \centering
    \caption{Adaptive filtering methods and their counterparts that we find in continual learning. }
    \label{table:algorithm-connection}
    \begin{tabular}{ll}
        \toprule
          Adaptive Filtering & Continual Learning  \\
         \midrule
         \multirow{2}*{\quad \textit{Least Mean Squares} (LMS) \cite{LMS-1960}}
           & \quad Connection to SGD for continual linear regression \cite{Evron-COLT2022,Peng-ICML2024} \\ 
           & \quad \textit{Perspective}: LMS is a constrained optimization method \\
          \midrule
          \multirow{4}*{\quad \textit{Affine Projection Algorithm} (APA) \cite{Hinamoto-1975,Ozeki-1984}}
            & \quad Connection to Ideal Continual Learner (ICL) \cite{Peng-ICML2023} \\ 
            & \quad Connection to gradient projection methods  \cite{Farajtabar-AISTATS2020,Saha-ICLR2021,Wang-CVPR2021,Min-CDC2022} \\
           & \quad \textit{Perspective}: APA is a constrained optimization method \\ 
           & \quad \textit{Perspective}: APA is a replay-based method  \\
          \midrule
          \multirow{4}*{\quad \textit{Recursive Least-Squares} (RLS)} & \quad Layer-wise RLS for deep continual learning \cite{Shah-1992,Zeng-NMI2019} \\ 
          & \quad RLS for dynamically expanding networks \cite{Azimi-TNN1993,Zhuang-NeurIPS2022}  \\ 
          & \quad \textit{Perspective}: RLS is a regularization-based method \\ %\cite{Kirkpatrick-2017} \\
          & \quad \textit{Perspective}: RLS forgets the past via the \textit{forgetting factor} \\
          \midrule
          \quad  \multirow{3}*{\textit{Kalman Filter} (KF) \cite{Kalman-1960}} & \quad KF for continual learning of linear classifiers \cite{Titsias-ICLR2024} \\ 
          % & \quad \textit{Perspective}: KF is an  expansion-based method \\ 
          & \quad \textit{Perspective}: Linear Gaussian models introduce a task relationship \\ 
          & \quad \textit{Perspective}: KF facilitates \textit{positive backward transfer} \\ 
         \bottomrule
    \end{tabular}
\end{table}

\textbf{RLS} dates back to the work of Gauss (see, e.g., the historical accounts in \cite{Sayed-2008book}). It continually minimizes a weighted least-squares loss (with regularization), where the weights for previous samples decay exponentially via a \textit{forgetting factor}. In other words, RLS minimizes a weighted average of the losses of all seen tasks (in the continual linear regression setting with one sample per task); in comparison, LMS only considers the current loss, and APA puts past losses in the constraints (corresponding to infinitely large weights). Decades ago, RLS was applied to train neural networks in a layer-wise fashion;  for a survey, see, e.g., \cite{Shah-1992}. This was recently repurposed for deep continual learning to reduce forgetting \cite{Zeng-NMI2019}. The RLS idea was also leveraged to train dynamically expanding networks in 1993 \cite{Azimi-TNN1993} and 2022 \cite{Zhuang-NeurIPS2022}. The former does so for every layer, in a layer-wise manner, while the latter does so for the last-layer classifier and goes by the name \textit{analytic continual learning}. However, by its least-squares nature, RLS might give a poor estimate for every task, if the solutions to different tasks are far from each other.

%\textbf{RLS} does not require all tasks to share a solution; instead, it computes an \textit{average} among the solutions to different linear regression tasks by continually minimizing a regularized least-squares objective which changes as more tasks are given. At the heart of RLS lies an algebraic formula known as the \textit{Woodbury matrix identity}, which enables efficiently updating the inverse of the (regularized) data covariance. RLS often comes with a hyper-parameter, called the \textit{forgetting factor}, which gradually downweighs---and eventually forgets---the past samples. RLS is thus a regularization-based method that controls how fast it forgets. Decades ago, RLS was applied to train neural networks in a layer-wise fashion; for a survey, see, e.g., \cite{Shah-1992}. This was recently repurposed for deep continual learning in order to reduce forgetting \cite{Zeng-NMI2019}. The RLS idea was also leveraged to train dynamically expanding networks in 1993 \cite{Azimi-TNN1993} and 2022 \cite{Zhuang-NeurIPS2022}. The former does so for every layer, in a layer-wise manner, while the latter does so for the last-layer classifier and goes by the name \textit{analytic continual learning}. However, by its least-squares nature, RLS might give a poor estimate for every task, if the solutions to different tasks are far from each other.

\textbf{KF} comes with a \textit{linear Gaussian model}, which posits that the current \textit{state} is a linear function of the previous \textit{state} (\textit{state transition}) and each measurement is a linear function of the corresponding state (up to noise). In short, KF is a method that tracks the \textit{most recent} state based on \textit{all} measurements. We note that the measurement-state relation defines a (noisy) linear regression task where the state is the unknowns, and we interpret state transition as \textit{task transition}, which defines a \textit{linear task relationship}; namely, the solution to the current task is a linear function of the solution to the previous task (up to noise). Different from the setting of RLS, the linear Gaussian model allows multiple samples per task, and the linear task relationship allows the solutions of different tasks to be arbitrarily far (as determined by the linear map that defines this relationship); we will see KF generalizes RLS. Furthermore, we will show combining KF with the so-called \textit{Rauch-Tung-Striebel smoother} (RTS) provably \textit{improves the performance on previous tasks when learning the current one}, a situation called \textit{positive backward transfer} in continual learning. The KF idea was used to train neural networks in 1988 (see survey \cite{Shah-1992}), and to train linear classifiers with pre-trained models in 2024 \cite{Titsias-ICLR2024}. While KF requires the task relationship to be known, it is possible to learn this relationship in conjunction with model parameters (e.g., \cite{Titsias-ICLR2024}).

Building upon the basics of linear algebra, probability, optimization, machine learning, we deliver in what follows a mathematical review and comparative analysis of the methods in \cref{table:algorithm-connection}. Since the tutorial uniquely illuminates the connections of adaptive filtering and continual learning, we refer the reader to books \cite{Sayed-2008book,Theodoridis-2020} and surveys \cite{Van-NMI2022,Zhou-TPAMI2024,Wang-TPAMI2024} for a wider coverage of the respective subjects.

\section{Least Mean Squares (LMS)}
\myparagraph{Notations and Setup}  Suppose we have an initial model $\theta^0\in \bbR^d$, be it pre-trained, randomly initialized, or equal to $0$. At task $t\geq 1$, we are given a single input-output data pair $(x_t,y_t)\in \bbR^{d}\times \bbR$, and our goal is  to update $\theta^{t-1}$ into a new model $\theta^t$. In this case, the total number of tasks $T$ is equal to the total number of data pairs. For conciseness, we stack the data together and write $X_{:t}:=[x_1,\dots,x_t]\in \bbR^{d\times t}, y_{:t}:=[y_1,\dots,y_t]^\top$. For simplicity, we make the following assumption ($\| \cdot \|_2$ denotes the $\ell_2$ norm of a vector):
\begin{assumption}\label{assumption:linear-unit-norm}
    % We have $\rank(X_{:T})=T\leq d$. 
    For all $t=1,\dots,T$, we have $\| x_t\|_2 = 1$ and $\rank(X_{:t})=t\leq d$. Denote by $\theta^*$ the \textit{minimum-norm solution} to the equations $y_{:T}= X_{:T}^\top \theta$ in variable $\theta$, that is $\theta^{*}=X_{:T} (X_{:T}^\top X_{:T})^{-1} y_{:T}$. 
%    there is some $\theta^*\in \bbR^d$ such that $y_{:T}= X_{:T}^\top \theta^*$. Furthermore, $\theta^*$ is the minimum-norm solution with $\theta^{*}=X_{:T} (X_{:T}^\top X_{:T})^{-1} y_{:T}$. 
\end{assumption} 
\cref{assumption:linear-unit-norm} considers the overparameterized case $T \leq d$, where the equations $y_{:T}= X_{:T}^\top \theta$ always admit a solution. These equations define a \textit{linear model}, as  $y_{:T}$ is a \textit{linear} function of the \textit{model parameters} $\theta$. The (minimum-norm) solution to these equations, $\theta^*$, is referred to as the \textit{ground-truth} or \textit{true model} (\textit{parameters}). We will use the \textit{solution of all tasks} interchangeably with the \textit{true model}. 

\myparagraph{Metrics} Under the linear model assumption, it makes sense to define the error $\epsilon_{ij}$ as follows: 
\begin{align}\label{eq:err-mat}
	\epsilon_{ij}:=y_i - x_i^\top \theta^j. \quad \forall i,j=1,\dots,T.
\end{align}
This gives  a $T\times T$ matrix of error terms, and each $\epsilon_{ij}$ measures the error of the model learned at task $j$ on the data of task $i$. The matrix is visualized in \cref{fig:regret-finetune-cl}, where we further discuss two other paradigms:
\begin{itemize}
	\item In \textit{online optimization} (or \textit{online learning}) \cite{Orabona-arXiv2019}, the metric of interest is the \textit{prediction error} or \textit{a priori error} $\epsilon_t:=y_t - x_t^\top  \theta^{t-1}$, where model $\theta^{t-1}$ was obtained \textit{without} knowing $(x_t,y_t)$. In this line of research, one often aims to bound the \textit{regret} $\sum_{i=1}^{t} \epsilon_t^2$ as $t$ grows.
	\item In \textit{finetuning} (or \textit{transfer learning}), one updates a pre-trained model for a new task. The new model is expected to have low errors on the new task, but not necessarily on the past pre-training data.	
%	and is less concerned with how the new model performs on the past pre-training data.
\end{itemize}

%\begin{align}\label{eq:predict-err}
%	\epsilon_t:=y_t - x_t^\top  \theta^{t-1}.
%\end{align}
%This is called the \textit{a priori error} or \textit{prediction error} as model $\theta^{t-1}$ was obtained \textit{without} knowing $(x_t,y_t)$.  % Such an update should consider two goals: ensure $\theta^t$ is close to the previous model $\theta^{t-1}$ (preserve the past knowledge); ensure the \textit{a posterior error} $y_t - x_t^\top  \theta^t$ is smaller than $\epsilon_t$ in magnitude (learning from the present task). 

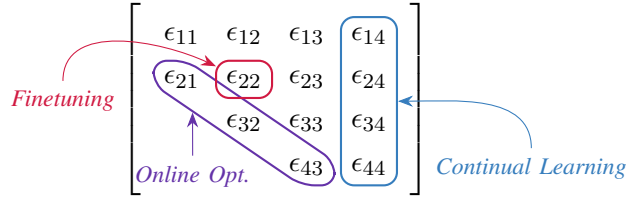
\begin{figure}
	\centering
	\begin{tikzpicture}[
		every matrix/.style={
			matrix of math nodes,
			column sep =1mm,
			row sep =1mm,
			left delimiter={[},
			right delimiter={]},
		}
		]
		\matrix (m) {
			\epsilon_{11} & \epsilon_{12} & \epsilon_{13} & \epsilon_{14} \\
			\epsilon_{21} & \epsilon_{22} & \epsilon_{23} & \epsilon_{24} \\
			& \epsilon_{32} & \epsilon_{33} & \epsilon_{34} \\
			&  & \epsilon_{43} & \epsilon_{44} \\
		};
		
		\draw[thick,rounded corners=5pt, mypurple]
		([yshift=-2pt]m-2-1.west) |- % Top-left corner of 6
		([xshift=2pt]m-2-1.north) -- % Top-right corner of 8
		([yshift=2pt]m-4-3.east) |- % Bottom-right corner of 12
		([xshift=-2pt]m-4-3.south) -- % Bottom-left corner of 10
		cycle;
		%\node[fit=(m-1-4)(m-4-4), draw, inner sep=0pt] {};
		% \node[fit=(m-4-4)(m-4-4), draw, inner sep=0pt] {};
		\node[fit=(m-1-4)(m-4-4), draw, inner sep=0pt,thick, cvprblue,rounded corners=5pt] (moo) {};
		%\node[fit=(m-1-1)(m-4-1), draw, inner sep=0pt, rounded corners=6pt, dashed] {};
		\node[fit=(m-2-2)(m-2-2), draw, inner sep=0pt,thick, myred, rounded corners=5pt] {};
		
		\node[anchor=east] at (-2, 0) (ft) {\footnotesize{\textit{\textcolor{myred}{Finetuning}}}};
		\draw[-latex, myred, -{Stealth[length=2mm]}] (ft) to[out=90,in=155] (m-2-2);
		
		\node[anchor=east] at (-0.2, -1) (oo) {\footnotesize{\textit{\textcolor{mypurple}{Online Opt.}}}};
		\draw[-latex, mypurple, -{Stealth[length=2mm]}] (oo) to[out=90,in=270] (-1.1, -0.15);

		\node[anchor=west] at (2, -0.9) (cl) {\footnotesize{\textit{\textcolor{cvprblue}{Continual Learning}}}};
		\draw[-latex, cvprblue, -{Stealth[length=2mm]}] (cl) to[out=90,in=0] (moo);
	\end{tikzpicture}
	\caption{Visualization of a $4\times 4$ error matrix where $\epsilon_{ij}$ is the error of model $\theta^j$ on task $i$, as defined in \cref{eq:err-mat}. \textit{Online opitmization} considers errors $\epsilon_{ij}$ with $j=i-1$. \textit{Finetuning} considers errors $\epsilon_{ii}$ on the current task. Continual learning considers errors on both the current and previous tasks. \label{fig:regret-finetune-cl} }
\end{figure}

In continual learning, we want to measure the error of the current model $\theta^t$ on both the current and past data. A commonly used metric, known as the \textit{forgetting}, is $\cF_t:= \frac{1}{t-1} \sum_{i=1}^{t-1} \left(\epsilon_{it}^2 - \epsilon_{ii}^2 \right)$. Each summand $\epsilon_{it}^2 - \epsilon_{ii}^2$ measures the error of model $\theta^t$ on a previous task $i$, in comparison to model $\theta^i$. Since model $\theta^i$ is learned immediately after task $i$ is given, we expect $\theta^i$ to perform reasonably well (if not perfectly) on task $i$. Thus, we expect $\epsilon_{ii}^2$ to be close to $0$ and $\cF_t$ close to the \textit{mean squared error} $\cE_t$ (MSE), defined as
\begin{align*}%\label{eq:MSE-function}
	\cE_t:=\frac{1}{t} \sum_{i=1}^{t}   \epsilon_{it}^2 =\frac{1}{t} \sum_{i=1}^{t}  \left(y_i - x_i^\top  \theta^t \right)^2.
\end{align*}
In fact, since $\cF_t\leq \frac{t}{t-1}\cdot \cE_t$, we will derive upper bounds for the (expected) MSE $\cE_t$ in the sequel and understand that these bounds (multiplied by $\frac{t}{t-1}$) will hold for the forgetting $\cF_t$ as well. % use $\cE_t$ as our metric to evaluate continual learning methods.

\myparagraph{LMS} Recall $\epsilon_t:=y_t - x_t^\top  \theta^{t-1}$. The LMS method \cite{LMS-1960} amounts to solving (cf. \cite[Section 10.4]{Sayed-2008book})
%updates $\theta^{t-1}$ into $\theta^t$ such that the error on task $t$ is reduced (in magnitude), while the new model $\theta^t$ should be as close as possible to $\theta^{t-1}$. Concretely, it solves
\begin{align}\label{eq:LMS}
	\theta^t \in \argmin_{\theta \in \bbR^d} \| \theta - \theta^{t-1} \|_2^2 \quad \quad \textnormal{s.t.} \quad \quad y_t - x_t^\top  \theta = \left(1-\gamma \right) \epsilon_t, \tag{LMS}
\end{align}
where $\gamma$ is a hyper-parameter that satisfies $1-\gamma \in (-1,1)$. With this $\gamma$, the \textit{a posterior error} $y_t - x_t^\top  \theta^t $ is always smaller than the a priori error $\epsilon_t$ in magnitude, enabling learning from the present task $t$. The objective  $\| \theta - \theta^{t-1} \|_2^2$ is minimized in an effort to preserve the past knowledge (implicitly stored in $\theta^{t-1}$). Balancing the past and present in this way concretizes the aforementioned minimal disturbance principle. % (see also \cite[Section 10.4]{Sayed-2008book})
% Note that \ref{eq:LMS} is a \textit{memoryless} method: It only assumes access to the previous model $\theta^{t-1}$ and current data $(x_t,y_t)$ and does not rely on any other information to update $\theta^{t-1}$ into $\theta^t$.
% \begin{remark}[\ref{eq:LMS} Versus Continual Learning]\label{remark:LMS}
% 	Via its constraint, \ref{eq:LMS} prioritizes learning from $(x_t,y_t)$ over preserving past knowledge. Constrained optimization methods in continual learning solve the present task with constraints on past tasks  \cite{Lopez-NeurIPS2017,Peng-ICML2023}. In comparison, \ref{eq:LMS} reverses the roles of the past and present.
% \end{remark}

Geometrically, \ref{eq:LMS} projects $\theta^{t-1}$ onto the affine hyperplane defined by its linear constraint. \ref{eq:LMS} is always feasible and it is an exercise to show \ref{eq:LMS} admits the following closed-form solution:
\begin{align}\label{eq:LMS-SGD}
	\theta^t =\theta^{t-1} - \gamma \left(x_t^\top  \theta^{t-1} - y_t \right) x_t.
\end{align}
This might be seen as an online SGD update with constant stepsize $\gamma$ for the objective $(x_t^\top \theta- y_t)^2/2$. We next derive upper bounds on the MSE $\cE_t$ of the \ref{eq:LMS} iterates $\{\theta^t\}_{t= 0}^T$. %\cref{eq:LMS-SGD} entails.

% Various performance guarantees for such an update exist, derived via algebraic, probabilistic, or optimization arguments. Among them, 

%Among many existing theorems, we state the following result simplified from Theorem IV.1 of \cite{Cesa-TNN1996}:
%\begin{theorem}\label{theorem:LMS}
%	Initialize \ref{eq:LMS} at $\theta^0=0$ and run it with stepsize $\gamma\in(0,2)$ for $t=1,\dots,T$. Recall $\epsilon_t := y_t - x_t^\top  \theta^{t-1}$ \cref{eq:predict-err}. Under \cref{assumption:linear-unit-norm}, the models $(\theta^0, \dots, \theta^{T-1})$ produced by \ref{eq:LMS} satisfy:
%	\begin{align}\label{eq:LMS-bound}
%		\frac{1}{T}\sum_{t=1}^{T} \epsilon_t^2 \leq \frac{2\cdot \|\theta^* \|_2^2}{T(2\gamma - \gamma^2)}. 
%	\end{align}
%\end{theorem}
%The term $\frac{1}{T}\sum_{t=1}^{T} \epsilon_t^2$ might be called the (average) \textit{regret} in our context. In the upper bound of \cref{eq:LMS-bound}, we consider $\theta^*$ and $\gamma$ to be constant, so \cref{theorem:LMS} shows that the regret decays at a rate $O(1/T)$. 

% Also understand that it is difficult to obtain an informative upper bound on $\cE_t$ \textit{without extra assumptions}: $\cE_t$ is arbitrarily far from $0$ in the worst case for \ref{eq:LMS} with stepsize $\gamma=1$ \cite[Theorem 7]{Evron-COLT2022}. Hence, we will consider assumptions that allow us to bound  $\cE_t$ in expectation or deterministically.

\myparagraph{Expected MSE} Since $y_i=x_i^\top \theta^*$ and $\| x_i  \|_2=1$, using the \textit{Cauchy–Schwarz} inequality we have
\begin{equation}%\label{eq:F-ub1}
		\cE_t = \frac{1}{t} \sum_{i=1}^{t}  \left(x_i^\top  \theta^* - x_i^\top  \theta^t \right)^2  
		\leq  \frac{1}{t}  \sum_{i=1}^{t} \| x_i  \|_2^2 \cdot  \| \theta^t -  \theta^*  \|_2^2 
		=  \| \theta^t -  \theta^*  \|_2^2.
\end{equation}
We then need to bound $\| \theta^t -  \theta^*  \|_2^2$. To do so, let us derive a recurrence relation between $\theta^t -  \theta^*$ and $\theta^{t-1} -  \theta^*$. Since $y_t=x_t^\top \theta^*$, rewrite \cref{eq:LMS-SGD} as
\begin{align*}
	\theta^t = \theta^{t-1} - \gamma \left(x_t^\top  \theta^{t-1} - x_t^\top  \theta^*  \right) x_t.
\end{align*}
Subtract $\theta^*$ from both sides of the above equality, and we obtain ($I_d$ denotes the $d\times d$ identity matrix)
\begin{align}\label{eq:kaczmarz-update2}
	\theta^t - \theta^* = \left(I_d- \gamma \cdot x_t x_t^\top\right) \left( \theta^{t-1} -  \theta^*  \right).
\end{align}
We are ready to bound the expected MSE loss $\bbE[\cE_t]$:
\begin{theorem}[Section 5.5.1 \cite{Theodoridis-2020}]\label{theorem:kaczmarz}
    Let $\{\theta^t\}_{t=0}^T$ be the iterates of \ref{eq:LMS} with $\theta^0=0$, $\gamma\in(0,2)$. Assume $x_t$'s are independent and identically distributed, drawn according to some distribution on the sphere $\{x\in \bbR^d: \| x\|_2 = 1\}$. Write $\Sigma_x:= \bbE[x_tx_t^\top]$, and assume $\Sigma_x$ is positive definite with minimum eigenvalue $\lambda_{\textnormal{min}}(\Sigma_x)$. Under 
    \cref{assumption:linear-unit-norm}, we have $\gamma(2-\gamma)\lambda_{\textnormal{min}}(\Sigma_x)\in(0,1]$ and
	\begin{align*}%\label{eq:expected-F-bound}
		\bbE[\cE_t] \leq \bbE\| \theta^t -  \theta^*  \|_2^2 \leq \Big(1 - \gamma(2-\gamma)\lambda_{\textnormal{min}}(\Sigma_x) \Big)^t \cdot \| \theta^*  \|_2^2.
	\end{align*}
\end{theorem}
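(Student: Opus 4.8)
The first inequality $\bbE[\cE_t] \leq \bbE\|\theta^t - \theta^*\|_2^2$ is free: it follows by taking expectations in the Cauchy--Schwarz estimate $\cE_t \leq \|\theta^t - \theta^*\|_2^2$ derived just above the theorem. So the plan is to concentrate entirely on bounding $\bbE\|\theta^t - \theta^*\|_2^2$, and the starting point is the exact recurrence \cref{eq:kaczmarz-update2}. Writing $v_t := \theta^t - \theta^*$, I would expand $\|v_t\|_2^2 = v_{t-1}^\top (I_d - \gamma x_t x_t^\top)^2 v_{t-1}$ and use the key algebraic fact enabled by the unit-norm assumption: since $\|x_t\|_2 = 1$, the rank-one projector $x_t x_t^\top$ is idempotent, $(x_t x_t^\top)^2 = x_t (x_t^\top x_t) x_t^\top = x_t x_t^\top$, so the square collapses to $(I_d - \gamma x_t x_t^\top)^2 = I_d - \gamma(2-\gamma)\, x_t x_t^\top$.

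Next I would take expectations while respecting the dependence structure. Because $\theta^{t-1}$ is built from $x_1, \dots, x_{t-1}$ only (and $y_i = x_i^\top \theta^*$ makes each label a function of its own regressor), the vector $v_{t-1}$ is independent of $x_t$; conditioning on $v_{t-1}$ and using $\bbE[x_t x_t^\top] = \Sigma_x$ gives $\bbE[\|v_t\|_2^2 \mid v_{t-1}] = v_{t-1}^\top (I_d - \gamma(2-\gamma)\Sigma_x) v_{t-1}$. The eigenvalues of $I_d - \gamma(2-\gamma)\Sigma_x$ are $1 - \gamma(2-\gamma)\lambda_i$ as $\lambda_i$ ranges over the spectrum of $\Sigma_x$, so its largest eigenvalue is attained at $\lambda_{\textnormal{min}}(\Sigma_x)$, yielding $\bbE[\|v_t\|_2^2 \mid v_{t-1}] \leq (1 - \gamma(2-\gamma)\lambda_{\textnormal{min}}(\Sigma_x)) \|v_{t-1}\|_2^2$. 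A further expectation via the tower property turns this into the one-step contraction $\bbE\|v_t\|_2^2 \leq (1 - \gamma(2-\gamma)\lambda_{\textnormal{min}}(\Sigma_x)) \, \bbE\|v_{t-1}\|_2^2$.

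To close the argument I would unroll this recursion from step $t$ to step $0$ and plug in the initialization $v_0 = \theta^0 - \theta^* = -\theta^*$ (since $\theta^0 = 0$), producing $\bbE\|v_t\|_2^2 \leq (1 - \gamma(2-\gamma)\lambda_{\textnormal{min}}(\Sigma_x))^t \|\theta^*\|_2^2$, exactly the claimed bound. I would then separately verify the asserted range $\gamma(2-\gamma)\lambda_{\textnormal{min}}(\Sigma_x) \in (0,1]$, which guarantees the contraction factor lies in $[0,1)$: the identity $\gamma(2-\gamma) = 1 - (1-\gamma)^2$ shows $\gamma(2-\gamma) \in (0,1]$ for $\gamma \in (0,2)$, while positive definiteness gives $\lambda_{\textnormal{min}}(\Sigma_x) > 0$ and the normalization $\Trace(\Sigma_x) = \bbE\|x_t\|_2^2 = 1$ forces every eigenvalue, hence the smallest, to be at most $1$.

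I do not expect a genuine obstacle here; the proof is a short linear-algebra computation wrapped around a single conditional-expectation step. The one point that demands care is the independence used to pull $\Sigma_x$ past $v_{t-1}$: one must confirm that $\theta^{t-1}$ carries no information about $x_t$, which holds because the data for past tasks and the constant stepsize $\gamma$ are all determined before $x_t$ is drawn. The idempotency $(x_t x_t^\top)^2 = x_t x_t^\top$ is what makes the resulting rate clean and dimension-free; without the unit-norm normalization the square would retain a factor $\|x_t\|_2^2$, and the analysis would instead require control of the fourth moment of $x_t$.
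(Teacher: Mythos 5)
Your proof is correct and takes essentially the same route as the paper's: collapse $(I_d-\gamma x_t x_t^\top)^2 = I_d - \gamma(2-\gamma)\, x_t x_t^\top$ via idempotency of the unit-norm projector, pass to expectation using the independence of $x_t$ from $\theta^{t-1}$ and $\theta^*$, bound the quadratic form by the top eigenvalue $1-\gamma(2-\gamma)\lambda_{\textnormal{min}}(\Sigma_x)$ of $I_d - \gamma(2-\gamma)\Sigma_x$, and unroll from $v_0=-\theta^*$. Your explicit verification that $\trace(\Sigma_x)=\bbE\|x_t\|_2^2=1$ forces all eigenvalues of $\Sigma_x$ into $(0,1]$ is a welcome detail the paper asserts without justification, but it does not change the argument.
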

\begin{proof}
	We have shown the first inequality. Note that $\gamma(2-\gamma)$ and all eigenvalues of $\Sigma_x$  lie in $(0,1]$, and so do all eigenvalues of $\gamma(2-\gamma)\Sigma_x$. Thus $I_d- \gamma(2-\gamma) \cdot \Sigma_x$ is positive semi-definite. % Observe that $x_t$ is independent of $\theta^{t-1}$. 
	From \cref{eq:kaczmarz-update2} we get
	\begin{align*}
		\bbE\| \theta^t -  \theta^*  \|_2^2 % &= \bbE \left[\left( \theta^{t-1} - \theta^*  \right)^\top \Big(I_d- 2\gamma \cdot x_t x_t^\top + \gamma^2 x_t x_t^\top \Big) \left( \theta^{t-1} -  \theta^*  \right) \right]  \\ 
		&=\bbE \left[ (\theta^{t-1} - \theta^*)^\top \big(I_d- \gamma(2-\gamma) \cdot \Sigma_x\big) (\theta^{t-1} -  \theta^*)  \right] \\ % \Big(I_d- \gamma(2-\gamma) \cdot \Sigma_x\Big) \bbE \left[  \theta^{t-1} -  \theta^*  \right] \\ 
		&\leq \Big(1 - \gamma(2-\gamma)\lambda_{\textnormal{min}}(\Sigma_x) \Big) \cdot  \bbE\| \theta^{t-1} -  \theta^*  \|_2^2,  %\qedhere %\right\|_2^2. 
	\end{align*}
        where the first equality holds as $x_t$ is independent of $\theta_{t-1},\theta^*$. The proof is finished by unrolling.
	% The proof is finished via Jensen's inequality ($\| \bbE[ \theta^{t-1} -  \theta^* ]  \|_2^2 \leq \bbE\| \theta^{t-1} -  \theta^*  \|_2^2$) and unrolling.
\end{proof}
\cref{theorem:kaczmarz} shows that $\theta^t$ converges to the true model $\theta^*$ in expectation, exponentially, by the \textit{contraction factor} $1 - \gamma(2-\gamma)\lambda_{\textnormal{min}}(\Sigma_x)$. The expected MSE loss $\bbE[\cE_t]$ decays at a similar rate. 
%A main extra assumption in \cref{theorem:kaczmarz} is that  $x_t$'s are independent and identically distributed. Here let us appreciate why it is useful. Observe that \cref{eq:kaczmarz-update2} implies
%\begin{align}\label{eq:product}
%	\theta^T - \theta^* = \prod_{t=1}^{T}\left(I_d- \gamma\cdot x_t x_t^\top\right) \left( \theta^0 -  \theta^*  \right).
%\end{align}
%If $x_t$'s are independent and identically distributed, then the expectation of this complicated product is equal to $\left(I_d- \gamma \cdot  \Sigma_x\right)^T$, a much simpler term. On the other hand, this assumption might be considered strong, as continual learning often aims to handle out-of-distribution data.
% Note that the exponential decay of the expected MSE loss is much faster than the $O(1/T)$ decay rate of the regret. Does it imply predicting the future is harder than remembering the past? The answer is not fully clear: \cref{theorem:LMS,theorem:kaczmarz} are not directly comparable as their settings are different.

\myparagraph{Deterministic MSE} % To bound $\cE_t$, we must handle the product $\prod_{t=1}^{T}\left(I_d- \gamma\cdot x_t x_t^\top\right)$ in  \cref{eq:product}. In order to do so, 
We now consider \textit{recurring} or \textit{periodic} tasks as motivated in \cite{Evron-COLT2022}:
\begin{definition}
	We say the $T$ tasks are $p$-recurring if $x_t= x_{t+p}$ for every $t=1,\dots,T-p$.
\end{definition}
In the $p$-recurring setting, there are at most $p$ distinct tasks, presented in a cyclic fashion, and iteratively applying \cref{eq:kaczmarz-update2} becomes an instance of the (generalized) alternating projection algorithm (and also related to the classic \textit{Kaczmarz method}); see \cite{Evron-COLT2022,Peng-ICML2024} and the references therein. We have: % Without loss of generality we assume $T$ is an integer multiple of $p$, so each task is visited precisely $T/p$ times. 
% For the sake of the tutorial it suffices to consider the case $p=2$. We have:
\begin{theorem}[Theorems 8, 10 \& Lemma 9 \cite{Evron-COLT2022}]\label{theorem:ap}
	Assume the $T$ tasks are $2$-recurring with $T$ even. Let $\{\theta^t\}_{t=0}^T$ be the iterates of \ref{eq:LMS} with $\theta^0=0$, $\gamma=1$. Assume $c:=(x_1^\top x_2)^2\neq 1$.  \cref{assumption:linear-unit-norm} implies
%	Suppose \cref{assumption:linear-unit-norm} holds and that the $T$ tasks are $2$-recurring. Assume $T$ is even. 
%	Initialize \ref{eq:LMS} at $\theta^0=0$ and run it with stepsize $\gamma=1$ for every $t=1,\dots,T$. 
%	Define $c:=(x_1^\top x_2)^2$, and assume $c\neq 1$. Then we have the following bounds on $\|\theta^T - \theta^*\|_2^2$ and $\cE_T$:
	\begin{align*}%\label{eq:F-bound}
		\|\theta^T - \theta^*\|_2^2 &\leq c^{T-1} \cdot \| \theta^* \|_2^2, \\ 
		\cE_T &\leq c^{T-1} \left( 1 - c\right) \cdot \| \theta^* \|_2^2 \leq \frac{\| \theta^* \|_2^2}{e(T-1)}.
	\end{align*}
\end{theorem}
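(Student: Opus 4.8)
The plan is to recognize that, with $\gamma = 1$, the recurrence \cref{eq:kaczmarz-update2} is \emph{iterated orthogonal projection}. Setting $e^t := \theta^t - \theta^*$ and $P_t := I_d - x_t x_t^\top$, it reads $e^t = P_t e^{t-1}$, and since $\|x_t\|_2 = 1$ each $P_t$ is the orthogonal projector onto the hyperplane $x_t^\perp$. With $\theta^0 = 0$ we have $e^0 = -\theta^*$, and the $2$-recurrence $x_t = x_{t+2}$ leaves only two distinct projectors, so for even $T$
\begin{align*}
	e^T = (P_2 P_1)^{T/2} e^0 = -(P_2 P_1)^{T/2}\theta^*.
\end{align*}
This is exactly alternating projection between the task hyperplanes $\{x_1^\top\theta = y_1\}$ and $\{x_2^\top\theta = y_2\}$, whose intersection contains $\theta^*$; since $\theta^*$ is the minimum-norm (hence row-space) solution of the two distinct constraints---well defined because $c = (x_1^\top x_2)^2 \ne 1$ makes $x_1,x_2$ independent---it lies in $S := \mathrm{span}(x_1,x_2)$, and both $P_1,P_2$ map $S$ into itself. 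The whole analysis therefore reduces to the $2$-dimensional operator $M := P_2 P_1$ acting on $S$.

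First I would fix the orthonormal basis $\{x_1, w\}$ of $S$ with $w := (x_2 - \rho x_1)/\sigma$, $\rho := x_1^\top x_2$, $\sigma := \sqrt{1-\rho^2}$, so that $x_2 = \rho x_1 + \sigma w$ and $\theta^* = p\,x_1 + q\,w$ for scalars $p,q$ with $q^2 \le \|\theta^*\|_2^2$. In this basis $P_1 = \left(\begin{smallmatrix}0 & 0\\ 0 & 1\end{smallmatrix}\right)$ and $P_2 = \left(\begin{smallmatrix}\sigma^2 & -\rho\sigma\\ -\rho\sigma & \rho^2\end{smallmatrix}\right)$, hence
\begin{align*}
	M = P_2 P_1 = \begin{pmatrix} 0 & -\rho\sigma \\ 0 & \rho^2 \end{pmatrix}, \qquad M^{T/2} = \begin{pmatrix} 0 & -\rho^{T-1}\sigma \\ 0 & \rho^{T} \end{pmatrix},
\end{align*}
the last identity by an immediate induction (the first column of $M$ vanishes, so $M^{k}$ only propagates the second column). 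Applying this to $\theta^* = (p,q)$ and using $\sigma^2+\rho^2 = 1$ gives $\|e^T\|_2^2 = \|M^{T/2}\theta^*\|_2^2 = \rho^{2(T-1)}q^2 = c^{T-1} q^2 \le c^{T-1}\|\theta^*\|_2^2$, which is the first claimed bound.

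For the mean squared error I use that $T$ is even, so that among $i = 1,\dots,T$ exactly $T/2$ regressors equal $x_1$ and $T/2$ equal $x_2$, and that the final projection is onto $x_2^\perp$, giving $x_2^\top e^T = 0$. Reading off $e^T = -M^{T/2}\theta^* = (\rho^{T-1}\sigma q,\,-\rho^T q)$ in the basis $\{x_1,w\}$ yields $x_1^\top e^T = \rho^{T-1}\sigma q$, so
\begin{align*}
	\cE_T = \frac{1}{T}\sum_{i=1}^{T}(x_i^\top e^T)^2 = \frac{1}{2}\big(x_1^\top e^T\big)^2 = \tfrac12\,c^{T-1}(1-c)\,q^2 \le c^{T-1}(1-c)\,\|\theta^*\|_2^2,
\end{align*}
using $\sigma^2 = 1-c$ and $q^2\le\|\theta^*\|_2^2$. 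Finally, the uniform bound follows from $\ln c \le c-1$, which gives $c^{T-1}\le e^{-(T-1)(1-c)}$; writing $s := 1-c$ the right-hand side is at most $s\,e^{-(T-1)s}$, a function maximized at $s = 1/(T-1)$ with value $1/(e(T-1))$, so $c^{T-1}(1-c)\le 1/(e(T-1))$.

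The routine parts are the $2\times2$ matrix power and the one-variable maximization at the end. The step needing the most care is the reduction to $S$: one must check that $\theta^*$ genuinely lands in $\mathrm{span}(x_1,x_2)$ and, crucially, that the two-step contraction factor is \emph{exactly} $c$ rather than the trivial $\le 1$ afforded by orthogonal projections---this is where $c\ne1$ and the explicit entries of $M$ do the work. A secondary subtlety worth flagging is that the literal rank condition in \cref{assumption:linear-unit-norm} cannot survive task repetition, so throughout $\theta^*$ must be read as the minimum-norm solution of the two \emph{distinct} constraints; making this reading explicit is what gives the phrase ``\cref{assumption:linear-unit-norm} implies'' its meaning here.
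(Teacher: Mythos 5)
Your proof is correct, and it is substantially more self-contained than the paper's. The paper imports both main bounds wholesale from \cite{Evron-COLT2022} (its Theorems 8, 10 and Lemma 9) and proves only the final scalar inequality $c^{T-1}(1-c)\le \frac{1}{e(T-1)}$, which it does by applying AM--GM to the $T$ numbers $c/(T-1),\dots,c/(T-1),1-c$ (whose sum is $1$), yielding $c^{T-1}(1-c)\le (1-1/T)^{T}/(T-1)\le 1/(e(T-1))$. You instead derive everything from scratch: the reduction to the invariant subspace $\mathrm{span}(x_1,x_2)$, the explicit power $M^{T/2}$ of the $2\times 2$ operator $M=P_2P_1$ (which checks out, including the ordering $e^T=(P_2P_1)^{T/2}e^0$ and the vanishing first column), and the tail inequality via $\ln c\le c-1$ followed by maximizing $s\,e^{-(T-1)s}$ at $s=1/(T-1)$ --- a calculus route that is equally valid and lands on the same extremal value as the paper's AM--GM step. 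Your explicit computation in fact buys more than the statement asks for: you obtain the exact identities $\|\theta^T-\theta^*\|_2^2=c^{T-1}q^2$ and $\cE_T=\frac{1}{2}\,c^{T-1}(1-c)\,q^2$, where $q$ is the component of $\theta^*$ along $w$, i.e., a factor-$\frac{1}{2}$ improvement on the stated MSE bound, together with the structural fact $x_2^\top(\theta^T-\theta^*)=0$ coming from the final projection. Your closing caveat is also well taken: the literal condition $\rank(X_{:t})=t$ in \cref{assumption:linear-unit-norm} cannot survive task repetition, and the theorem's phrase ``\cref{assumption:linear-unit-norm} implies'' must indeed be read as taking $\theta^*$ to be the minimum-norm solution of the two distinct constraints --- precisely the reading that places $\theta^*$ in $\mathrm{span}(x_1,x_2)$ and makes your two-dimensional reduction legitimate.
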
 
\begin{proof}
    Here we give a proof different from \cite{Evron-COLT2022} to the last inequality $ c^{T-1} \left( 1 - c\right)\leq \frac{1}{e(T-1)}$. 
    % The bounds on $\|\theta^T - \theta^*\|_2^2$ and inequality $\cE_T \leq c^{T-1} \left( 1 - c\right) \cdot \| \theta^* \|_2^2$ are simplified from Theorem 8 and Lemma 9 of \cite{Evron-COLT2022}, respectively. It remains to prove $ c^{T-1} \left( 1 - c\right)\leq \frac{1}{e(T-1)}$; see also \cite[Theorem 10]{Evron-COLT2022}. 
    Applying the \textit{AM-GM inequality} to the $T$ numbers, $c/(T-1),\dots, c/(T-1), 1-c$, gives
	\begin{align*}
		\left( \frac{c}{T-1} \right)^{T-1} \left( 1 - c \right) \leq \left( \frac{1}{T} \right)^T.
	\end{align*}
	Hence we have $ c^{T-1} \left( 1 - c\right)\leq \left( 1 - \frac{1}{T}  \right)^T \cdot \frac{1}{T-1}$. To finish, note that $\left( 1 - \frac{1}{T}  \right)^T\leq e^{-1}$.
\end{proof}
\cref{theorem:ap} states that $\theta^T$ converges to $\theta^*$ exponentially with the contraction factor $c=(x_1^\top x_2)^2$. If $c$ is arbitrarily close to $1$, then the convergence is arbitrarily slow. On the other hand, the upper bound $\frac{\| \theta^* \|_2^2}{e(T-1)}$ on the MSE loss $\cE_T$ decreases with $T$ and is independent of $c$ (that is, independent of data).%, and is insensitive to the values of $c$.

%Since $\| x_t\|_2=1$, the values of $c$ are controlled by the angle between $x_1$ and $x_2$. If this angle is extremely small, the convergence would be extremely slow.
% If $x_1,x_2$ are randomly sampled from the sphere $\bbS^{d-1}$ with large $d$, then $x_1$ and $x_2$ are almost orthogonal (with high probability), so $c$ is small, indicating fast convergence. However, if the angle between $x_1,x_2$ is extremely small, the convergence would be extremely slow. %Since the angle between $x_1,x_2$ is a measure of the relation between the two periodic tasks, \cref{theorem:ap} offers a nice characterization of how task relationship affects convergence. 

\myparagraph{On Optimal Stepsizes}  The stepsize $\gamma=1$ in \cref{theorem:ap} is a crucial choice. With $\gamma=1$, the matrix $I_d- \gamma\cdot x_t x_t^\top$ in \cref{eq:kaczmarz-update2} becomes an orthogonal projection, whose properties could be leveraged for analysis. This stepsize choice might even seem to be optimal in light of \cref{theorem:kaczmarz}, as its upper bound is precisely minimized at $\gamma=1$. However, note that $\gamma$ is constant; what if we use different stepsizes $\gamma_t$ for different tasks? If we do so, \cref{eq:kaczmarz-update2} becomes $\theta^t - \theta^* = \left(I_d- \gamma_t \cdot x_t x_t^\top\right) \left( \theta^{t-1} -  \theta^*  \right)$. Setting  $\gamma_t=1$ yields the same bound as in \cref{theorem:ap}, but a better choice of $\gamma_t$ does exist (Corollary 1 \& Appendix D of \cite{Peng-ICML2024}):
%In that case, by unrolling \cref{eq:kaczmarz-update2} we would obtain that 
%\begin{align}\label{eq:product-vary-stepsize}
%	\theta^T - \theta^* = \prod_{t=1}^{T}\left(I_d- \gamma_t \cdot x_t x_t^\top\right) \left( \theta^0 -  \theta^*  \right).
%\end{align}
%Setting $\gamma_t=1$ yields the same bound as in \cref{theorem:ap}. But are there better choices of $\gamma_t$? The result below provides a partial answer (it is a simple ramification of Corollary 1 \& Appendix D of \cite{Peng-ICML2024}).
\begin{theorem}\label{theorem:opt}
	Consider the same setting of \cref{theorem:ap} but use the following alternating stepsizes:
	\begin{align*}%\label{eq:ap-step}
		\gamma_t = \begin{cases}
			1 & \quad \textnormal{if $t$ is odd,} \\ 
			\frac{1}{1-c} & \quad \textnormal{if $t$ is even,}
		\end{cases}
	\end{align*}
	where we recall $c= (x_1^\top x_2)^2\neq 1$. Then \ref{eq:LMS} converges at task 3 with $\theta^3 = \theta^*$, $\cE_3=0$ (\cref{fig:ap-opt}).
\end{theorem}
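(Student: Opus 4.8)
The plan is to track the error vector $e_t := \theta^t - \theta^*$ through the first three updates and show it vanishes at $t=3$. The starting point is the task-dependent recurrence $e_t = (I_d - \gamma_t\, x_t x_t^\top)\, e_{t-1}$, which is exactly \cref{eq:kaczmarz-update2} with $\gamma$ replaced by $\gamma_t$, together with $e_0 = -\theta^*$ (since $\theta^0 = 0$). The whole argument is geometric: each factor $I_d - \gamma_t x_t x_t^\top$ acts on the error, and with the prescribed stepsizes these three actions compose to the zero map on the relevant subspace.

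First I would reduce the problem to two dimensions. Since $\theta^*$ is the minimum-norm solution, it lies in the column space of $X_{:T}$, which in the $2$-recurring case is $V := \operatorname{span}\{x_1, x_2\}$; hence $e_0 \in V$. Each factor $I_d - \gamma_t x_t x_t^\top$ with $x_t \in \{x_1, x_2\}$ maps $V$ into $V$, so every $e_t$ stays in $V$. Because $c = (x_1^\top x_2)^2 \neq 1$, the vectors $x_1, x_2$ are linearly independent and $V$ is genuinely two-dimensional. I would then fix an orthonormal basis $\{x_1, u_1\}$ of $V$ with $u_1 \perp x_1$ and write $x_2 = a\,x_1 + b\,u_1$, where $a = x_1^\top x_2$ (so $a^2 = c$); the constraint $\| x_2 \|_2 = 1$ forces $a^2 + b^2 = 1$, i.e. $b^2 = 1 - c$.

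Next I would carry out the three steps in this basis. Step $1$ uses $\gamma_1 = 1$, so $I_d - x_1 x_1^\top$ is the orthogonal projection off $x_1$; since $\| x_1 \|_2 = 1$, this gives $x_1^\top e_1 = 0$, i.e. $e_1 = \alpha\, u_1$ for a scalar $\alpha$. Step $2$ uses $\gamma_2 = \tfrac{1}{1-c} = \tfrac{1}{b^2}$; computing $x_2^\top e_1 = b\alpha$ and substituting into the recurrence, the $u_1$-components cancel and one obtains $e_2 = -\tfrac{\alpha a}{b}\, x_1$, which is parallel to $x_1$. Step $3$ again applies the projection $I_d - x_1 x_1^\top$, which annihilates any multiple of $x_1$; hence $e_3 = 0$, that is $\theta^3 = \theta^*$. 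Finally, $\cE_3 = \tfrac{1}{3}\sum_{i=1}^3 (x_i^\top \theta^* - x_i^\top \theta^3)^2 = \tfrac{1}{3}\sum_{i=1}^3 (x_i^\top e_3)^2 = 0$.

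There is no real analytic obstacle here: the computation is short once the two-dimensional reduction is in place, and the entire content lies in the \emph{choice} of the even stepsize. The one step to get right is verifying that $\gamma_2 = \tfrac{1}{1-c}$ is exactly the value that sends $e_1$ (which occupies the one-dimensional complement of $x_1$ inside $V$) onto $\operatorname{span}\{x_1\}$, so that the subsequent projection off $x_1$ erases the error entirely. Once $x_2$ is decomposed as $a\,x_1 + b\,u_1$ with $b^2 = 1-c$, this cancellation is transparent; the only conceptual work is recognizing that the dynamics live in $V$, after which the result is forced.
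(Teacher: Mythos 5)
Your proposal is correct and follows essentially the same route as the paper: both track the error $e_t=\theta^t-\theta^*$ via the recurrence \cref{eq:kaczmarz-update2} with task-dependent stepsizes and show that the composite map $(I_d-x_1x_1^\top)(I_d-\gamma_2\,x_2x_2^\top)(I_d-x_1x_1^\top)$ annihilates $\operatorname{span}\{x_1,x_2\}$, which contains $\theta^*$ by minimum-normality. The only difference is presentational: where the paper states ``by algebra one verifies $GX_{:2}=0$,'' you supply that algebra explicitly through the orthonormal decomposition $x_2=a\,x_1+b\,u_1$ with $b^2=1-c$, which also makes transparent why $\gamma_2=\tfrac{1}{1-c}$ is the right choice.
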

\begin{proof}
	Recall $X_{:2}=[x_1, x_2]$, $y_{:2}=[y_1, y_2]^\top$. It suffices to show $\theta^3=\theta^*$, as this implies $\cE_3=0$. Since $\gamma_1=1$ and $\gamma_2 = \frac{1}{1-c}$, \cref{eq:kaczmarz-update2} indicates $\theta^3 - \theta^* =G(\theta^0-\theta^*) =-G \theta^*$, where $G$ is given as
	\begin{align*}
		G:=\left(I_d- x_1 x_1^\top\right) \left(I_d- \gamma_2 \cdot x_2 x_2^\top\right) \left(I_d- x_1 x_1^\top\right).
	\end{align*}
	By algebra one verifies $GX_{:2}=0$, which implies $\theta^3 - \theta^* = -G \theta^* = -G X_{:2}(X_{:2}^{\top} X_{:2} )^{-1}y_{:2}=0$.
\end{proof}

\begin{figure}
	\centering
	%\subfloat[]{
		\begin{tikzpicture}[scale=1.2]
		% Draw line y = 1
		\draw[solid, line width=1.5pt] (0, 0) -- (2,0) node[right] {\footnotesize{$\textnormal{Span}(x_1)^\perp$}};
		
		% Draw line y = 1/2*x
		\draw[solid, line width=1.5pt] (0,0) -- (1,1.7321) node[right] {\footnotesize{$\textnormal{Span}(x_2)^\perp$}};
		
		\fill[cvprblue] (1.7321, 0.7) circle (1.5pt) node[above] {$z^0$};
		
		\draw[dashed, cvprblue, -{Stealth[length=2mm]}] (1.7321, 0.7)  -- (1.7321,0) node[below] {$z^1$};
		\draw[dashed, cvprblue, -{Stealth[length=2mm]}] (1.7321,0)  -- (0,1) node[above] {$z^2$};
		\draw[dashed, cvprblue, -{Stealth[length=2mm]}] (0,1) -- (0,0) node[below] {$z^3$};
		
		\draw[dotted, myred, -{Stealth[length=2mm]}] (1.7321, 0.7)  -- (1.7321,0);
		\draw[dotted, myred, -{Stealth[length=2mm]}] (1.7321,0)  -- (0.433,0.75);
		\node[myred] at (0.7,0.85) {$z^2$};
		
		\draw[dotted, myred, -{Stealth[length=2mm]}] (0.433,0.75) -- (0.433,0) node[below] {$z^3$};
	\end{tikzpicture}
	\caption{Convergence of $z^t:=\theta^t - \theta^*$ to the intersection $0$ with the constant stepsize $\gamma_t=1$ of \cref{theorem:ap} (dotted red arrows) or with the alternating stepsize of \cref{theorem:opt} (dashed blue arrows). \label{fig:ap-opt} }
\end{figure}
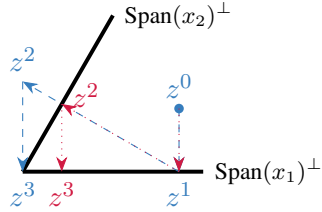
% \cref{theorem:opt} shows \ref{eq:LMS} with alternating stepsizes converge in 3 iterations (see \cref{fig:ap-opt}). 
% \cref{fig:ap-opt} shows the differences between the constant stepsize $\gamma_t=1$ and alternating stepsizes. %At task $2$, we know task $1$ is just seen and will be revisited next; the alternating stepsizes leverage this and take a careful step at task $2$ so that the next visit of task $1$ leads to the optimal solution. 
If $c\to1$ then $\gamma_2\to \infty$, so stepsize $\gamma_2$ is far larger than what \ref{eq:LMS} would choose to decrease the error. That said, finding optimal stepsizes for more than two recurring tasks remains an open problem.

\myparagraph{Summary} %\cref{theorem:kaczmarz,theorem:ap,theorem:opt} bound the (expected) MSE loss of \ref{eq:LMS} under various settings. 
While we refer the readers to \cite{Evron-COLT2022,Peng-ICML2024} for results that generalize \cref{theorem:ap,theorem:opt}, our simplified exposition conveys that \ref{eq:LMS} has small MSE losses \textit{if it frequently encounters tasks that are previously seen}. Indeed, \cref{theorem:kaczmarz,theorem:ap,theorem:opt} require the tasks to be statistically the same or cyclically repeated. In the subsequent sections we will handle previously \textit{unseen} tasks by using extra memory.

\section{Affine Projection Algorithm (APA)}
\myparagraph{Notations} Denote by $\textnormal{Span}(\cdot)$ the subspace spanned by the columns of the input matrix, and by $\textnormal{Span}(\cdot)^\perp$ the orthogonal complement of this subspace. For example, $\textnormal{Span}(X_{:t})$ is spanned by $x_1,\dots,x_t$. 
% Denote by $\textnormal{Span}(x_1,\dots,x_t)$ the subspace spanned by $x_1,\dots,x_t$; for short we also write $\textnormal{Span}(X_{:t})$. Write $\textnormal{Span}(X_{:t})^\perp$ for its orthogonal complement. 
Denote by $P_t$ the orthogonal projection onto $\textnormal{Span}(X_{:t})^\perp$; $P_t$ is a $d\times d$ symmetric  matrix, expressed as
\begin{align}\label{eq:proj-P}
	P_t:=I_d - X_{:t} (X_{:t}^\top X_{:t})^{-1} X_{:t}^\top.
\end{align}
This projection $P_t$ will play a fundamental role and offer geometric intuition in the section.

%One verifies $P_t X_t=0=X_t^\top P_t$ and then $P_t P_{t-1} = P_t$. Since the solution $\hat{\theta}^t$ to \ref{eq:min-norm} lies in $\textnormal{Span}(X_{:t})$, and since any feasible point $\theta$ of  \ref{eq:min-norm} is of the form $\hat{\theta}^t + P_t a$ for some $a$,

%we furthermore have $P_t \hat{\theta}^t =0$. Finally, since any feasible point $\theta$ to \ref{eq:min-norm} that satisfies $X_{:t}^\top \theta = y_{:t}$ can be written as $\hat{\theta}^t + P_t a$ for some coefficient vector $a$, we obtain that any such feasible point $\theta$ is a minimum-norm solution if and only if $P_t\theta=0$.

%$P_t$ is a $d\times d$ matrix and expressed as $P_t:=I_d - X_{:t} (X_{:t}^\top X_{:t})^{-1} X_{:t}^\top$. We have the following facts:
%\begin{lemma}
%	For every $t=1,\dots,T$, we have $P_t X_t=0$, $P_t P_{t-1} = P_t$, and $P_t\theta^{t*}=0$
%\end{lemma}

\myparagraph{APA} In the continual learning language, APA \cite{Hinamoto-1975,Ozeki-1984} is a \textit{replay-based method} that maintains a memory buffer of size $b$ to store $b$ most recent samples. Given a new task $t$ with data $(x_t,y_t)$ and the previous model $\theta^{t-1}$, and given the $b$ samples $\{ (x_{t-i},y_{t-i}) \}_{i=1}^{b}$ in the memory buffer, APA amounts to
\begin{align}\label{eq:APA}
	\theta^t \in \argmin_{\theta \in \bbR^d} \| \theta - \theta^{t-1}\|_2^2 \quad \textnormal{s.t.} \quad y_{t-i}=x_{t-i}^\top \theta,\ i=0,\dots,b. \tag{APA}
\end{align}
As shown, \ref{eq:APA} is a constrained optimization method that blends the current task and $b$ past tasks in the constraints, and it projects the previous model $\theta^{t-1}$ onto the constraint set. After task $t$ and before task $t+1$, \ref{eq:APA} discards $(x_{t-b},y_{t-b})$ and includes $(x_t,y_t)$ in its memory buffer.

When $b=0$, \ref{eq:APA} is equivalent to \ref{eq:LMS} with step size $\gamma=1$. Then, can the theoretical guarantees for \ref{eq:LMS} be extended for \ref{eq:APA} with an arbitrary $b$?  A visible challenge, though, is that the memory buffers at $b-1$ consecutive tasks contain repeated data, which means the consecutive updates of \ref{eq:APA} are ``correlated''. Overcoming this challenge requires extra technical devices; see, e.g., \cite[Problem IV.7]{Sayed-2008book}.

On the other hand, \ref{eq:APA} could have its own guarantees \textit{by design}. Indeed, if we use \textit{all} past data and set $b=t-1$, then \ref{eq:APA} guarantees to have zero losses on all seen tasks; this variant reads:
\begin{align}\label{eq:APA2}
	\theta^t_{\textnormal{APA}^\dagger} \in \argmin_{\theta \in \bbR^d} \| \theta - \theta^{t-1}\|_2^2 \quad \textnormal{s.t.} \quad y_{:t}= X_{:t}^\top \theta. \tag{APA$^\dagger$}
\end{align}
% If $X_{:t}$ has full row rank $d$, then $\theta^*$ is the only feasible point of \ref{eq:APA2}, which implies $\theta^t_{\textnormal{APA}^\dagger}=\theta^*$; in this case we might terminate \ref{eq:APA2}. This motivates the following assumption to simplify our exposition:
% We might terminate \ref{eq:APA2} at task $t$ if $X_{:t}$ has full row rank $d$, as this means the true model $\theta^*$ becomes the only feasible point of \ref{eq:APA2} and we must have $\theta^t_{\textnormal{APA}^\dagger}=\theta^*$. 
% \begin{assumption}\label{assumption:fullrank}
% 	 We have $T\leq d$. The rank of $X_{:t}$ is equal to $t$ for every $t=1,\dots,T$. % the total number of tasks $T$ does not exceed the dimension $d$ (i.e.,
% \end{assumption}
As $t$ grows, \ref{eq:APA2} changes its objective from $\| \theta - \theta^{t-1}\|_2^2$ to $\| \theta - \theta^{t}\|_2^2$. But, implicitly, there is no change in its objective and \ref{eq:APA2} consistently minimizes $\| \theta - \theta^0\|_2^2$ (with $\theta^0=0$) under its constraints:
\begin{theorem}\label{theorem:APA-obj-t->0}
	Suppose \cref{assumption:linear-unit-norm} holds. Let $\{\theta^t_{\textnormal{APA}^\dagger}\}_{t\geq 0}$ be the iterates of \ref{eq:APA2} produced with initialization $\theta^0_{\textnormal{APA}^\dagger}=0$. Then for all $t$ we have $\theta^t_{\textnormal{APA}^\dagger}$ equal to $\hat{\theta}^t$, where $\hat{\theta}^t$ is defined as
    \begin{align}\label{eq:min-norm}
		\hat{\theta}^t \in \argmin_{\theta \in \bbR^d} \| \theta \|_2^2 \quad \textnormal{s.t.} \quad y_{:t}= X_{:t}^\top \theta. \tag{Min-Norm}
	\end{align}
%	Initialize \ref{eq:APA2} at $\theta^0=0$ and run it for $t=1,\dots, T$, which gives . , the model $\theta^t_{\textnormal{APA}^\dagger}$ produced by \ref{eq:APA2} is equal to the solution $\hat{\theta}^t$ to \ref{eq:min-norm}.
\end{theorem}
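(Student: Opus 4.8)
The plan is to induct on $t$, using the geometric reading of \ref{eq:APA2} stressed earlier in the section: at task $t$ the iterate $\theta^t_{\textnormal{APA}^\dagger}$ is the orthogonal projection of $\theta^{t-1}_{\textnormal{APA}^\dagger}$ onto the affine subspace $\mathcal{C}_t := \{\theta\in\bbR^d : X_{:t}^\top \theta = y_{:t}\}$. Under \cref{assumption:linear-unit-norm} these sets are nonempty and nested, $\mathcal{C}_t \subseteq \mathcal{C}_{t-1}$, and the direction space of $\mathcal{C}_t$ is $\ker(X_{:t}^\top) = \textnormal{Span}(X_{:t})^\perp$. The invariant I would track is that $\theta^t_{\textnormal{APA}^\dagger}$ lies in the row space $\textnormal{Span}(X_{:t})$; I will show this invariant pins the iterate down to the minimum-norm point $\hat\theta^t$ of \ref{eq:min-norm}.

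The one structural fact needed is that $\hat\theta^t$ is the \emph{unique} feasible point lying in $\textnormal{Span}(X_{:t})$. On one hand, since $\rank(X_{:t})=t$ makes $X_{:t}^\top X_{:t}$ invertible, $\hat\theta^t = X_{:t}(X_{:t}^\top X_{:t})^{-1}y_{:t} \in \mathcal{C}_t \cap \textnormal{Span}(X_{:t})$. On the other hand, any feasible point differs from $\hat\theta^t$ by a vector in the direction space $\textnormal{Span}(X_{:t})^\perp$; for such a point to also lie in $\textnormal{Span}(X_{:t})$ the difference must be orthogonal to itself, hence zero. Thus $\mathcal{C}_t \cap \textnormal{Span}(X_{:t}) = \{\hat\theta^t\}$.

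Granting this, the induction is short. The base case $t=0$ holds because $\theta^0_{\textnormal{APA}^\dagger}=0$ lies in every subspace (and equals the minimum-norm solution of the empty constraint system). For the step, assume $\theta^{t-1}_{\textnormal{APA}^\dagger} = \hat\theta^{t-1} \in \textnormal{Span}(X_{:t-1}) \subseteq \textnormal{Span}(X_{:t})$. Projecting onto $\mathcal{C}_t$ gives $\theta^t_{\textnormal{APA}^\dagger} = \theta^{t-1}_{\textnormal{APA}^\dagger} - X_{:t}(X_{:t}^\top X_{:t})^{-1}\big(X_{:t}^\top \theta^{t-1}_{\textnormal{APA}^\dagger} - y_{:t}\big)$, whose correction term is manifestly in $\textnormal{Span}(X_{:t})$. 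Since $\hat\theta^{t-1}$ already lies in $\textnormal{Span}(X_{:t})$, the sum $\theta^t_{\textnormal{APA}^\dagger}$ stays in $\textnormal{Span}(X_{:t})$ and is feasible by construction, so the uniqueness fact forces $\theta^t_{\textnormal{APA}^\dagger} = \hat\theta^t$, closing the induction.

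I do not expect a serious obstacle: the whole content is that sequential projection onto nested affine subspaces, launched from the origin, never leaves the accumulated row space and therefore lands on the minimum-norm point. The only place to be careful is the bookkeeping that keeps the iterate inside $\textnormal{Span}(X_{:t})$ rather than merely inside $\mathcal{C}_t$ — this is exactly what singles out $\hat\theta^t$ among the infinitely many feasible points, and it is precisely where the initialization $\theta^0=0$ is used. A more computational alternative would substitute the explicit projection formula together with the inductive expression for $\hat\theta^{t-1}$ and simplify, but the row-space argument avoids that algebra.
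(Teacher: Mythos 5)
Your proof is correct, but it takes a genuinely different route from the paper's proof of \cref{theorem:APA-obj-t->0}. The paper shifts coordinates by the common solution $\theta^*$ of \cref{assumption:linear-unit-norm}, setting $z^t:=\theta^t_{\textnormal{APA}^\dagger}-\theta^*$ and $\hat z^t:=\hat\theta^t-\theta^*$, so that both \ref{eq:APA2} and \ref{eq:min-norm} become projections of $z^0$ onto the nested subspaces $\textnormal{Span}(X_{:t})^\perp$; the crux there is the composition identity $P_iP_{i-1}=P_i$ (a consequence of $P_iX_{:i}=0$), which telescopes the product $\prod_{i=1}^t P_i z^0$ down to $P_t z^0$ and identifies the two sequences in one line, with \cref{fig:proj1+proj2=proj2} as the picture. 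You instead induct directly on the iterates with the invariant $\theta^t_{\textnormal{APA}^\dagger}\in\textnormal{Span}(X_{:t})$, and close the loop with the uniqueness fact $\mathcal{C}_t\cap\textnormal{Span}(X_{:t})=\{\hat\theta^t\}$, which you justify correctly (a vector lying in a subspace and its orthogonal complement is zero); your explicit projection formula onto $\mathcal{C}_t$ is also right, as one checks feasibility and that the correction term lies in $\textnormal{Span}(X_{:t})$. Interestingly, your argument is essentially the technique the paper itself uses for \cref{theorem:ICL-ib}, the analogous implicit-bias statement for \ref{eq:ICL-AF}, transported to \ref{eq:APA2}. As for what each approach buys: the paper's argument is formula-free and makes the geometry (sequential projection onto nested subspaces equals one-shot projection) the entire content, but it routes through $\theta^*$, i.e., through the existence of a common exact solution; your row-space argument never mentions $\theta^*$, makes explicit where the initialization $\theta^0=0$ enters (it seeds the invariant), and isolates a reusable characterization of the minimum-norm solution as the unique feasible point in the row space.
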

\begin{proof}
	% We give a geometric proof for $T=2$ as visualized in \cref{fig:proj1+proj2=proj2}. 
	Let $z^t:=\theta^t_{\textnormal{APA}^\dagger} - \theta^*$ and $\hat{z}^t:=\hat{\theta}^t - \theta^*$. Similarly to \cref{eq:kaczmarz-update2},  the output $\theta^t_{\textnormal{APA}^\dagger}$ of \ref{eq:APA2} satisfies
	\begin{align*}
		\theta^t_{\textnormal{APA}^\dagger} - \theta^* = P_t \left(\theta^{t-1}_{\textnormal{APA}^\dagger} - \theta^* \right) \Leftrightarrow z^t = P_t z^{t-1},
	\end{align*}
	where $P_t$ is the orthogonal projection onto $\textnormal{Span}(X_{:t})^\perp$ defined in \cref{eq:proj-P}. Similarly we can prove $\hat{z}^t = P_t z^0$. By definition one verifies $P_i X_i=0$ and then $P_iP_{i-1}=P_i$ for every $i$. Then we have $z^t = \prod_{i=1}^{t} P_i z^{0} = P_t z^0 = \hat{z}^t$. 
	This finishes the proof. See also \cref{fig:proj1+proj2=proj2} for a pictorial proof.	
	\tikzset{
		big arrow/.style={
			decoration={markings,mark=at position 1 with {\arrow[scale=0.9,#1]{>}}},
			postaction={decorate},
			shorten >=0.3pt},
		big arrow/.default=blue}
	\begin{figure}
		\centering
		\tdplotsetmaincoords{70}{85}
		
		\begin{tikzpicture}[tdplot_main_coords]
			\draw[thick, dashed] (-2,-1.414,1.414) -- (-2,0,0) -- (2,0,0) -- (2,-1.414,1.414) -- cycle;
			\node[black] at (-3,-2.1,0.4) {\footnotesize{$\textnormal{Span}(x_2)^\perp$}};
			
			\draw[thick] (-2,-1,0) -- (-2,2,0) -- (2,2,0) -- (2,-1,0) -- cycle;
			\draw[thick,mypurple](-2,0,0)--(2,0,0);
			
			% \fill[blue] (-1.5,1.0605,1.0605) circle (1.2pt);% {$\theta^0$};
			
			\fill[cvprblue] (0,1.0605,1.0605) circle (1.5pt) node[above] {$z^0$};
			
			\draw[dashed, cvprblue, -{Stealth[length=2mm]}] (0,1.0605,1.0605) -- (0,0,0);% node[left] {$z^{02}$};
			
			\draw[dashed, myred, -{Stealth[length=2mm]}] (0,1.0605,1.0605) -- (0,1.0605,0) node[right] {$z^1$};
			
			\draw[dashed, myred, -{Stealth[length=2mm]}]  (0,1.0605,0) -- (0,0,0);
			\node[myred] at (0,0.32, -0.25) {$z^2$};

			\node[black] at (0,3,0)  {\footnotesize{$\textnormal{Span}(x_1)^\perp$}};
			
			% \node[mypurple] (line) at (-1,-2.2,0) {$\textnormal{Span}(x_1,x_2)^\perp$};
			\node[mypurple] (line) at (0,1,2) {\footnotesize{$\textnormal{Span}(x_1,x_2)^\perp$}};
			\draw[-latex, -{Stealth[length=2mm]}] (line) to[out=225,in=45] (-1,0,0);
			
			% \fill[cvprblue] (0, 0) circle (1.5pt) node[below] {$z_3$};
			
			%		\foreach \x in {0,...,2}
			%		\fill[blue] (-1.5,1.0605/2^\x,1.0605/2^\x) circle (1.2pt);
			%		\foreach \x in {0,...,2}
			%		\fill[black] (-1.5,1.0605/2^\x,0) circle (1.2pt);
			%		\foreach \x in {0,...,2}
			%		\draw[thick,big arrow=black,black,dotted] (-1.5,1.0605/2^\x,1.0605/2^\x) -- (-1.5,1.0605/2^\x,0);
			%		\foreach \x in {0,...,1}
			%		\draw[thick,big arrow=blue,blue,dotted] (-1.5,1.0605/2^\x,0) -- (-1.5,1.0605/2^\x/2,1.0605/2^\x/2);
			
			%\node[anchor= east] (line) at (-1,0.5,1.5) {$\theta^0$};
			%\draw[-latex] (line) to[out=0,in=150] (-1.5,1.0605,1.0605);
			
			%\draw[blue] (-1.5,1.0605,1.0605) -- (-1.5,0,0);
			% \draw[black] (-1.5,1.0605,0) -- (-1.5,0,0);
		\end{tikzpicture}
		\caption{Pictorial proof of \cref{theorem:APA-obj-t->0} in $\bbR^3$. \ref{eq:APA2} projects $z^0$ onto $\textnormal{Span}(x_1)^\perp$ and then $\textnormal{Span}(x_1,x_2)^\perp$ (red arrows). \ref{eq:min-norm} projects $z^0$ directly onto  $\textnormal{Span}(x_1,x_2)^\perp$ (blue arrow). They reach the same point. \label{fig:proj1+proj2=proj2} }
	\end{figure}
\end{proof}
What \cref{theorem:APA-obj-t->0} shows might be called the \textit{implicit bias} of \ref{eq:APA2}, as it implicitly finds the solution to another program, \ref{eq:min-norm}. An early result of this flavor is in \cite{Herman-C-ACM1978}, where it is shown that \ref{eq:LMS} converges to the solution $\hat{\theta}^t$ of \ref{eq:min-norm} for $t$-recurring tasks. Next, we examine a few recent continual learning methods, and we prove they all converge to $\hat{\theta}^t$, thereby establishing their equivalence to \ref{eq:APA2}. 
% the \textit{Ideal Continual Learner} (ICL) \cite{Peng-ICML2023}, and we show a specific implementation of ICL in our current linear setting is implicitly biased towards finding the minimum-norm solution $\hat{\theta}^t$ (and thus $\theta^t_{\textnormal{APA}^\dagger}$). 

\myparagraph{ICL} The ICL framework of \cite{Peng-ICML2023} gives a general instruction: Minimize the loss of the current task, under the constraints that previous tasks are solved optimally. In our setting, this instruction means
\begin{align}\label{eq:ICL-AF}
	%\theta^t_{\textnormal{ICL}} \in 
	\min_{\theta \in \bbR^d} \left(y_t - x_t^\top \theta \right)^2 \quad \textnormal{s.t.} \quad y_{:t-1}= X_{:t-1}^\top \theta. \tag{ICL}
\end{align}
\ref{eq:ICL-AF} prioritizes remembering the past over solving the current task. To solve \ref{eq:ICL-AF} continually, we maintain and update a solution $\theta^t_{\textnormal{ICL}}$ to \ref{eq:ICL-AF} and the projection $P_t$ onto $\textnormal{Span}(X_{:t})^\perp$ defined in \cref{eq:proj-P}: %. We can update $\theta^t_{\textnormal{ICL}}$ and $\textnormal{Span}(X_{:t})^\perp$ as follows: 
\begin{prop}[Proposition 5 of \cite{Peng-ICML2023}, Simplified]\label{prop:ICL-AF-update}
	Define $P_{0} := I_d$ and $\theta^0_{\textnormal{ICL}}:=0$. Also define the projected data $\overline{x}_t:= P_{t-1} x_t$. Under \cref{assumption:linear-unit-norm}, we can update a solution $\theta^{t-1}_{\textnormal{ICL}}$ to \ref{eq:ICL-AF} via ($\forall t\geq 1$)
	\begin{align}\label{eq:ICL-AF-update}
		\theta^t_{\textnormal{ICL}} = \theta^{t-1}_{\textnormal{ICL}} - \frac{ x_t^\top \theta^{t-1}_{\textnormal{ICL}} - y_t  }{ \| \overline{x}_t\|_2^2 }  \cdot \overline{x}_t .
	\end{align}
\end{prop}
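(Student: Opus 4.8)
The plan is to prove the claim by induction on $t$, carrying through the stronger invariant that the iterate produced by \cref{eq:ICL-AF-update} solves \emph{all} of the first $t$ tasks exactly, i.e. $y_{:t}=X_{:t}^\top\theta^t_{\textnormal{ICL}}$, and then arguing that this invariant is equivalent to $\theta^t_{\textnormal{ICL}}$ being a minimizer of \ref{eq:ICL-AF}. The equivalence is the conceptual starting point. Under \cref{assumption:linear-unit-norm} the true model $\theta^*$ is feasible for \ref{eq:ICL-AF} and attains objective value $0$, so the optimal value of \ref{eq:ICL-AF} is $0$; hence any solution simultaneously satisfies the constraints $y_{:t-1}=X_{:t-1}^\top\theta^t_{\textnormal{ICL}}$ and drives the current residual to zero, $y_t=x_t^\top\theta^t_{\textnormal{ICL}}$, which together give $y_{:t}=X_{:t}^\top\theta^t_{\textnormal{ICL}}$. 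Conversely any $\theta$ with $y_{:t}=X_{:t}^\top\theta$ is feasible and attains value $0$, hence optimal. So it suffices to verify that the recursion maintains the invariant $y_{:t}=X_{:t}^\top\theta^t_{\textnormal{ICL}}$, the inductive hypothesis being $y_{:t-1}=X_{:t-1}^\top\theta^{t-1}_{\textnormal{ICL}}$.

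For the inductive step I would first note the update is well defined: since $\rank(X_{:t})=t$, the regressor $x_t$ lies outside $\textnormal{Span}(X_{:t-1})$, so $\overline{x}_t=P_{t-1}x_t\neq 0$ and the denominator $\|\overline{x}_t\|_2^2$ is positive. I would then left-multiply \cref{eq:ICL-AF-update} by $X_{:t-1}^\top$ and invoke the defining properties of the orthogonal projection $P_{t-1}$, namely $P_{t-1}X_{:t-1}=0$ together with symmetry $P_{t-1}^\top=P_{t-1}$. The correction term vanishes because $X_{:t-1}^\top\overline{x}_t=(P_{t-1}X_{:t-1})^\top x_t=0$, so the inductive hypothesis is preserved and $X_{:t-1}^\top\theta^t_{\textnormal{ICL}}=y_{:t-1}$. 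This establishes feasibility, i.e. that the first $t-1$ equations still hold after the update.

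It then remains to check the $t$-th equation. Left-multiplying \cref{eq:ICL-AF-update} by $x_t^\top$ and using idempotence and symmetry of $P_{t-1}$ to write $x_t^\top\overline{x}_t=x_t^\top P_{t-1}x_t=\|P_{t-1}x_t\|_2^2=\|\overline{x}_t\|_2^2$, the correction cancels the residual exactly and yields $x_t^\top\theta^t_{\textnormal{ICL}}=x_t^\top\theta^{t-1}_{\textnormal{ICL}}-(x_t^\top\theta^{t-1}_{\textnormal{ICL}}-y_t)=y_t$. Combined with the feasibility established above, this gives the invariant $y_{:t}=X_{:t}^\top\theta^t_{\textnormal{ICL}}$, closing the induction; the base case $t=1$ is immediate, since $P_0=I_d$, $\theta^0_{\textnormal{ICL}}=0$ and $\|x_1\|_2=1$ give $\theta^1_{\textnormal{ICL}}=y_1x_1$, for which $x_1^\top\theta^1_{\textnormal{ICL}}=y_1$. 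I expect the main subtlety to be not the algebra, which is the same one-step orthogonalized projection computation underlying \cref{theorem:APA-obj-t->0}, but the opening equivalence: one must invoke feasibility of $\theta^*$ to conclude that an ICL solution necessarily solves the previous tasks \emph{exactly}, so that the zero-residual invariant is the correct quantity to propagate through the induction.
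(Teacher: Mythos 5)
Your proof is correct, and it reaches the same formula as the paper while being organized along a genuinely different route. The paper \emph{derives} the update: it parameterizes the feasible set of \ref{eq:ICL-AF} as $\theta^{t-1}_{\textnormal{ICL}} + P_{t-1}a$, $a\in\bbR^d$ (valid because $\theta^{t-1}_{\textnormal{ICL}}$ satisfies the first $t-1$ equations and the range of $P_{t-1}$ is the null space of $X_{:t-1}^\top$), reduces the constrained program to the unconstrained least-squares problem $\min_{a}(\overline{y}_t - \overline{x}_t^\top a)^2$ with $\overline{y}_t := y_t - x_t^\top \theta^{t-1}_{\textnormal{ICL}}$, and exhibits the minimizer $a=\overline{y}_t\,\overline{x}_t/\|\overline{x}_t\|_2^2$, which yields \cref{eq:ICL-AF-update}. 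You instead \emph{verify} the update: you first characterize the full solution set of \ref{eq:ICL-AF}---under \cref{assumption:linear-unit-norm} the true model $\theta^*$ is feasible with objective value zero, so $\theta$ solves \ref{eq:ICL-AF} if and only if $y_{:t}=X_{:t}^\top\theta$---and then propagate this exact-interpolation invariant through the recursion by the identities $X_{:t-1}^\top P_{t-1}=0$ and $x_t^\top P_{t-1}x_t=\|\overline{x}_t\|_2^2$, with your rank argument supplying $\overline{x}_t\neq 0$. The underlying algebra (symmetry and idempotence of $P_{t-1}$) is the same in both proofs, but the logical structures differ and have complementary virtues. Your route makes explicit the zero-optimal-value fact that the paper leaves implicit---it is precisely what licenses the paper's claim that $\theta^{t-1}_{\textnormal{ICL}}$ is feasible for the step-$t$ constraints, and it also directly explains why the iterates interpolate all seen data, which is reused later (e.g., in the proof of \cref{theorem:ICL-ib}). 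The paper's route, by contrast, shows where the formula comes from and degrades more gracefully: if the current residual could not be driven to zero (say $\overline{x}_t=0$, which \cref{assumption:linear-unit-norm} rules out), the reduced minimization would still produce a well-defined solution, whereas your ``solution iff exact fit on all $t$ tasks'' equivalence would break down. You correctly flag this equivalence as the one step that genuinely needs \cref{assumption:linear-unit-norm} rather than mere algebra.
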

\begin{proof}
	For $t=1$, \ref{eq:ICL-AF} is unconstrained and requires minimizing $ \left(y_1 - x_1^\top \theta \right)^2$. Setting $\theta^1_{\textnormal{ICL}} = y_1 \cdot \frac{x_1}{\| x_1 \|_2^2 }$ minimizes it and gives  \cref{eq:ICL-AF-update}. For $t>1$, we are given $P_{t-1}$ and $\theta^{t-1}_{\textnormal{ICL}}$. Since every feasible point $\theta$ of \ref{eq:ICL-AF} can be written as $\theta^{t-1}_{\textnormal{ICL}} + P_{t-1}a$ for some $a\in \bbR^d$, \ref{eq:ICL-AF} is equivalent to
	\begin{align*}%\label{eq:ICL-AF-a}
		\min_{a \in \bbR^d} \left(y_t - x_t^\top (\theta^{t-1}_{\textnormal{ICL}} + P_{t-1}a) \right)^2 \Leftrightarrow \min_{a \in \bbR^d} \left(\overline{y}_t - \overline{x}_t^\top a \right)^2,
	\end{align*}
	where $\overline{y}_t:=y_t - x_t^\top \theta^{t-1}_{\textnormal{ICL}}$.	Setting $a = \overline{y}_t \cdot \frac{\overline{x}_t}{ \| \overline{x}_t\|_2^2 }$  minimizes the objective and gives \cref{eq:ICL-AF-update}.
\end{proof}
\begin{remark}[Update $P_t$]\label{remark:proj-P}
	Let $U_t$ be a matrix so that $U_{t}^\top U_{t}=I_d$ and the columns of  $X_{:t}$ and  $U_{t}$ span the same subspace. Then $P_t = I_d - U_t U_t^\top$. Thus, to update $P_t$ it suffices to update $U_t$. We can update $U_t$ via \textit{Gram-Schmidt Orthogonalization}: If $t=1$, we set $U_{1}=x_1$; if $t>1$, we obtain $U_{t}$ by attaching to $U_{t-1}$ a new column vector $\tilde{x}_t:= \frac{P_{t-1} x_t}{\| P_{t-1} x_t \|_2}$. % This is the strategy used in \ref{eq:OGD-AF} and \ref{eq:ORFit}. % While this process might be numerically unstable, it can be stabilized via a simple variant (\textit{modified Gram-Schmidt}) \cite[Section 5.2.8]{Golub-2013}. 
	% A similar orthogonalization idea is considered in \cite{Sankaran-1997} for adaptive filtering, and the recursive update of \cite{Sankaran-1997} delivers the solution of \ref{eq:APA} \cite[Problem III.44]{Sayed-2008book}. 
    Alternatively, we can update $U_t$ via  Incremental \textit{Singular Value Decomposition} (SVD) \cite{Peng-arXiv2024}. Finally, we can also update $P_t$ directly (this will be useful for the sequel):
	%As will be useful for the sequel, we show that we can update $P_t$ directly:%  (and it will be useful for the sequel), via:
	\begin{align}\label{eq:update-Pt}
		P_t = I_d - U_t U_t^\top = I_d - U_{t-1} U_{t-1}^\top - \tilde{x}_t \tilde{x}_t^\top = P_{t-1} - \frac{P_{t-1} x_t x_t^\top P_{t-1}}{x_t^\top P_{t-1} x_t }.
	\end{align}
\end{remark}
\begin{remark}
	Division by $\| \overline{x}_t\|_2^2 $ in \cref{eq:ICL-AF-update} is valid as \cref{assumption:linear-unit-norm} ensures  $\overline{x}_t\neq 0$. The \ref{eq:LMS} update \cref{eq:LMS-SGD} is similar to \cref{eq:ICL-AF-update}, with a difference that \cref{eq:ICL-AF-update} uses the projected data $\overline{x}_t$ while \cref{eq:LMS-SGD} uses original data $x_t$.
\end{remark}

We can now show the implicit bias of the \ref{eq:ICL-AF} update \cref{eq:ICL-AF-update} towards the minimum-norm solution:
\begin{theorem}\label{theorem:ICL-ib}
	Suppose \cref{assumption:linear-unit-norm} holds.  Let $\{\theta^t_{\textnormal{ICL}^*}\}_{t\geq 0}$ be the iterates of \ref{eq:ICL-AF} as defined in \cref{prop:ICL-AF-update}. Then for all $t$ we have $\theta^t_{\textnormal{ICL}}$ equal to the solution $\hat{\theta}^t$ of \ref{eq:min-norm}.	
%	Let $\theta^t_{\textnormal{ICL}}, 	\theta^t_{\textnormal{APA}^\dagger}$, and $\hat{\theta}^t$ be respectively defined in \cref{eq:ICL-AF-update}, \ref{eq:APA2}, and \ref{eq:min-norm}. Then we have $\theta^t_{\textnormal{ICL}} = 	\theta^t_{\textnormal{APA}^\dagger} = \hat{\theta}^t$.
\end{theorem}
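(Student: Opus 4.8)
The plan is to show that the \ref{eq:ICL-AF} iterates obey exactly the same projection recurrence that drives \ref{eq:APA2}, and then invoke \cref{theorem:APA-obj-t->0}. Concretely, writing $z^t := \theta^t_{\textnormal{ICL}} - \theta^*$, I would prove the recurrence $z^t = P_t z^{t-1}$ with initialization $z^0 = -\theta^*$. Since this is identical to the recurrence and initialization established for $\theta^t_{\textnormal{APA}^\dagger}$ in the proof of \cref{theorem:APA-obj-t->0}, the telescoping identity $z^t = \prod_{i=1}^t P_i z^0 = P_t z^0$ (which uses $P_i P_{i-1} = P_i$) gives $z^t = \hat{z}^t$, hence $\theta^t_{\textnormal{ICL}} = \hat\theta^t$.

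The first ingredient is a feasibility fact carried by induction: for every $t\ge 0$ we have $P_t z^t = z^t$, i.e. $z^t \in \textnormal{Span}(X_{:t})^\perp$. This holds because \cref{prop:ICL-AF-update} guarantees that $\theta^t_{\textnormal{ICL}}$ solves \ref{eq:ICL-AF} at task $t$, and under \cref{assumption:linear-unit-norm} the system $y_{:t} = X_{:t}^\top \theta$ is consistent, so the optimal value of \ref{eq:ICL-AF} is $0$ and $\theta^t_{\textnormal{ICL}}$ in fact satisfies all $t$ equations $y_{:t} = X_{:t}^\top \theta^t_{\textnormal{ICL}}$. Subtracting $y_{:t} = X_{:t}^\top \theta^*$ gives $X_{:t}^\top z^t = 0$, which is exactly $P_t z^t = z^t$ (with $t=0$ trivial as $P_0 = I_d$).

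The second ingredient is the computation itself. Starting from \cref{eq:ICL-AF-update} and using $y_t = x_t^\top \theta^*$ to replace $x_t^\top \theta^{t-1}_{\textnormal{ICL}} - y_t$ by $x_t^\top z^{t-1}$, I obtain $z^t = z^{t-1} - \frac{x_t^\top z^{t-1}}{\|\overline{x}_t\|_2^2}\,\overline{x}_t$, where $\overline{x}_t = P_{t-1}x_t$ and $\|\overline{x}_t\|_2^2 = x_t^\top P_{t-1} x_t$ by symmetry and idempotence of $P_{t-1}$. On the other hand, applying the rank-one update \cref{eq:update-Pt} for $P_t$ to $z^{t-1}$ and then substituting the inductive feasibility $P_{t-1} z^{t-1} = z^{t-1}$ (which also yields $x_t^\top P_{t-1} z^{t-1} = x_t^\top z^{t-1}$) produces the identical expression $P_t z^{t-1} = z^{t-1} - \frac{x_t^\top z^{t-1}}{x_t^\top P_{t-1} x_t}\,\overline{x}_t$. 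Matching the two confirms $z^t = P_t z^{t-1}$.

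I expect the main obstacle to be the bookkeeping around feasibility rather than any deep idea: one must notice that the ICL update coincides with the clean projection recurrence only after using that $z^{t-1}$ already lies in $\textnormal{Span}(X_{:t-1})^\perp$, which is what permits interchanging $x_t^\top z^{t-1}$ with $x_t^\top P_{t-1} z^{t-1}$ and what collapses the rank-one correction in \cref{eq:update-Pt} to the single term appearing in \cref{eq:ICL-AF-update}. Once the recurrence $z^t = P_t z^{t-1}$ is in hand, the conclusion is immediate from \cref{theorem:APA-obj-t->0}, since \ref{eq:ICL-AF} and \ref{eq:APA2} then share the same recurrence, the same initialization, and therefore the same limit \ref{eq:min-norm}.
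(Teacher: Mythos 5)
Your proof is correct, but it takes a genuinely different route from the paper's. The paper proves \cref{theorem:ICL-ib} directly: since by construction $y_{:t}=X_{:t}^\top\theta^t_{\textnormal{ICL}}$, one can write $\theta^t_{\textnormal{ICL}}=\hat{\theta}^t+P_t a$, and the paper then shows by induction that $\theta^t_{\textnormal{ICL}}\in\textnormal{Span}(X_{:t})$ (each iterate is a linear combination of $\theta^{t-1}_{\textnormal{ICL}}$ and $\overline{x}_t=x_t-(I_d-P_{t-1})x_t$, both of which lie in $\textnormal{Span}(X_{:t})$), which forces $a=0$ because $\hat{\theta}^t\in\textnormal{Span}(X_{:t})$ while $P_t$ projects onto its orthogonal complement. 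You instead identify the \ref{eq:ICL-AF} update \cref{eq:ICL-AF-update} with the projection recurrence $z^t=P_t z^{t-1}$ and then reuse the telescoping argument $\prod_{i=1}^t P_i = P_t$ from \cref{theorem:APA-obj-t->0}. Your algebra is sound: $\|\overline{x}_t\|_2^2=x_t^\top P_{t-1}x_t$ by symmetry and idempotence of $P_{t-1}$; the feasibility fact $P_{t-1}z^{t-1}=z^{t-1}$ follows from \cref{prop:ICL-AF-update} together with consistency of $y_{:t}=X_{:t}^\top\theta$ under \cref{assumption:linear-unit-norm} (and could even be folded into the induction, since $z^t=P_tz^{t-1}$ gives $P_tz^t=z^t$ by idempotence, with the base case $P_0=I_d$ trivial); and indeed $\hat{z}^t=P_tz^0$ since $\hat{\theta}^t=(I_d-P_t)\theta^*$. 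What your route buys: it establishes the stronger iterate-level identity $\theta^t_{\textnormal{ICL}}=\theta^t_{\textnormal{APA}^\dagger}$ for every $t$, which the paper only obtains indirectly by combining \cref{theorem:ICL-ib,theorem:APA-obj-t->0}, and it makes transparent why \ref{eq:ORFit} is literally the same update. What the paper's route buys: it is shorter and self-contained, needing neither the recurrence, nor the rank-one update \cref{eq:update-Pt}, nor \cref{theorem:APA-obj-t->0} --- only feasibility, a span-membership induction, and the uniqueness characterization of the \ref{eq:min-norm} solution.
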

\begin{proof} 
    By construction, we have $y_{:t}= X_{:t}^\top \theta^t_{\textnormal{ICL}}$. Thus $\theta^t_{\textnormal{ICL}}= \hat{\theta}^t + P_t a$ for some $a\in \bbR^d$. Since $P_tX_{:t}=0$ and $\hat{\theta}^t\in \textnormal{Span}(X_{:t})$, it suffices to prove $\theta^t_{\textnormal{ICL}}\in \textnormal{Span}(X_{:t})$, as this implies $a=0$. 
    
    This is true for $t=1$, as $\theta^1_{\textnormal{ICL}} = y_1 \cdot \frac{x_1}{\| x_1 \|_2^2 }\in \textnormal{Span}(x_1)$. Assume, inductively, that $\theta^{t-1}_{\textnormal{ICL}}\in \textnormal{Span}(X_{:t-1})$. Since $\theta^t_{\textnormal{ICL}}$ is a linear combination of $\theta^{t-1}_{\textnormal{ICL}}$ and $\overline{x}_t$, it remains to prove $\overline{x}_t\in \textnormal{Span}(X_{:t})$. This is indeed the case, as $(I_d - P_{t-1})$  is the projection onto $\textnormal{Span}(X_{:t-1})$ and $\overline{x}_t=x_t - (I_d - P_{t-1}) x_t$.
\end{proof} 
While \cref{theorem:ICL-ib,theorem:APA-obj-t->0} imply the equivalence between \ref{eq:ICL-AF} and \ref{eq:APA2}, the connection between \ref{eq:ICL-AF} and \ref{eq:APA} is less direct, as the latter formulation uses a memory buffer rather than all data. That said, a memory buffer version of \ref{eq:ICL-AF} can be derived, which will be equivalent to \ref{eq:APA}. Finally, even though the formulation of \ref{eq:ICL-AF} requires all data, the actual implementation in \cref{eq:ICL-AF-update} does not, and it needs only to maintain a projection matrix $P_t$; the algorithms we explore later all have this feature.

\myparagraph{Gradient Projection for Linear Models} The methods of \textit{Orthogonal Gradient Descent} (OGD) \cite{Farajtabar-AISTATS2020} and  \textit{Orthogonal Recursive Fitting} (ORFit) \cite{Min-CDC2022} specialized for linear models are given as: 
\begin{align}
	\theta^t &= \theta^{t-1} -  \gamma_t  \cdot P_{t-1} \cdot x_t \big(x_t^\top \theta^{t-1} - y_t \big), \label{eq:OGD-AF} \tag{OGD} \\ 
	\theta^t &= \theta^{t-1} -  \frac{1}{x_t^\top P_{t-1} x_t}  \cdot P_{t-1} \cdot x_t \big(x_t^\top \theta^{t-1} - y_t \big). \label{eq:ORFit} \tag{ORFit}
\end{align}
In \ref{eq:OGD-AF}, $\gamma_t $ is the stepsize, $x_t (x_t^\top \theta^{t-1} - y_t)$ is a gradient of $\left(y_t - x_t^\top \theta \right)^2 / 2 $, and $P_{t-1}$ projects the gradient onto $\textnormal{Span}(X_{:t})^\perp$.  \ref{eq:OGD-AF} entails no forgetting as it ensures  $\theta^t$ and $\theta^{t-1}$ to make the same prediction about a previous input $x_i$: we have $x_i^\top P_{t-1}= 0$, so $x_i^\top \theta^t = x_i^\top  \theta^{t-1}$. 
% has no impact on what has been learned from past samples $x_i$ ($i<t$): we have $x_i^\top P_{t-1}= 0$ and so $x_i^\top \theta^t = x_i^\top  \theta^{t-1}$; namely, both models $\theta^t$ and $\theta^{t-1}$ make the same prediction about a previous input. 
While this reasoning holds for any stepsize $\gamma_t$, a properly chosen $\gamma_t$ would solve task $t$ better. Indeed, \ref{eq:ORFit} sets $\gamma_t=1/ x_t^\top P_{t-1} x_t$ as this directly decreases the current loss to zero; namely, $x_t^\top \theta^t- y_t=0$. To \textit{prove} it, verify that \ref{eq:ORFit} is identical to the \ref{eq:ICL-AF} update \cref{eq:ICL-AF-update}. We have thus established the connections between \ref{eq:ORFit}, \ref{eq:ICL-AF}, \ref{eq:APA2}, \ref{eq:min-norm}.

\myparagraph{Gradient Projection for Deep Networks} We now extend the philosophy of the above methods for  \textit{deep networks} in a layer-wise manner, which gives the \textit{gradient projection methods} (GP)  \cite{Saha-ICLR2021,Wang-CVPR2021}. Here, we define a deep network of layer $L$ to be a function $f_{\theta}: \bbR^{d_0} \to \bbR^{d_L}$ parameterized by $\theta:=(\Theta_1,\dots,\Theta_L)$ with $\Theta_\ell$ being a $d_{\ell-1}\times d_\ell$ matrix. Moreover, $f_{\theta}$ transforms its input $x$ via the following rule ($f^0_{\theta}(x):=x$):
\begin{align*}%\label{eq:DNN}
	f_{\theta}(x):= f^L_{\theta}(x), \quad  f^\ell_{\theta}(x) := \sigma \left( \left( f^{\ell-1}_{\theta}(x) \right)^\top \Theta_{\ell} \right), \ \forall \ell=1,\dots,L.
\end{align*} 
Above, $\sigma:\bbR \to \bbR$ is nonlinear and often called the \textit{activation function}; we apply $\sigma$ to a given input vector in an entry-wise fashion. %a popular choice of $\sigma$ is the so-called \textit{ReLU activation}, which sets the negative entries of its input to zeros. 
Thus, a deep network $f_\theta$ linearly transforms its input $x$ via $\Theta_\ell$ and then passes it through the nonlinearity $\sigma$,  repeatedly for $L$ times. The vector $f^\ell_{\theta}(x)\in \bbR^{d_\ell}$ is the intermediate output, evaluated at $\theta$ and $x$, and is often called the \textit{output feature} of the $\ell$-the layer.

% and also the \textit{input feature} of the ($\ell+1$)-th layer. 

% ; note that $x^\ell$ implicitly depends on $\Theta_1,\dots,\Theta_{\ell}$. When we consider the input $x_t$ of task $t$, we define $x^\ell_t$ similarly.

% Given data $(x_t,y_t)$ at task $t$, we want to update our previous model $\theta^{t-1}:=(\theta_{L+1}^{t-1},\Theta_L^{t-1},\dots,\Theta_1^{t-1})$ so that it does not enlarge the error on previous samples $x_i$ ($i< t$). To do so, we generalize \ref{eq:OGD-AF} \textit{in a layer-wise manner}. 
For each $\ell$, denote by $P_t^{\ell}\in \bbR^{d_{\ell}\times d_{\ell}}$ the orthogonal projection onto $\textnormal{Span}\big(f^{\ell}_{\theta^1}(x_1),f^{\ell}_{\theta^2}(x_2),\dots,f^{\ell}_{\theta^t}(x_t)\big)^\perp$. In words, $P_t^\ell$ is the projection onto the orthogonal complement of the subspace spanned by the output features of the $\ell$-th layer, which generalizes the definition of $P_t$ in the case of linear models \cref{eq:proj-P}. Note that the output feature $f^{\ell}_{\theta^i}(x_i)$ is evaluated at $\theta^i$, not at the newest model $\theta^t$. Indeed, after task $t$, we might have no access to previous data $x_i$ and could not calculate $f^{\ell}_{\theta^t}(x_i)$.

% these features are evaluated at $\theta = \theta^t$. Since $P_t^0$ is the projection onto $\textnormal{Span}(X_{:t})^\perp$ as used in the linear case, $P_t^{\ell}$ generalizes this definition for deep networks.
To update the previous model $\theta^{t-1}:=(\Theta_L^{t-1},\dots,\Theta_1^{t-1})$, GP takes the following formula:
\begin{align}\label{eq:GP}
	\Theta_\ell^t = \Theta_{\ell}^{t-1} - \gamma_t \cdot P_{t-1}^{\ell-1} \cdot \Delta_t^\ell, \ \forall \ell=1,\dots,L. \tag{GP}
\end{align}
Here, $\gamma_t$ denotes the stepsize, and $\Delta_t^\ell\in \bbR^{d_{\ell-1}\times d_\ell}$ denotes the \textit{update direction} that a standard deep learning algorithm would take (thus, \ref{eq:GP} would be a standard deep learning method if $P_{t-1}^{\ell-1}$ were the identity matrix). For simplicity, we think of $\Delta_t^\ell$ as the partial gradient of some loss function $L( f_{\theta}(x_t) ,y_t )$ with respect to $\Theta_\ell$ evaluated at $\theta=\theta^{t-1}$; and a typical loss function is $L(\hat{y},y)= \| \hat{y} - y\|_2^2$. Then:

% computed based on new data $(x_t,y_t)$, previous model $\theta^{t-1}$, and the loss function $\ell( f_{\theta}(x_t) ,y_t )$ for the current task. Then we have the following result:% \ref{eq:GP} prevents forgetting in the following sense: % \ref{eq:GP} projects the change $\Delta_t^\ell$ towards a direction that is orthogonal to previous features and this prevents (catastrophic) forgetting in the following sense (see also \cite{Saha-ICLR2021,Wang-CVPR2021}):
\begin{theorem}[Lemma 1 \cite{Wang-CVPR2021}]\label{theorem:GP}
    Let $\{\theta^t\}_{t\geq 0}$ be the iterates of \ref{eq:GP}. Fix $i,t$ with $1\leq i < t$. Then, for the past sample $x_i$, the output features of any layer $\ell$ evaluated at $\theta^t$ and $\theta^i$ are the same: namely $f^\ell_{\theta^t} (x_i) = f^\ell_{\theta^i} (x_i)$. In particular, the output features of the final layer are the same: $f_{\theta^t}(x_i)=f_{\theta^i}(x_i)$.
    
    % Then the output features of any layer $\ell$ evaluated at $\theta^t$ on past samples $x_i$ are the same; namely, for any $i,t$ with $1\leq i < t$ we have $f^\ell_{\theta^t} (x_i) = f^\ell_{\theta^i} (x_i)$. 
%	\begin{align}\label{eq:GP-no-forget}
%			f^\ell_{\theta^t} (x_i) = f^\ell_{\theta^i} (x_i).
%	\end{align}
%	In particular, the output features of the final layer are the same: $f_{\theta^t}(x_i)=f_{\theta^i}(x_i)$.
\end{theorem}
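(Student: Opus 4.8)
The plan is to prove the stronger invariant $H(t)$: for every $i\le t$ and every layer $\ell\in\{0,1,\dots,L\}$ one has $f^\ell_{\theta^t}(x_i)=f^\ell_{\theta^i}(x_i)$, by induction on $t$. The asserted statement is the case $i<t$, and the ``in particular'' clause is the case $\ell=L$. The base case $t=1$ and the diagonal case $i=t$ are trivial, and layer $\ell=0$ is immediate because $f^0_\theta(x)=x$ does not depend on $\theta$. So the content lives entirely in the inductive step.

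For the inductive step I would assume $H(t-1)$ and fix some $i\le t-1$. Because $H(t-1)$ already yields $f^\ell_{\theta^{t-1}}(x_i)=f^\ell_{\theta^i}(x_i)$, it suffices to prove the \emph{one-step invariance} $f^\ell_{\theta^t}(x_i)=f^\ell_{\theta^{t-1}}(x_i)$ for all $\ell$; that is, a single \ref{eq:GP} update leaves the features of every previously seen input unchanged. I would establish this by an inner induction on $\ell$. Layer $\ell=0$ is trivial. Assuming $f^{\ell-1}_{\theta^t}(x_i)=f^{\ell-1}_{\theta^{t-1}}(x_i)$ and calling this common vector $g_i$, the recursion $f^\ell_\theta(x)=\sigma\big((f^{\ell-1}_\theta(x))^\top\Theta_\ell\big)$ gives $f^\ell_{\theta^t}(x_i)=\sigma(g_i^\top\Theta^t_\ell)$ and $f^\ell_{\theta^{t-1}}(x_i)=\sigma(g_i^\top\Theta^{t-1}_\ell)$, so it is enough to show $g_i^\top(\Theta^t_\ell-\Theta^{t-1}_\ell)=0$. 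By the \ref{eq:GP} rule this difference equals $-\gamma_t\,P^{\ell-1}_{t-1}\Delta^\ell_t$, so I only need $g_i^\top P^{\ell-1}_{t-1}=0$, i.e. that $g_i$ lies in $\textnormal{Span}\big(f^{\ell-1}_{\theta^1}(x_1),\dots,f^{\ell-1}_{\theta^{t-1}}(x_{t-1})\big)$.

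Here is the crux, and where the two inductions must be coupled. The subspace defining $P^{\ell-1}_{t-1}$ is spanned by features \emph{frozen} at the historical models $\theta^j$, whereas $g_i=f^{\ell-1}_{\theta^{t-1}}(x_i)$ is evaluated at the current $\theta^{t-1}$. These are reconciled by invoking the outer hypothesis $H(t-1)$ at layer $\ell-1$, which states exactly $f^{\ell-1}_{\theta^{t-1}}(x_i)=f^{\ell-1}_{\theta^i}(x_i)$; the right-hand side is precisely the $i$-th spanning vector of that subspace. Hence $g_i$ belongs to the span, $P^{\ell-1}_{t-1}g_i=0$, and the inner step closes, completing both inductions and the proof.

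The main obstacle is exactly this bookkeeping: the projection is built from features stored \emph{at the time each task was learned}, not at the current parameters, so the orthogonality that annihilates the update does not apply to the current features until one identifies them with the frozen ones. The inductions are genuinely interdependent, since preservation of the layer-$(\ell-1)$ feature is what lets me factor the activation and simultaneously match the stored spanning vector before arguing at layer $\ell$. I would also remark that the argument never uses the specific value of $\gamma_t$ nor the form of $\Delta^\ell_t$, directly generalizing the forgetting-free reasoning behind the linear \ref{eq:OGD-AF} case.
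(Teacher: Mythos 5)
Your proof is correct and takes essentially the same route as the paper: an induction over layers in which a \ref{eq:GP} update leaves past features unchanged because the projection $P^{\ell-1}_{t-1}$ annihilates the frozen spanning vector $f^{\ell-1}_{\theta^i}(x_i)$, after the layer-$(\ell-1)$ hypothesis identifies the current feature with that frozen one. Your explicit outer induction on $t$ merely spells out bookkeeping the paper leaves implicit---its chain concludes $f^{\ell}_{\theta^t}(x_i)=f^{\ell}_{\theta^{t-1}}(x_i)$, which requires the statement at time $t-1$ (your $H(t-1)$) to reach $f^{\ell}_{\theta^i}(x_i)$---so your coupling of the two inductions is exactly the glue the paper's terse write-up presumes.
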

\begin{proof}
	% It suffices to consider the special case $i=1$ and $t=2$; the general case follows similarly (with more complicated notations). To prove $f_{\theta^2} (x_1) = f_{\theta^1} (x_1)$, 
    Note that $f^\ell_{\theta^t} (x_i) = f^\ell_{\theta^i} (x_i)$ for $\ell=0$. To do induction on $\ell$, assume $f_{\theta^{t}}^{\ell-1} (x_i)  = f_{\theta^{i}}^{\ell-1} (x_i)$. %Then we have
	%Note that  $f_{\theta^{t}}^0 (x_i)  = x_i = f_{\theta^{t-1}}^0 (x_i)$. For induction, assume $f_{\theta^{t}}^{\ell-1} (x_i)  = f_{\theta^{t-1}}^{\ell-1} (x_i)$. %the input features of layer $\ell $ evaluated at $\theta^t$ and $\theta^{t-1}$ are the same, that is $f_{\theta^{t}}^{\ell-1} (x_i)  = f_{\theta^{t-1}}^{\ell-1} (x_i)$. 
	Then we have $P_{t-1}^{\ell-1} f_{\theta^{i}}^{\ell-1} (x_i)=0$. The inductive hypothesis (i.h.) and \ref{eq:GP} imply
	\begin{align*}
		\left(f^{\ell-1}_{\theta^t}(x_i) \right)^\top \Theta_{\ell}^t \overset{\textnormal{i.h.}}{=} \left(f^{\ell-1}_{\theta^{i}}(x_i) \right)^\top \Theta_{\ell}^t \overset{\textnormal{\ref{eq:GP}}}{=} \left(f^{\ell-1}_{\theta^{i}}(x_i) \right)^\top \Theta_{\ell}^{t-1}.
	\end{align*}
	Feeding them to the nonlinear function $\sigma$ shows  $f_{\theta^{t}}^{\ell} (x_i)  = f_{\theta^{t-1}}^{\ell} (x_i)$. The proof is complete. See \cref{fig:GP}.
\end{proof}
\cref{theorem:GP} shows \ref{eq:GP} does not forget the past tasks. To show \ref{eq:GP} also learns the current task, we assume $P_{t-1}^{\ell-1} \Delta_t^\ell \neq 0$ and that partial gradient $\Delta_t^\ell$ is a \textit{descent direction}, meaning that the standard update $\Theta_\ell^t = \Theta_{\ell}^{t-1} - \gamma_t  \Delta_t^\ell$ with $\gamma_t$ small enough would decrease the loss values. Then note that the projected direction $P_{t-1}^{\ell-1} \Delta_t^\ell$ of \ref{eq:GP} points towards roughly the same direction as $\Delta_t^\ell$ (as their \textit{inner product} is positive), so it is also a descent direction; for a formal argument, use the \textit{Taylor expansion} or see \cite{Wang-CVPR2021}.

That said, we often have $P_{t-1}^{\ell-1}=0$ as the output feature matrix $[f^{\ell-1}_{\theta^1}(x_1),\dots,f^{\ell-1}_{\theta^{t-1}}(x_{t-1})]$ often has full row rank $d_{\ell-1}$ for sufficiently large $t$; in this case we have $P_{t-1}^{\ell-1} \Delta_t^\ell =0$ and \ref{eq:GP} becomes $\Theta_\ell^t = \Theta_{\ell}^{t-1}$, and no parameters will be updated. This issue is often addressed by replacing the feature matrix with its \textit{low-rank} approximation, and the resulting projection matrix (which replaces $P_{t-1}^{\ell-1}$) can also be updated incrementally (e.g., \cite{Peng-arXiv2024}). However, \cref{theorem:GP} no longer holds for this low-rank version of \ref{eq:GP}, depicting a basic trade-off between this practical variant and theoretical non-forgetting guarantees. % , which indicates a basic tension between not forgetting the past and learning the present in deep learning.

\begin{figure}
	\centering
	\begin{tikzpicture}
		% Define the size of the large square
		\def\squareSize{4}
		% Define the size of the subsquares
		\def\subSquareSize{1}

		\draw[thick] (0, 1) rectangle (4, 4);

		\foreach \y in {3} {
			\foreach \x in {0,1,2,3} {
				\fill[cvprblue] (\x*\subSquareSize, \y*\subSquareSize) rectangle ++(\subSquareSize, \subSquareSize);
			}
		}
		\foreach \y in {2} {
			\foreach \x in {1,2,3} {
				\fill[mypurple] (\x*\subSquareSize, \y*\subSquareSize) rectangle ++(\subSquareSize, \subSquareSize);
			}
		}
		\foreach \y in {1} {
			\foreach \x in {2,3} {
				\fill[myred] (\x*\subSquareSize, \y*\subSquareSize) rectangle ++(\subSquareSize, \subSquareSize);
			}
		}

		\node[black,anchor=east,cvprblue] at (10.7,3.5)  {$f_{\theta^{1}}^{\ell} (x_1)=f_{\theta^{2}}^{\ell} (x_1)=f_{\theta^{3}}^{\ell} (x_1) =f_{\theta^{4}}^{\ell} (x_1)$};
		\node[black,anchor=east,mypurple] at (10.7,2.5)  {$f_{\theta^{2}}^{\ell} (x_2)= f_{\theta^{3}}^{\ell} (x_2) =f_{\theta^{4}}^{\ell} (x_2)$};
		\node[black,anchor=east,myred] at (10.7,1.5)  {$f_{\theta^{3}}^{\ell} (x_3) =f_{\theta^{4}}^{\ell} (x_3)$};
		
		\foreach \x in {1,2,3,4} {
			\node[black] at (\x - 0.5,4.8)  {$\theta^{\x}$};
			\draw[-{Stealth[length=2mm]}]  (\x - 0.5, 4.6) -- (\x - 0.5, 4.1);
		}
		
		\foreach \x in {1,2,3} {
%			\node[black] at (\x - 0.5,4.8)  {$\theta_{\x}$};
%			\draw[-{Stealth[length=2mm]}]  (\x - 0.5, 4.6) -- (\x - 0.5, 4.1);
			\node[black] at (-0.8, 4.5-\x) {$x_{\x}$};
			\draw[-{Stealth[length=2mm]}]  (-0.6, 4.5-\x) -- (-0.1, 4.5-\x);
		}
%		\foreach \x in {0,1,2,3} {
%			\node[rotate=90] at (-0.8, \x + 0.5) {$\theta_{\x}$};
%		}
%		
		% Draw the border around the large square
	\end{tikzpicture}
	\caption{\ref{eq:GP} ensures that, for any layer $\ell$ and any $t>i$, the output features $f_{\theta^{t}}^{\ell} (x_i)$ is invariant, even though $\theta^t$ changes with $t$. This is indicated by the same color in each row and proved in \cref{theorem:GP}.\label{fig:GP} }
\end{figure}
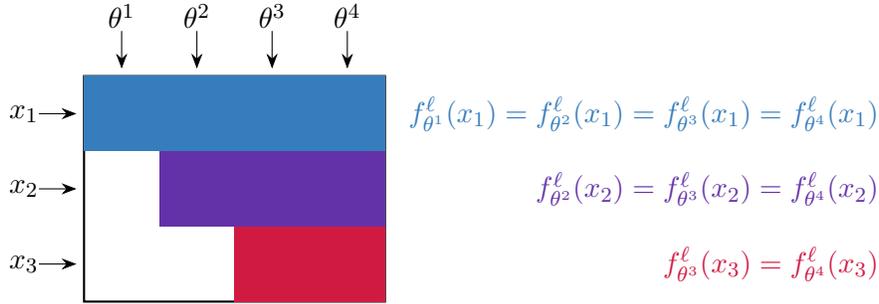

\myparagraph{Nonlinear Models and Linearization} We now consider extensions for nonlinear models by means of \textit{linearization}. Assume $y_t=f_t(\theta^*)$ for some nonlinear function $f_t$. %Write $f_t(\theta):=f(\theta; x_t)$. 
To find $\theta^*$, we might consider a general form of \ref{eq:ICL-AF} to minimize $\left(y_t - f_t(\theta) \right)^2$ subject to constraints $ y_i= f_i(\theta)$ ($\forall i<t$). However, this program is difficult to solve, let alone continually. Instead, we could resort to the \textit{first-order Taylor expansion} $\tilde{f}_{i}$ of $f_i$ around the previously learned model $\theta^{i-1}$:
\begin{align}\label{eq:linearize-f}
	\tilde{f}_{i}(\theta):=f_i(\theta^{i-1})+ \nabla f_i(\theta^{i-1})^\top(\theta - \theta^{i-1}). 
\end{align}
Observe that we have $\tilde{f}_{i}(\theta)\approx f_i(\theta)$ for any $\theta$ that is close enough to $\theta^{i-1}$, while if $f_i$ is linear with $f_i(\theta)=x_i^\top \theta$ then we precisely have $\tilde{f}_{i}(\theta)= f_i(\theta)$ for any $\theta$. With $\tilde{f}_{i}$ we can then consider:
% and consider a general form of \ref{eq:ICL-AF}: 
%\begin{align*}
%	\min_{\theta \in \bbR^d} \left(y_t - f_t(\theta) \right)^2 \quad \textnormal{s.t.} \quad y_i= f_i(\theta),\ i=1,\dots,t-1.
%\end{align*}
%This is difficult to solve. We resort to the Taylor expansion $\tilde{f}_{i}(\theta):=f_i(\theta^{i-1})+ \nabla f_i(\theta^{i-1})^\top(\theta - \theta^{i-1})$ of $f_i$ around the previous model $\theta^{i-1}$ and replace $f_i$ with $\tilde{f}_i$:
\begin{align*}
	\min_{\theta \in \bbR^d} \left(y_t - \tilde{f}_t(\theta) \right)^2 \quad \textnormal{s.t.} \quad y_i= \tilde{f}_i(\theta),\ i=1,\dots,t-1.
\end{align*}
This linearized problem is now almost identical to \ref{eq:ICL-AF} and can be solved similarly (cf. \cref{prop:ICL-AF-update}). Also we can extend \ref{eq:APA2} and \ref{eq:min-norm} for the nonlinear case here. Empirically, this idea of linearization, combined with engineering efforts, works well for deep networks in some settings \cite{Achille-CVPR2021}. Theoretically, we can prove results similar to \cref{theorem:APA-obj-t->0,theorem:ICL-ib} for the linearized problems (cf. \cite[Theorem 5]{Min-CDC2022}). On the other hand, deriving such guarantees for the original nonlinear problems remains challenging.

\myparagraph{Summary} For linear models, \ref{eq:APA2}, \ref{eq:ICL-AF}, and \ref{eq:ORFit} all converge to the same solution (of \ref{eq:min-norm}). These methods are extended for deep networks and nonlinear models, with non-forgetting guarantees (\cref{theorem:GP}). The setting of a single sample per task allows \ref{eq:ORFit} to find an optimal stepsize for \ref{eq:OGD-AF}, while no such stepsize is available in the case of multiple samples per task. That said, \ref{eq:ICL-AF} can be extended for the latter case. Our derivations in the section require all tasks to share a global minimizer (that is $\theta^*$), so that the constraints are feasible; we next explore methods that do not impose this assumption.

\section{Recursive Least-Squares (RLS)}
\myparagraph{Notations and Setup} We need the following version of the \textit{Woodbury matrix identity}:
\begin{align}\label{eq:woodbury}
	(\Phi + XB^{-1} X^\top)^{-1} = \Phi^{-1} - \Phi^{-1}X ( B + X^\top \Phi^{-1} X  )^{-1}  X^\top  \Phi^{-1} .
\end{align}
Above, $\Phi, X, B$ are matrices of compatible sizes with $\Phi,B$ positive definite. This formula can be verified by showing that the right-hand side of \cref{eq:woodbury} multiplying $\Phi + XB^{-1} X^\top$ gives the identity matrix. 

\myparagraph{RLS} Here we consider the \textit{exponentially-weighted} version of RLS:% that solves the following problem at task $t$:
\begin{align}\label{eq:RLS-AF}
	\theta^t\in \argmin_{\theta \in \bbR^d} \lambda \cdot \| \theta \|_2^2 + \sum_{i=1}^{t} \frac{\left( y_i - x_i^\top \theta \right)^2}{\beta^i}. \tag{RLS-($\beta,\lambda$)}
\end{align}
% \ref{eq:RLS-AF} has two hyperparameters, $\beta$ and $\lambda>0$. 
\ref{eq:RLS-AF}  has two positive hyperparameters, $\beta>0$ and $\lambda>0$. The residual $( y_i - x_i^\top \theta )^2$ is weighted by $1/\beta^i$. The regularization term $\lambda\cdot \| \theta \|_2^2$ %indicates that is a regularization-based method \cite{Kirkpatrick-2017} and 
ensures that \ref{eq:RLS-AF} has a unique solution.
% weights $( y_i - x_i^\top \theta )^2$ by $1/\beta^i$ and uses the term $\lambda\cdot \| \theta \|_2^2$ to ensure a unique solution.
% The residual $( y_i - x_i^\top \theta )^2$ is weighted by $1/\beta^i$ and the term $\lambda\cdot \| \theta \|_2^2$ ensures has a unique solution.
\begin{remark}[RLS  Versus Continual Learning]\label{remark:RLS-CL}
	In adaptive filtering, $\beta$ is typically restricted to lie in $(0,1]$ and is called the \textit{forgetting factor}. In this case, $1/\beta^i$ increases with $i$, thus \ref{eq:RLS-AF} gradually downweights and eventually forgets the past samples. To learn continually without forgetting, one might set
	$\beta\geq 1$ and emphasize the past more. We will consider the case $\beta>0$ and the limit case $\beta\to 0$.
\end{remark}

We now derive an update formula of $\theta^t$ for \ref{eq:RLS-AF} given a new sample $(x_t,y_t)$. Let $B_t$ be the $t\times t$ diagonal matrix with its $i$-th diagonal being $\beta^i$, that is $B_t:=\diag(\beta,\beta^2,\dots,\beta^t)$. Define $\Phi_0:= I_d / \lambda$ and
\begin{align}\label{eq:def-Phi}
	\Phi_t:= \left(  \lambda \cdot I_d + X_{:t} B_t^{-1} X_{:t}^\top \right)^{-1}, \ \forall t \geq 1.  % =\left(  \lambda \cdot I_d + \sum_{i=1}^{t}  \frac{x_i x_i^\top}{\beta^i} \right)^{-1}, \ \forall t \geq 1. 
\end{align}
The $d\times d$ matrix $\Phi_t$ can be viewed as the inverse of the \textit{Hessian matrix} of \ref{eq:RLS-AF}. We have: 
\begin{prop}\label{prop:RLS-AF-update}
	With $\theta^0:=0$, we can update the solution $\theta^{t-1}$ to \ref{eq:RLS-AF} by ($\forall t\geq 1$)
	\begin{align}\label{eq:RLS-update-theta}
		\theta^t = \theta^{t-1} - \frac{1}{ \beta^t + x_t^\top \Phi_{t-1} x_t}    \cdot \Phi_{t-1} \cdot x_t 
		\left(x_t^\top \theta^{t-1} - y_t \right).
	\end{align}
\end{prop}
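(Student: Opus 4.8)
The plan is to first derive a closed form for the unique minimizer of \ref{eq:RLS-AF}, and then convert the difference between the consecutive minimizers $\theta^{t-1}$ and $\theta^t$ into the claimed rank-one update by invoking the Woodbury identity \cref{eq:woodbury}. I would begin by noting that the objective $\lambda\|\theta\|_2^2 + \sum_{i=1}^t (y_i - x_i^\top\theta)^2/\beta^i$ is strongly convex (so $\theta^t$ is unique) and setting its gradient to zero. Using $\sum_{i=1}^t x_i x_i^\top/\beta^i = X_{:t}B_t^{-1}X_{:t}^\top$ and $\sum_{i=1}^t x_i y_i/\beta^i = X_{:t}B_t^{-1}y_{:t}$, the stationarity condition reads $(\lambda I_d + X_{:t}B_t^{-1}X_{:t}^\top)\theta^t = X_{:t}B_t^{-1}y_{:t}$, which by \cref{eq:def-Phi} gives the closed form $\theta^t = \Phi_t X_{:t}B_t^{-1}y_{:t}$ (and likewise $\theta^{t-1} = \Phi_{t-1}X_{:t-1}B_{t-1}^{-1}y_{:t-1}$). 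This also confirms the text's remark that $\Phi_t$ is the inverse Hessian.

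Next I would isolate the effect of the new sample through two elementary recursions: $\Phi_t^{-1} = \Phi_{t-1}^{-1} + x_t x_t^\top/\beta^t$ (since $X_{:t}B_t^{-1}X_{:t}^\top = X_{:t-1}B_{t-1}^{-1}X_{:t-1}^\top + x_t x_t^\top/\beta^t$), and $X_{:t}B_t^{-1}y_{:t} = X_{:t-1}B_{t-1}^{-1}y_{:t-1} + x_t y_t/\beta^t$. Applying \cref{eq:woodbury} with the substitutions $\Phi\mapsto\Phi_{t-1}^{-1}$, $X\mapsto x_t$, and $B\mapsto\beta^t$ turns the first recursion into the rank-one update $\Phi_t = \Phi_{t-1} - \frac{\Phi_{t-1}x_t x_t^\top\Phi_{t-1}}{\beta^t + x_t^\top\Phi_{t-1}x_t}$. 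The one genuinely useful simplification I would record is the ``gain'' identity $\Phi_t x_t = \frac{\beta^t}{\beta^t + x_t^\top\Phi_{t-1}x_t}\Phi_{t-1}x_t$, obtained by right-multiplying this update by $x_t$ and collecting the resulting scalar factor.

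Finally, I would substitute the sum-splitting into $\theta^t = \Phi_t X_{:t}B_t^{-1}y_{:t}$ to write $\theta^t = \Phi_t X_{:t-1}B_{t-1}^{-1}y_{:t-1} + (y_t/\beta^t)\Phi_t x_t$. For the first term I would use the rank-one update together with $\Phi_{t-1}X_{:t-1}B_{t-1}^{-1}y_{:t-1} = \theta^{t-1}$ to get $\theta^{t-1} - \frac{\Phi_{t-1}x_t(x_t^\top\theta^{t-1})}{\beta^t + x_t^\top\Phi_{t-1}x_t}$; for the second term I would apply the gain identity so that the explicit $\beta^t$ cancels. Adding the two pieces collapses $x_t^\top\theta^{t-1}$ and $-y_t$ into the single residual $x_t^\top\theta^{t-1} - y_t$, producing exactly \cref{eq:RLS-update-theta}. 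The base case $t=1$ is handled uniformly, since $\Phi_0 = I_d/\lambda$ makes $\Phi_0^{-1} = \lambda I_d$ consistent with the empty-sum convention and with $\theta^0 = 0$. I expect no hard estimate here; the only real obstacle is bookkeeping, namely keeping the scalar denominator $\beta^t + x_t^\top\Phi_{t-1}x_t$ consistent across the Woodbury step and the gain identity so the $\beta^t$ factors cancel cleanly.
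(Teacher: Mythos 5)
Your proposal is correct and takes essentially the same route as the paper: both start from the normal-equations closed form $\theta^t = \Phi_t X_{:t} B_t^{-1} y_{:t}$, obtain the rank-one recursion for $\Phi_t$ via the Woodbury identity (the paper's \cref{eq:RLS-update-Phi}), and then rearrange. Your explicit gain identity $\Phi_t x_t = \frac{\beta^t}{\beta^t + x_t^\top \Phi_{t-1} x_t}\,\Phi_{t-1} x_t$ is merely a different bookkeeping of the same cancellation the paper carries out by factoring $\theta^t = \bigl(I_d - \frac{\Phi_{t-1} x_t x_t^\top}{\beta^t + x_t^\top \Phi_{t-1} x_t}\bigr)\bigl(\theta^{t-1} + y_t \cdot \Phi_{t-1} x_t / \beta^t\bigr)$.
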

\begin{remark}[Update $\Phi_t$]\label{remark:Phi-update}
	Since $\Phi_t^{-1}=\Phi_{t-1}^{-1} + x_t x_t^\top / \beta^t$, applying the Woodbury matrix identity \cref{eq:woodbury} gives
	% Note that $\Phi_t$ is positive definite and we have $\Phi_t^{-1}=\Phi_{t-1}^{-1} + x_t x_t^\top / \beta^t$. This implies the following update formula for $\Phi_t$ (using the \textit{Sherman–Morrison formula}):
	\begin{align}\label{eq:RLS-update-Phi}
		\Phi_t= \Phi_{t-1} - \frac{\Phi_{t-1} x_t x_t^\top \Phi_{t-1} }{\beta^t + x_t^\top \Phi_{t-1} x_t}.
	\end{align}
\end{remark}
\begin{proof}
	Note that $\theta^t=\Phi_t \sum_{i=1}^{t} y_i \cdot x_i / \beta^i$ and $\theta^{t-1}=\Phi_{t-1} \sum_{i=1}^{t-1} y_i \cdot x_i / \beta^i$. It then follows from \cref{eq:RLS-update-Phi} that
	\begin{align*}
		\theta^t &= \left( \Phi_{t-1} - \frac{\Phi_{t-1} x_t x_t^\top \Phi_{t-1} }{\beta^t + x_t^\top \Phi_{t-1} x_t} \right) \sum_{i=1}^{t} y_i \cdot x_i / \beta^i \\ 
		&=\left( I_d - \frac{\Phi_{t-1} x_t x_t^\top  }{\beta^t + x_t^\top \Phi_{t-1} x_t} \right) (\theta_{t-1} + y_t \cdot \Phi_{t-1} x_t / \beta^t ).
	\end{align*}
	Rearranging the terms gives \cref{eq:RLS-update-theta} and therefore finishes the proof.
\end{proof}
By definition $\Phi_t$ is positive definite, hence, from \cref{eq:RLS-update-Phi}, we see a \textit{decreasing} sequence of positive definite matrices, $\Phi_0\succeq \Phi_1 \succeq \cdots \succeq \Phi_t \succ 0$. This implies $\Phi_t$ eventually converges and so does \ref{eq:RLS-AF}.

\myparagraph{RLS and Orthogonal Projection} Using the Woodbury matrix identity \cref{eq:woodbury}, rewrite  $\Phi_t$ as
% In fact, $\Phi_t$ defined in \cref{eq:def-Phi} has a strong connection to the orthogonal projection matrix $P_t$ onto $\textnormal{Span}(X_{:t})^\perp$. One verifies (using the \textit{Woodbury matrix identity}):
\begin{align*}
	\Phi_t=  \frac{1}{\lambda} \bigg( I_d - X_{:t} \Big( \lambda \cdot B_t +  X_{:t}^\top X_{:t} \Big)^{-1} X_{:t}^\top \bigg). % \frac{1}{\lambda} \left( \diag(\beta ...) - X_{:t}\left(  + X_{:t}^\top X_{:t} \right)^{-1} X_{:t}  \right) 
\end{align*}
In the limit $\beta\to 0$ (and thus $B_t\to 0$), \hyperref[eq:RLS-AF]{\textcolor{cvprblue}{RLS-($0,\lambda$)}} would put infinitely large weights on the error terms $(y_i - x_i^\top \theta)^2$, and become equivalent to minimizing $ \lambda \cdot \| \theta \|_2^2$ subject to constraints $y_{:t}=X_{:t}^\top \theta$, that is equivalent to \ref{eq:min-norm} up to scale $\lambda$. This is a hint to the following algorithmic equivalence:
\begin{lemma}\label{lemma:RLS=ICL}
	Let $P_{0} := I_d$ and $\{ \theta^t_{\textnormal{ICL}} \}_{t\geq 0}$ be defined in \cref{prop:ICL-AF-update}. Let $\{(\theta^t,\Phi_t)\}_{t\geq 0}$ be the iterates of \ref{eq:RLS-AF} as defined in \cref{prop:RLS-AF-update,remark:Phi-update}. If $\beta = 0$, then $\theta^t_{\textnormal{ICL}}=\theta^t$ and $P_t= \lambda \Phi_t$.
\end{lemma}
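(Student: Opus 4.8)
The plan is to prove both identities, $\theta^t_{\textnormal{ICL}}=\theta^t$ and $P_t=\lambda\Phi_t$, simultaneously by induction on $t$, working directly with the recursive update formulas rather than with the closed form \cref{eq:def-Phi} of $\Phi_t$. The base case $t=0$ is immediate: by the stated initializations $P_0=I_d$ and $\Phi_0=I_d/\lambda$, so $\lambda\Phi_0=I_d=P_0$, while $\theta^0_{\textnormal{ICL}}=0=\theta^0$. For the inductive step I would assume $P_{t-1}=\lambda\Phi_{t-1}$ and $\theta^{t-1}_{\textnormal{ICL}}=\theta^{t-1}$ and establish the two claims at index $t$ in turn.

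For the projection identity, first set $\beta^t=0$ in the $\Phi_t$-recursion \cref{eq:RLS-update-Phi}, so that $\Phi_t=\Phi_{t-1}-\Phi_{t-1}x_tx_t^\top\Phi_{t-1}/(x_t^\top\Phi_{t-1}x_t)$. I would then substitute the inductive hypothesis in the form $\Phi_{t-1}=P_{t-1}/\lambda$ and multiply through by $\lambda$. All powers of $\lambda$ cancel in the rank-one correction term, leaving $\lambda\Phi_t=P_{t-1}-P_{t-1}x_tx_t^\top P_{t-1}/(x_t^\top P_{t-1}x_t)$, which is precisely the $P_t$-update \cref{eq:update-Pt}. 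Hence $P_t=\lambda\Phi_t$.

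For the parameter identity, I would plug $\beta^t=0$ into the $\theta^t$-recursion \cref{eq:RLS-update-theta} and use the relation $\Phi_{t-1}=P_{t-1}/\lambda$ together with $\overline{x}_t=P_{t-1}x_t$. Two observations do the work: $\Phi_{t-1}x_t=\overline{x}_t/\lambda$, and, because $P_{t-1}$ is a symmetric idempotent, $x_t^\top\Phi_{t-1}x_t=\tfrac1\lambda x_t^\top P_{t-1}x_t=\tfrac1\lambda\|\overline{x}_t\|_2^2$. Substituting these, the factors of $\lambda$ cancel and the RLS step collapses to $\theta^t=\theta^{t-1}-\overline{x}_t(x_t^\top\theta^{t-1}-y_t)/\|\overline{x}_t\|_2^2$; invoking $\theta^{t-1}=\theta^{t-1}_{\textnormal{ICL}}$ this is exactly the ICL update \cref{eq:ICL-AF-update}, so $\theta^t=\theta^t_{\textnormal{ICL}}$.

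The only delicate point, which I would flag as the main obstacle, is the interpretation of $\beta=0$: the closed form \cref{eq:def-Phi} involves $B_t^{-1}$ and degenerates as $\beta\to0$, so the statement must be read off the update recursions \cref{eq:RLS-update-Phi,eq:RLS-update-theta} with $\beta^t$ set to zero, which remain perfectly well defined. It then remains to check that every denominator is nonzero, and this follows from \cref{assumption:linear-unit-norm}: since $\rank(X_{:t})=t$ forces $x_t\notin\textnormal{Span}(X_{:t-1})$, we have $\overline{x}_t=P_{t-1}x_t\neq0$, whence $x_t^\top P_{t-1}x_t=\|\overline{x}_t\|_2^2>0$ and equivalently $x_t^\top\Phi_{t-1}x_t>0$. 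With this justified, what remains is the routine $\lambda$-bookkeeping sketched above.
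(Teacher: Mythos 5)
Your proof is correct and takes essentially the same route as the paper: both argue by induction that the initializations match ($P_0=\lambda\Phi_0$, $\theta^0_{\textnormal{ICL}}=\theta^0$) and that the $\beta=0$ recursions \cref{eq:RLS-update-Phi} and \cref{eq:RLS-update-theta} coincide with \cref{eq:update-Pt} and \cref{eq:ICL-AF-update} respectively, which the paper simply asserts and you verify via the $\lambda$-bookkeeping. Your added observations---that $\beta=0$ must be interpreted through the update recursions rather than the degenerate closed form \cref{eq:def-Phi}, and that \cref{assumption:linear-unit-norm} guarantees the denominators $x_t^\top P_{t-1}x_t=\|\overline{x}_t\|_2^2>0$---are sound elaborations of points the paper leaves implicit.
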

\begin{proof}
	Since $P_0= \lambda \Phi_0$ and their updates in  \cref{eq:update-Pt} and \cref{eq:RLS-update-Phi} are identical, we get $P_t= \lambda \Phi_t$ for all $t$. Since $\theta^0_{\textnormal{ICL}}=\theta^0$ and their updates in \cref{eq:ICL-AF-update} and \cref{eq:RLS-update-theta} are also identical, we get $\theta^t_{\textnormal{ICL}}=\theta^t$ for all $t$ as well.
\end{proof}
%\begin{itemize}
%	\item \hyperref[eq:RLS-AF]{\textcolor{cvprblue}{RLS-($0,\lambda$)}} would minimize $ \lambda \cdot \| \theta \|_2^2$ subject to constraints $y_i=x_i^\top \theta$, equivalent to \ref{eq:min-norm};
%	\item The update of $\Phi_t$ in \cref{eq:RLS-update-Phi} becomes identical to the update of $P_t$ in \cref{eq:update-Pt}. Also, $\Phi_t$ becomes identical to the orthogonal projection $P_t=I_d - X_{:t} (X_{:t}^\top X_{:t})^{-1} X_{:t}^\top$ (up to scale $\lambda$);
%	\item The update of $\theta^t$ in \cref{eq:RLS-update-theta} is not affected by the scale of $\Phi_{t-1}$, so it is equivalent to \ref{eq:OGD-AF} with stepsize $\gamma_t=1/ x_t^\top P_{t-1} x_t$, that is equivalent to the \ref{eq:ICL-AF} update \cref{eq:ICL-AF-update} or \ref{eq:ORFit}.
%\end{itemize}
Thus, \hyperref[eq:RLS-AF]{\textcolor{cvprblue}{RLS-($0,\lambda$)}} is equivalent to the aforementioned \ref{eq:APA2}-type methods, and \ref{eq:RLS-AF} generalize them. The derivation of \ref{eq:RLS-AF} does not assume a shared solution $\theta^*$  among all tasks  (\cref{assumption:linear-unit-norm}); when such $\theta^*$ does not exist, it computes a (weighted) \textit{average} among the solutions of all tasks (\cref{fig:RLS}). 

% For  their constraints to be feasible, the \ref{eq:APA2} methods assume that an optimal model $\theta^*$ is shared among all tasks (\cref{assumption:linear-unit-norm}). \ref{eq:RLS-AF} does not need this assumption, and when such shared model does not exist, it computes an \textit{average model} among the optimal models of each task (cf. \cref{fig:RLS}). 
% These convey the equivalence between \hyperref[eq:RLS-AF]{\textcolor{cvprblue}{RLS-($0,\lambda$)}} and \ref{eq:OGD-AF}, \ref{eq:ICL-AF}, \ref{eq:ORFit}, and that \ref{eq:RLS-AF} generalize them. %Indeed, \ref{eq:RLS-AF} has no assumptions on data.
% Next we discuss \ref{eq:RLS-AF} and its relation to \ref{eq:GP} in the context of deep networks from a historical perspective.

\tikzset{
	big arrow/.style={
		decoration={markings,mark=at position 1 with {\arrow[scale=0.9,#1]{>}}},
		postaction={decorate},
		shorten >=0.3pt},
	big arrow/.default=blue}
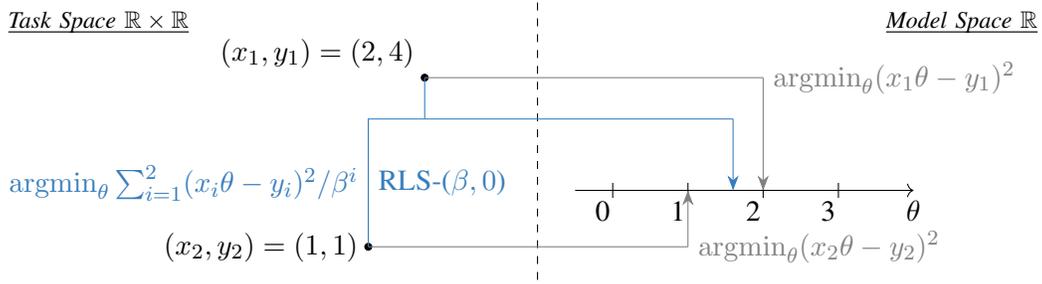
\begin{figure}
	\centering
	\begin{tikzpicture}
		%		[arrow/.style={
			%			->,
			%			%thick,
			%			>=stealth
			%		}]
		\node[black,anchor=east] at (-4.5,2)  {\appallingunderline{\footnotesize{\textit{Task Space} $\bbR\times \bbR$}} };
		
		\fill[black] (-1.5,1.25) circle (1.5pt) node[above left] {$(x_1,y_1)=(2,4)$};
		\fill[black] (-2.25,-1) circle (1.5pt) node[left] {$(x_2,y_2)=(1,1)$};
		
		\draw[dashed] (0,2.25) -- (0,-1.5);
		
		% solution
		\node[black,anchor=west] at (4.5,2)  {\appallingunderline{\footnotesize{\textit{Model Space} $\bbR$}} };
		
		% Draw x-axis
		\draw[->] (0.5,-0.25) -- (5,-0.25) node[anchor=north] {$\theta$};
		
		% Optionally add ticks and labels
		\foreach \x in {0, 1, 2, 3} {
			% \draw (\x+1, 0.1) -- (\x+1, -0.1) node[below left] {\x};
			\draw (\x+1, -0.15) -- (\x+1, -0.35);
			\node[black,anchor=east] at (\x+1.1, -0.525)  {\x};
		}
		
		% lines
		\draw[gray] (-1.5,1.25) -- (3,1.25)  node[anchor=west] {$\argmin_\theta (x_1\theta - y_1)^2$};
		\draw[gray,-{Stealth[length=2mm]}] (3,1.25) -- (3,-0.25);
		
		\draw[gray] (-2.25,-1) -- (2,-1)  node[anchor=west] {$\argmin_\theta (x_2\theta - y_2)^2$};
		\draw[gray,-{Stealth[length=2mm]}] (2,-1) -- (2,-0.25);
		
		\foreach \x in {0.7} {
			\draw[cvprblue] (-2.25,-1) -- node[right] {\hyperref[eq:RLS-AF]{\textcolor{cvprblue}{RLS-($\beta,0$)}}} node[left] { $\argmin_\theta \sum_{i=1}^{2}(x_i\theta - y_i)^2/\beta^i$ } (-2.25,\x) -- (2.6,\x);
			%\draw[cvprblue] (-2.25,\x) -- (2.6,\x);
			\draw[cvprblue,-{Stealth[length=2mm]}] (2.6,\x) -- (2.6,-0.25) ;
			
			\draw[cvprblue] (-1.5,1.25) -- (-1.5,\x);
		}
		
		%  node[anchor=south west] {\hyperref[eq:RLS-AF]{\textcolor{cvprblue}{RLS-($\beta,0$)}} }
		% node[anchor=north west] {$\min_\theta \sum_{i=1}^{2}(x_i\theta - y_i)^2/\beta^i $}
		
	\end{tikzpicture}
	\caption{For the two tasks with data $(x_1,y_1)=(2,4)$ and $(x_2,y_2)=(1,1)$, the respective best models are $\theta_1^*=2$ and $\theta_2^*=1$ (gray arrows). \hyperref[eq:RLS-AF]{\textcolor{cvprblue}{RLS-($\beta,0$)}} outputs a model that averages $\theta_1^*$ and $\theta_2^*$ (blue arrows). \label{fig:RLS}}
\end{figure}

% While \ref{eq:RLS-AF} makes no assumption on data, \hyperref[eq:RLS-AF]{\textcolor{cvprblue}{RLS-($0,\lambda$)}}  would require $\Phi_{t-1} x_t  \neq 0$ (that is  $P_{t-1} x_t  \neq 0$).

\myparagraph{Linearized RLS and Layer-wise RLS} From the equivalence in \cref{lemma:RLS=ICL}, it follows from the previous section that  \ref{eq:RLS-AF} applies to \textit{linearized} nonlinear models (linearized RLS), and to deep networks in a layer-wise fashion (layer-wise RLS). To compare the two methods, assume the network has $L$ layers and each layer has $D$ parameters. Then linearized RLS needs $O(L^2 D^2)$ memory to store the projection matrix (e.g., $\Phi_t$), while layer-wise RLS needs $O(LD^2)$ memory to store $L$ different $D\times D$ projection matrices, one for each linear layer. Time complexities of both methods can be compared similarly.

On a historical note, layer-wise RLS was used to accelerate training neural networks around 1990, and in survey  \cite{Shah-1992} it was called \textit{Enhanced Back Propagation} (EBP). The method of \textit{Orthogonal Weight Modification} (OWM) \cite{Zeng-NMI2019} can be viewed as an application of EBP to continual learning. While OWM empirically reduces catastrophic forgetting, it has not been mathematically clear how much (layer-wise) RLS forgets with arbitrary $\beta,\lambda$. Moreover, RLS and its layer-wise extension are prone to numerical errors. These issues are later alleviated by \ref{eq:GP}, as \ref{eq:GP} typically updates the orthogonal projection via numerically robust SVD and its non-forgetting property is proved for deep networks (\cref{theorem:GP}). % \cite{Saha-ICLR2021,Wang-CVPR2021}.

% The recent method of \textit{Orthogonal Weight Modification} (OWM) \cite{Zeng-NMI2019} can be viewed as an application of EBP to continual learning with the purpose of alleviating catastrophic forgetting. However, it has not been fully mathematically clear how (layer-wise) RLS prevents forgetting for general $\beta,\lambda$. Moreover, the RLS update of the orthogonal projections based on the Sherman-Morrison formula is prone to numerical errors. These issues are later sidestepped by \ref{eq:GP}, as \ref{eq:GP} typically updates the orthogonal projection via numerically robust SVD and its non-forgetting property is  proved for deep networks (\cref{theorem:GP}) \cite{Saha-ICLR2021,Wang-CVPR2021}.

\myparagraph{RLS for Dynamically Expanding Networks} Besides linearization and the layer-wise philosophy, the third use of RLS in deep continual learning involves pre-trained models that are now widely available. Pre-trained models provide generalizable features, which can be used for continual learning of downstream tasks. Pre-trained models also simplify network design, as strong performance can be achieved by simply combining them with shallow networks such as a linear classifier. Here we consider classification tasks with pre-trained models and a new task might contain previously unseen classes, a setting known as \textit{class-incremental learning}. To solve these tasks continually, we train a linear classifier using the output features of a given, frozen, pre-trained model. In response to the growing number of seen classes, the classifier needs to grow new neurons in order to make predictions about all seen classes (\cref{fig:dynamic-classifier}).

% stems from the observation that powerful pre-trained models are now widely available and provide generalizable features, which enables continual learning of downstream tasks with shallow networks such as a simple linear classifier. Here we consider classification tasks and a new task might contain previously unseen classes, a setting known as \textit{class-incremental learning}. To solve these tasks continually, we train a linear classifier using the output features of the some pre-trained model. In response to the growing number of seen classes, the classifier grows new neurons in order to make predictions about all seen classes (\cref{fig:dynamic-classifier}).

\begin{figure}
    \centering
    \includegraphics[width=0.85\textwidth]{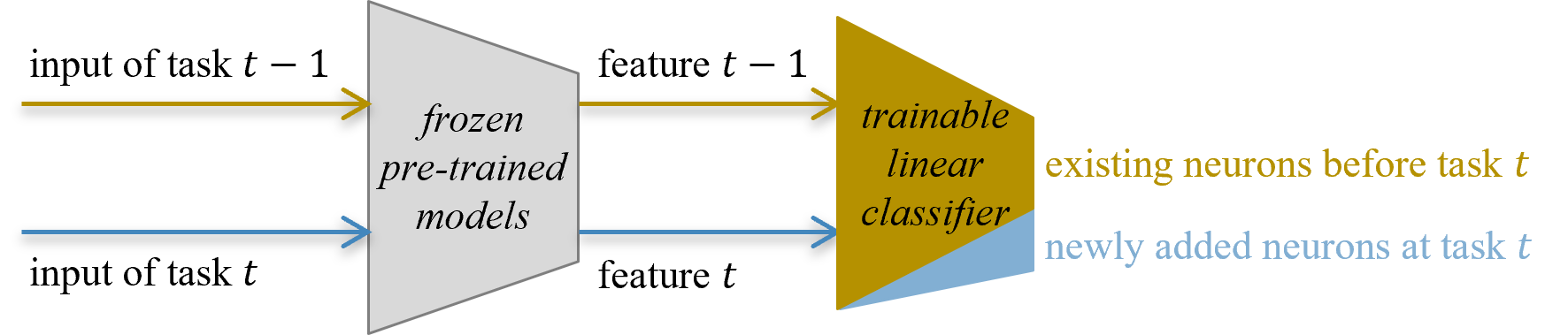}
    \caption{A dynamically expanding architecture for continual classification. If a new task contains new classes, then the classifier adds new neurons in order to make predictions for all seen classes. }
    \label{fig:dynamic-classifier}
\end{figure}

To formalize the situation, assume at task $t$ we are given the output feature $x_t\in\bbR^d$ from the pre-trained model and also a class label $y_t$. Here $y_t$ is some \textit{one-hot vector} of dimension $c_t$ (that is a standard basis vector in $\bbR^{c_t}$), where $c_t$ is the number of classes seen thus far. By definition, we have $c_1\leq \cdots \leq c_t$. If $y_t$ has the entry $1$ at position $k$, then $x_t$ is in class $k$. At task $t$, we have seen $c_t$ classes, but the previous label $y_i$ has dimension $c_i$; we need to pad $c_t-c_i$ zero entries to $y_i$ to match the dimension. For simplicity, we view $y_i$ as having $c_t$ entries (zeros are implicitly padded if needed). Our goal is to continually train a classifier $\Theta\in \bbR^{d\times c_t}$, which grows its columns as $c_t$ increases.

% a new class is revealed at task $t$, the previous label $y_i$ has dimension $c_i$ and $c_i<c_t$, so it does not support classifying $c_t$ classes. A remedy is to pad $c_t-c_i$ zero entries to $y_i$. For simplicity, we view $y_i$ as having $c_t$ entries (zeros implicitly padded if necessary). Our goal is to continually train a classifier $\Theta\in \bbR^{d\times c_t}$, which grows its columns as $c_t$ increases.

Similarly to \ref{eq:RLS-AF}, we consider solving ($\| \cdot \|_{\textnormal{F}}$ denotes the Frobenius norm of a matrix)
\begin{align}\label{eq:D-RLS-AF}
    \Theta^t\in \argmin_{ \Theta\in \bbR^{d\times c_t} } \lambda \cdot \| \Theta \|_{\textnormal{F}}^2 + \sum_{i=1}^{t} \frac{\left\| y_i - \Theta^\top x_i \right\|_2^2}{\beta^i}. \tag{D-RLS}
\end{align}
Since the continual update of $\Theta^{t-1}$ into $\Theta^t$ is similar to that of \cref{prop:RLS-AF-update}, we omit it here. We furthermore refer the reader to \cite{Zhuang-NeurIPS2022} for extra details and to \cite{Azimi-TNN1993} for a variant that expands the network layer-wise. A final remark is that the features $[x_1,\dots,x_t]$ from pre-trained models tend to be ill-conditioned, which might amplify the numerical instability of RLS; see \cite{Peng-arXiv2024} for a detailed account and a remedy.

%It is perhaps less noted that the philosophy of \cite{Shah-1992} is more general than layer-wise RLS or \ref{eq:GP}: One could partition the network parameters into several groups, linearize each group \textit{separately}, and then apply RLS or \ref{eq:GP} to each group. Indeed, consider the following general version of deep networks \cref{eq:DNN}:
%\begin{align*}
%	f_{\theta} = \sigma \circ g_{\Theta_L} \sigma \circ g_{\Theta_{L-1}} \circ \cdots \circ \sigma \circ g_{\Theta_1}.
%\end{align*}
%Here, $g_{\Theta_\ell}$ is some nonlinear function, and it recovers \cref{eq:DNN} if $g_{\Theta_\ell}$ is linear and maps its input $x$ to $x^\top \Theta_\ell$. As suggested by \cite{Shah-1992}, we could linearize $g_{\Theta_\ell}$ and apply layer-wise RLS or \ref{eq:GP}, or we might even linearize $\sigma \circ g_{\Theta_\ell}$. As discussed, the non-forgetting property (\cref{theorem:GP}) can be proved for the linearized models, while it remains unclear how that would transfer to a guarantee for the original, nonlinear models. Finally, there is an ongoing interest to investigate the empirical performance of this philosophy in modern continual learning settings, and for related works, we refer the reader to \cite{Qiao-ICLR2024} and the follow-up contributions.

\myparagraph{Summary} \ref{eq:RLS-AF} generalizes the \ref{eq:APA2} family, and when different tasks have different solutions, \ref{eq:RLS-AF} \textit{averages} these solutions (cf. \cref{fig:RLS}). However, if these solutions are far from each other, the average would not solve any of the tasks well. In the next section, we address this issue by imposing a precise mathematical relationship between the solution of each task and exploring its ramifications.

\section{Kalman Filter (KF)}
\myparagraph{Notations and Setup} We consider the \textit{Linear Gaussian Model}:% (also known as the \textit{Linear Dynamical System}):
\begin{equation}\label{eq:LGM}
	\begin{split}
		\theta_i &= A_i \theta_{i-1} + w_i  \ \ (i\geq 2), \\ 
		y_i &= X_i^\top \theta_i + v_i.
	\end{split} \tag{LGM}
\end{equation} 
Here, $A_i,X_i$ are matrices of sizes $d\times d$ and $d\times m$, respectively, while $\theta_1$, all $w_i$'s, all $v_i$'s are independent and Gaussian; we assume their means and covariances are known and given by
\begin{align*}
	\theta_1\sim \cN(\mu_1, \Sigma_1),\ w_i\sim \cN(0, Q_i),\ v_i\sim \cN(0,R_i).
\end{align*}
Here, $\Sigma_1, Q_i$ are $d\times d$ positive definite matrices, $\mu_1\in \bbR^d$, and $R_i$ is positive definite of size $m\times m$. Different from the previous equation $y_i=x_i^\top \theta^*$, each task $i$ now has $m$ equations $y_i = X_i^\top \theta_i + v_i$, thus $m$ samples, and $y_i$ is a random vector of dimension $m$. One more difference is the extra equation $\theta_i = A_i \theta_{i-1} + w_i$, which means each task $i$ has its own true model $\theta_i$, and this equation defines a linear relationship between two consecutive tasks. Despite these differences, the goal is similar (vaguely put): estimate $\theta_1,\dots,\theta_t$ continually. 
\ref{eq:LGM} relies on Gaussian assumptions, and the derivation of this section requires exercising probabilistic reasoning. The first exercise is ($a|b$ denotes $a$ conditioned on $b$):
\begin{lemma}\label{lemma:ab+a|b}
	Assume $b=X^\top a + \omega$, where $a,\omega$ are independent and jointly Gaussian with $a\sim \cN(\mu_a, \Sigma_a)$, $\omega\sim \cN(0,\Sigma_{\omega})$, and $X$ is a fixed matrix. Then $a,b$ are jointly Gaussian and  $b$ has mean $X^\top \mu_a$ and covariance $\Sigma_b:=\Sigma_{\omega}  + X^\top \Sigma_{a} X$. Also,  $a|b$ is  Gaussian, with its mean $\mu_{a|b}$ and covariance $\Sigma_{a|b}$ given by
    \begin{equation}\label{eq:a|b}
        \begin{split}
            \mu_{a|b} &= \mu_a -   \Sigma_a X  \Sigma_{b}^{-1} (X^\top \mu_a - b), \\ 
            \Sigma_{a|b} &= \Sigma_a - \Sigma_a X   \Sigma_{b}^{-1} X^\top \Sigma_a .
        \end{split} 
    \end{equation}
\end{lemma}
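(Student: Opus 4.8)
The plan is to derive both the joint law of $(a,b)$ and the conditional law $a\mid b$ from two elementary facts: affine images of jointly Gaussian vectors are jointly Gaussian, and uncorrelated jointly Gaussian vectors are independent. First I would note that $(a,b)$ is an affine image of the jointly Gaussian vector $(a,\omega)$, since $b = X^\top a + \omega$ depends linearly on $(a,\omega)$; hence $(a,b)$ is jointly Gaussian. Linearity of expectation with $\bbE[\omega]=0$ gives $\bbE[b]=X^\top\mu_a$, and the independence of $a$ and $\omega$ yields $\mathrm{Cov}(b)=X^\top\Sigma_a X+\Sigma_\omega=\Sigma_b$ together with the cross-covariance $\mathrm{Cov}(a,b)=\mathrm{Cov}(a,X^\top a)=\Sigma_a X$ (the $\omega$ contribution vanishes). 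This settles the first two claims.

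For the conditional law I would use the innovation/orthogonalization argument rather than a block-matrix inversion, which fits the Kalman-filter spirit of the section. Define the candidate estimator $\hat a:=\mu_a-\Sigma_a X\Sigma_b^{-1}(X^\top\mu_a-b)$ --- precisely the claimed $\mu_{a|b}$ viewed as a function of $b$ --- and the error $e:=a-\hat a$. A short computation gives $\bbE[e]=0$ (using $\bbE[b]=X^\top\mu_a$) and, crucially, $\mathrm{Cov}(e,b)=\mathrm{Cov}(a,b)-\Sigma_a X\Sigma_b^{-1}\mathrm{Cov}(b)=\Sigma_a X-\Sigma_a X=0$. Since $(e,b)$ is again an affine image of $(a,b)$, it is jointly Gaussian, so being uncorrelated forces $e$ and $b$ to be independent.

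With independence established, the conclusion is immediate: writing $a=\hat a+e$ and conditioning on $b$ freezes $\hat a$ (a deterministic function of $b$) while leaving $e$ distributed as its marginal, so $a\mid b\sim\cN(\hat a,\mathrm{Cov}(e))$. This identifies $\mu_{a|b}=\hat a$ at once. For the covariance I would compute $\mathrm{Cov}(e)=\mathrm{Cov}(e,a)$ --- the term $\mathrm{Cov}(e,\hat a)$ drops because $\hat a$ is a function of $b$ and $e$ is independent of $b$ --- and then $\mathrm{Cov}(e,a)=\Sigma_a-\Sigma_a X\Sigma_b^{-1}\mathrm{Cov}(b,a)=\Sigma_a-\Sigma_a X\Sigma_b^{-1}X^\top\Sigma_a=\Sigma_{a|b}$, using $\mathrm{Cov}(b,a)=X^\top\Sigma_a$.

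The single delicate point is the inference from \emph{uncorrelated} to \emph{independent}: it is valid only because $(e,b)$ is jointly Gaussian, and would be false for generic distributions. I would therefore record the joint Gaussianity of $(e,b)$ explicitly before invoking independence; everything else is routine bookkeeping with bilinearity of covariance and the independence of $a$ and $\omega$.
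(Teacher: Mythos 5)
Your proof is correct, and it is genuinely more self-contained than the paper's. For the first half (joint Gaussianity of $(a,b)$ and the moments of $b$) you and the paper argue the same way: the paper checks that linear combinations of entries of $(a,b)$ are linear combinations of entries of $(a,\omega)$, which is your ``affine image'' observation, followed by the same direct moment computations. The divergence is in the conditional law: the paper does not prove \cref{eq:a|b} at all --- it cites \cite[Section 2.3.1]{Bishop-2006}, where the formula is obtained by completing the square in the exponent of the joint density with partitioned precision matrices. You instead prove it from scratch via the innovation argument: setting $\hat a:=\mu_a-\Sigma_a X\Sigma_b^{-1}(X^\top\mu_a-b)$ and $e:=a-\hat a$, verifying $\mathrm{Cov}(e,b)=0$, upgrading uncorrelatedness to independence via joint Gaussianity of $(e,b)$, and then reading off $a\mid b\sim\cN\bigl(\hat a,\mathrm{Cov}(e)\bigr)$ with $\mathrm{Cov}(e)=\Sigma_a-\Sigma_a X\Sigma_b^{-1}X^\top\Sigma_a$. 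I checked the covariance bookkeeping ($\mathrm{Cov}(a,b)=\Sigma_a X$, the cancellation in $\mathrm{Cov}(e,b)$, and the drop of $\mathrm{Cov}(e,\hat a)$) and it is all sound; your explicit flagging of the uncorrelated-implies-independent step as Gaussian-specific is exactly the right caution. What each approach buys: the paper's citation keeps the tutorial short, while Bishop's density manipulation requires nondegenerate densities and block inversion; your orthogonalization argument is coordinate-free, never inverts a partitioned matrix (only $\Sigma_b$, whose invertibility both proofs already presuppose since $\Sigma_b^{-1}$ appears in the statement), and has the pedagogical bonus of being precisely the geometry underlying the Kalman gain $K_t$ in \cref{eq:kf-correction} --- so it arguably fits the section's narrative better than the cited derivation.
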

\begin{proof}
    Recall that a random vector is Gaussian if and only if any linear combination of its entries is Gaussian. Since $a$ and $\omega$ are jointly Gaussian, any linear combination of their entries is Gaussian. Since $b=X^\top a + \omega$, any linear combination of entries of $b,a$ is a linear combination of entries of $a,\omega$, which is Gaussian. This implies $a,b$ are jointly Gaussian. We have $\bbE[b]=\bbE[X^\top a + \omega] = X^\top \mu_a$ and
	\begin{align*}
		\bbE[ (b-\bbE[b]) (b - \bbE[b])^\top ] &= \bbE[ (X^\top a + \omega-X^\top \mu_a) (X^\top a + \omega - X^\top \mu_a)^\top ],
	\end{align*}
	which is simplified to $\Sigma_{\omega}  + X^\top \Sigma_{a} X$. To prove $a|b$ is Gaussian and \cref{eq:a|b}, see, e.g., \cite[Section 2.3.1]{Bishop-2006}.
\end{proof}
In \cref{lemma:ab+a|b}, Of the continual learning flavor is the update of $\mu_a$ into $\mu_{a|b}$, which is upon observing $b$. Indeed, if $X$ is a vector and $b$ a scalar, then \cref{eq:a|b} can be written as
\begin{equation*}
	\begin{split}
		\mu_{a|b} &= \mu_a - \frac{1}{\Sigma_{\omega} + X^\top \Sigma_{a} X  } \cdot \Sigma_a \cdot X (X^\top \mu_a - b), \\ 
		\Sigma_{a|b} &= \Sigma_a -  \frac{\Sigma_a X  X^\top \Sigma_a}{\Sigma_{\omega} + X^\top \Sigma_{a} X }. 
	\end{split}% \tag{$a|b$}
\end{equation*}
This matches the RLS update in \cref{prop:RLS-AF-update} and \cref{remark:Phi-update}. For this reason, \cref{eq:a|b} can be viewed as a generalization of the RLS update for the case of observing multiple samples per task. 

\myparagraph{MAP Formulation} We consider estimating $\theta_1,\dots,\theta_t$ via the \textit{Maximum A Posteriori} principle: 
\begin{align}\label{eq:map}
	\theta_{i|t} \in \argmax_{\theta_i\in \bbR^d} p\left(\theta_i \ |\ y_1,\dots, y_t \right). \tag{MAP}
\end{align}
Here, $p_i(\cdot\ |\ \cdot )$ is the conditional \textit{probability density function} (pdf); we allow $t=0$ and this defines $\theta_{i|0}$ as a maximizer of $p(\theta_i)$. % Without confusion, notations are overloaded: $\theta_i$ is an optimization variable and $y_1,\dots, y_t$ are observed in \ref{eq:map}, while they all denote random variables in \ref{eq:LGM}. 
% \myparagraph{Solve MAP} 
Since $\theta_0,w_i,v_i$ in \ref{eq:LGM} are jointly Gaussian and $\theta_i$ and $y_i$ are linear functions of them, \cref{lemma:ab+a|b} indicates $\theta_i,w_i,v_i,y_i$ ($\forall i$) are jointly Gaussian. This furthermore implies $\theta_i\ |\ y_{:t}$ is Gaussian (here note that $y_{:t}:=[y_1^\top,\dots,y_t^\top]^\top$). As a basic property of the Gaussian pdf, $p\left(\theta_i \ |\ y_{:t} \right)$ is maximized at the mean $\bbE[\theta_i \ |\ y_{:t}]$. In other words, the \ref{eq:map} estimate $\theta_{i|t}$ is given by
\begin{align}%\label{eq:map=mean}
	\theta_{i|t} = \bbE[\theta_i \ |\ y_1,\dots,y_t].
\end{align}
We can therefore solve \ref{eq:map} by tracking the means $\theta_{i|t}$ as $i$ and $t$ vary. The classic \textit{Kalman Filter} (KF) does so continually as $t$ increases, to track the most recent mean $\bbE[\theta_t \ |\ y_{:t}]$ given all seen data $y_{:t}$. On the other hand, the \textit{Rauch-Tung-Striebel smoother} (RTS) initializes itself at $\theta_{t|t}$ and computes $\theta_{t-1|t}, \theta_{t-2|t},\dots,\theta_{1|t}$ in cascade and \textit{backward}. Combining KF and RTS yields a scheme that computes all $\theta_{i|t}$ with $i\leq t$; these are the upper triangular part of the matrix in \cref{fig:kf-rts}. What is missing in \cref{fig:kf-rts} is that  KF and RTS compute the mean $\theta_{i|t}$ together with the \textit{error covariacne} $\Sigma_{i|t}$:
\begin{align}\label{eq:Sigma-def}
	\Sigma_{i|t}:= \bbE\left[ (\theta_i - \theta_{i|t}) (\theta_i - \theta_{i|t})^\top\ |\ y_1,\dots,y_t \right].
\end{align}
As we will see in the subsequent derivations, $\Sigma_{i|t}$ facilitates transferring the estimate for one task to an estimate for another task, playing a role similar to the projection $P_t$ (or $\Phi_t$) of the previous section.

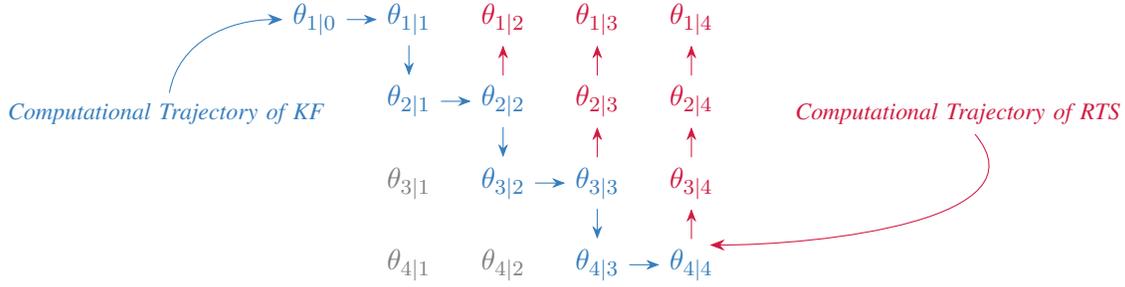
\begin{figure}
	\centering
	\begin{tikzpicture}
		% KF
		\node[cvprblue] at (1, 4) (11) {$\theta_{1|1}$};
		\node[cvprblue, below=0.4cm of 11] (21) {$\theta_{2|1}$};
		\node[cvprblue, right=0.4cm of 21] (22) {$\theta_{2|2}$};
		\node[cvprblue, below=0.4cm of 22] (32) {$\theta_{3|2}$};
		\node[cvprblue, right=0.4cm of 32] (33) {$\theta_{3|3}$};
		\node[cvprblue, below=0.4cm of 33] (43) {$\theta_{4|3}$};
		\node[cvprblue, right=0.4cm of 43] (44) {$\theta_{4|4}$};

		\node[cvprblue, left=0.4cm of 11] (10) {$\theta_{1|0}$};
		\draw[cvprblue,-{Stealth[length=1.75mm]}] (10) -- (11);
		
		\draw[cvprblue,-{Stealth[length=1.75mm]}] (11) -- (21);
		\draw[cvprblue,-{Stealth[length=1.75mm]}] (21) -- (22);
		\draw[cvprblue,-{Stealth[length=1.75mm]}] (22) -- (32);
		\draw[cvprblue,-{Stealth[length=1.75mm]}] (32) -- (33);
		\draw[cvprblue,-{Stealth[length=1.75mm]}] (33) -- (43);
		\draw[cvprblue,-{Stealth[length=1.75mm]}] (43) -- (44);
		
		% RTS
		\node[myred, right=0.4cm of 11] (12) {$\theta_{1|2}$};
		\node[myred, right=0.4cm of 12] (13) {$\theta_{1|3}$};
		\node[myred, right=0.4cm of 13] (14) {$\theta_{1|4}$};
		
		\node[myred, right=0.4cm of 22] (23) {$\theta_{2|3}$};
		\node[myred, right=0.4cm of 23] (24) {$\theta_{2|4}$};
		
		\node[myred, right=0.4cm of 33] (34) {$\theta_{3|4}$};

		\draw[myred,-{Stealth[length=1.75mm]}] (22) -- (12);
		
		\draw[myred,-{Stealth[length=1.75mm]}] (33) -- (23);
		\draw[myred,-{Stealth[length=1.75mm]}] (23) -- (13);
		
		\draw[myred,-{Stealth[length=1.75mm]}] (44) -- (34);
		\draw[myred,-{Stealth[length=1.75mm]}] (34) -- (24);
		\draw[myred,-{Stealth[length=1.75mm]}] (24) -- (14);

		% others
		\node[gray, below=0.4cm of 21] (31) {$\theta_{3|1}$};
		\node[gray, below=0.4cm of 31] (41) {$\theta_{4|1}$};
		\node[gray, right=0.4cm of 41] (42) {$\theta_{4|2}$};

		\node[black,anchor=east] at (0,2.75) (kf) {\footnotesize{\textit{\textcolor{cvprblue}{Computational Trajectory of KF}}}};
		
		\node[black,anchor=west] at (6,2.75) (rts) {\footnotesize{\textit{\textcolor{myred}{Computational Trajectory of RTS}}}};
		
		\draw[-latex, -{Stealth[length=2mm]},cvprblue] (kf) to[out=80,in=180] (10);
		
		%\draw[-latex, -{Stealth[length=2mm]},myred] (rts) to[out=-50,in=0] (34);
		\draw[-latex, -{Stealth[length=2mm]},myred] (rts) to[out=-50,in=0] (5, 1);
	\end{tikzpicture}
	\caption{The \textit{Kalman Filter} (KF) follows blue arrows and computes  $\theta_{i|t}$ in order to track $\theta_{t|t}$. The \textit{Rauch-Tung-Striebel smoother} (RTS) begins with $\theta_{t|t}$ and computes $\theta_{t-1|t}, \theta_{t-2|t},\dots,\theta_{1|t}$ in succession.  \label{fig:kf-rts} }
\end{figure}

%\begin{remark}[LGM Versus Continual Learning]
%	A continual learning method is said to exhibit \textit{positive backward transfer}, if the model it learns after the current task \textit{improves}, not just \textit{maintains}, the performance on past tasks; this is thus a stronger property than catastrophic forgetting. The current mathematical understanding of positive backward transfer is limited. Acquiring such understanding requires capturing the relationship between tasks more precisely. Our key observation is that $\theta_i = A_i \theta_{i-1} + w_i$ \ref{eq:LGM} defines a \textit{linear relationship} between tasks. We will show this provably permits positive backward transfer.
%\end{remark}

\myparagraph{The KF Recursion} We prove the classic KF recursion in the following.
\begin{prop}
	Define the ``Kalman gain'' matrix $K_t:=\Sigma_{t|t-1} X_t  (R_t  + X_t^\top \Sigma_{t|t-1} X_t)^{-1}\in\bbR^{d\times m}$. KF is initialized at $\theta_{1|0} = \bbE[\theta_1]= \mu_1$ and $\Sigma_{1|0} = \Sigma_1$. For $t\geq 1$, KF updates $\theta_{t|t-1}$ and the corresponding error covariance $\Sigma_{t|t-1}$ via (arrows \textcolor{cvprblue}{$\rightarrow$} in \cref{fig:kf-rts})
	\begin{equation}\label{eq:kf-correction}
		\begin{split}
			\theta_{t|t} &= \theta_{t|t-1} -   K_t (X_t^\top \theta_{t|t-1} - y_t), \\ 
			\Sigma_{t|t} &= \Sigma_{t|t-1}  - K_t X_t^\top \Sigma_{t|t-1}.
		\end{split}  \tag{KF$\rightarrow$}
	\end{equation}
	For $t\geq 1$, KF updates $\theta_{t|t}$ and the corresponding error covariance $\Sigma_{t|t}$ via (arrows \textcolor{cvprblue}{$\downarrow$} in \cref{fig:kf-rts})
	\begin{align}\label{eq:kf-pred}
		\theta_{t+1|t} = A_{t+1} \theta_{t|t}, \ \Sigma_{t+1|t} = Q_{t+1} + A_{t+1} \Sigma_{t|t} A_{t+1}^\top. \tag{KF$\downarrow$}
	\end{align}
\end{prop}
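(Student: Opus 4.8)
The plan is to prove both updates by applying \cref{lemma:ab+a|b} (and, for the prediction step, elementary linear propagation of Gaussians), working throughout with distributions conditioned on the data seen so far. The structural fact I would lean on is the Markov property built into \ref{eq:LGM}: the measurement noise $v_t$ is independent of $(\theta_t, y_1,\dots,y_{t-1})$, and the process noise $w_{t+1}$ is independent of $(\theta_t, y_1,\dots,y_t)$. This is immediate from the independence assumptions, since $\theta_t$ is a function of $\theta_1, w_2,\dots,w_t$ and $y_{1:t}$ is a function of $\theta_{1:t}, v_{1:t}$. Combined with the joint Gaussianity already established (all $\theta_i,w_i,v_i,y_i$ are jointly Gaussian), this lets me treat each recursion as a fresh, low-dimensional Gaussian conditioning problem in which $\theta_{t|t-1},\Sigma_{t|t-1}$ are simply the mean and covariance of the Gaussian $\theta_t \mid y_{1:t-1}$.

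For the correction step \ref{eq:kf-correction}, I would condition everything on $y_1,\dots,y_{t-1}$, under which $\theta_t$ is $\cN(\theta_{t|t-1},\Sigma_{t|t-1})$. The measurement equation $y_t = X_t^\top \theta_t + v_t$ then has exactly the form required by \cref{lemma:ab+a|b}, with the identifications $a=\theta_t$, $\omega=v_t$, $X=X_t$, $b=y_t$, and $\mu_a=\theta_{t|t-1}$, $\Sigma_a=\Sigma_{t|t-1}$, $\Sigma_\omega=R_t$, all read conditionally on $y_{1:t-1}$. The lemma yields $\Sigma_b = R_t + X_t^\top \Sigma_{t|t-1} X_t$, so the gain $\Sigma_a X \Sigma_b^{-1}$ is precisely $K_t=\Sigma_{t|t-1} X_t (R_t + X_t^\top \Sigma_{t|t-1} X_t)^{-1}$. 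Substituting into \cref{eq:a|b} and reading $\mu_{a|b}$ as $\bbE[\theta_t \mid y_{1:t}] = \theta_{t|t}$ and $\Sigma_{a|b}$ as $\Sigma_{t|t}$ reproduces \ref{eq:kf-correction} verbatim.

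The prediction step \ref{eq:kf-pred} does not need the conditioning lemma. I would condition on $y_1,\dots,y_t$, so that $\theta_t$ has mean $\theta_{t|t}$ and covariance $\Sigma_{t|t}$, and propagate through $\theta_{t+1}=A_{t+1}\theta_t + w_{t+1}$. Taking conditional expectation and using $\bbE[w_{t+1}\mid y_{1:t}]=0$ gives $\theta_{t+1|t}=A_{t+1}\theta_{t|t}$. For the covariance I would write $\theta_{t+1}-\theta_{t+1|t} = A_{t+1}(\theta_t - \theta_{t|t}) + w_{t+1}$ and plug this into \cref{eq:Sigma-def}; the cross terms vanish because $w_{t+1}$ is independent of $\theta_t \mid y_{1:t}$, leaving $\Sigma_{t+1|t} = A_{t+1}\Sigma_{t|t}A_{t+1}^\top + Q_{t+1}$.

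The main obstacle, and really the only delicate point, is justifying that \cref{lemma:ab+a|b} may be applied to the \emph{conditional} law of $\theta_t$ given $y_{1:t-1}$ rather than to an unconditional Gaussian. This amounts to checking that, conditionally on $y_{1:t-1}$, the pair $(\theta_t,v_t)$ is still jointly Gaussian with $v_t$ independent of $\theta_t$. Both follow from the unconditional independence $v_t \perp (\theta_t, y_{1:t-1})$: since $y_{1:t-1}$ is a function of $\theta_{1:t-1}, v_{1:t-1}$, conditioning on it leaves the law of $v_t$ untouched and preserves its independence from $\theta_t$, while joint Gaussianity is inherited from that of the full vector. Once this conditional-independence check is recorded, both recursions are immediate, and the alternation of \ref{eq:kf-correction} and \ref{eq:kf-pred} (initialized at $\theta_{1|0}=\mu_1$, $\Sigma_{1|0}=\Sigma_1$) propagates the Gaussian parametrization inductively across all $t$.
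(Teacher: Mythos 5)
Your proposal is correct and takes essentially the same route as the paper: the correction step \ref{eq:kf-correction} via \cref{lemma:ab+a|b} applied to the conditional law of $\theta_t$ given $y_{:t-1}$ with $a=\theta_t$, $\omega=v_t$, $b=y_t$, and the prediction step \ref{eq:kf-pred} via linear propagation of the conditional Gaussian through $\theta_{t+1}=A_{t+1}\theta_t+w_{t+1}$, which is exactly the marginal-statistics part of the same lemma that the paper cites. Your explicit verification that conditioning on $y_{:t-1}$ preserves joint Gaussianity of $(\theta_t,v_t)$ and the independence of $v_t$ from $\theta_t$ merely makes rigorous a step the paper's one-line proof leaves implicit, so this is a (welcome) refinement rather than a different argument.
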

\begin{proof}
	\ref{eq:kf-correction} follows from \cref{lemma:ab+a|b} and for this we verify the conditions of \cref{lemma:ab+a|b}: $\theta_t| y_{:t-1}$ is Gaussian with mean $\theta_{t|t-1}$ and covariance $\Sigma_{t|t-1}$; $y_t|y_{:t-1}$ is a linear function of $\theta_t|y_{:t-1}$ and $v_t$ (recall \ref{eq:LGM}). \ref{eq:kf-pred} follows from \cref{lemma:ab+a|b} and the fact that $\theta_{t+1}|y_{:t}$ is a linear function of $\theta_t |y_{:t}$ and $w_{t+1}$.
%	since $y_t=X_t^\top \theta_t + v_t$, we have $y_t|y_{:t-1}= X_t^\top (\theta_t|y_{:t-1}) + v_t$. \ref{eq:kf-pred} follows from \cref{lemma:ab+a|b} with the fact that $\theta_{t+1}|y_{:t} = A_{t+1} (\theta_t |y_{:t}) + w_{t+1}$ (recall \ref{eq:LGM}).
\end{proof}
% In summary, KF initializes itself as per \cref{eq:kf-init} and then applies \ref{eq:kf-correction} and \ref{eq:kf-pred} in an alternating fashion (\cref{fig:kf-rts}). 
\ref{eq:kf-correction} is sometimes called the \textit{correction step} as it revises the estimate based on new observation $y_t$, and \ref{eq:kf-pred} is called the \textit{prediction step} as it computes $\theta_{t+1|t}$ without new observations.

%We now proceed as indicated by the blue arrows \textcolor{cvprblue}{$\rightarrow$} and \textcolor{cvprblue}{$\downarrow$} in \cref{fig:kf-rts}. Following horizontal arrows \textcolor{cvprblue}{$\rightarrow$}, we need to update $\theta_{t|t-1}$ into $\theta_{t|t}$ for $t\geq 1$. To do so, we verify the conditions of \cref{lemma:ab+a|b}:
%\begin{itemize}
%	\item $\theta_t| y_{:t-1}$ is Gaussian with mean $\theta_{t|t-1}$ and covariance $\Sigma_{t|t-1}$;
%	\item since $y_t=X_t^\top \theta_t + v_t$, we have $y_t|y_{:t-1}= X_t^\top (\theta_t|y_{:t-1}) + v_t$.
%\end{itemize}
%We can now apply \cref{lemma:ab+a|b} with $\omega=v_t\sim \cN(0,R_t)$ to obtain
%
%
%where the $d\times m$ matrix $K_t:=\Sigma_{t|t-1} X_t  (R_t  + X_t^\top \Sigma_{t|t-1} X_t)^{-1}$ is known as the \textit{Kalman gain}.
%
%We now follow the vertical blue arrows (\textcolor{cvprblue}{$\downarrow$}) in \cref{fig:kf-rts} and update $\theta_{t|t}$ into $\theta_{t+1|t}$. In \ref{eq:LGM} we condition on $y_{:t}$ and obtain $\theta_{t+1}|y_{:t} = A_{t+1} (\theta_t |y_{:t}) + w_{t+1}$. It follows from \cref{lemma:ab+a|b} that

% In summary, KF initializes itself as per \cref{eq:kf-init} and then applies \ref{eq:kf-correction} and \ref{eq:kf-pred} in an alternating fashion (\cref{fig:kf-rts}). \ref{eq:kf-correction} is sometimes called the \textit{correction step} as it revises the estimate based on new observation $y_t$, and \ref{eq:kf-pred} is called the \textit{prediction step} as it computes $\theta_{t+1|t}$ without new observations.

\myparagraph{KF and RLS} % \ref{eq:RLS-AF} is related to \cref{lemma:ab+a|b}, and we get \ref{eq:kf-correction} from \cref{lemma:ab+a|b}. 
The following lemma shows that \ref{eq:RLS-AF} is a special case of KF: 
\begin{lemma}\label{lemma:kf-rls}
	Let $\{(\theta^t,\Phi_t)\}_{t\geq 0}$ be the iterates of \ref{eq:RLS-AF} as defined in \cref{prop:RLS-AF-update,remark:Phi-update}. Let $\{ \theta_{t|t}, \Sigma_{t|t} \}_{t\geq 1}$ be the iterates of KF applied to the following special case of \ref{eq:LGM} ($i\geq 2$)
	\begin{align*}
		\theta_i = \theta_{i-1},\ \  y_i = x_i^\top \theta_i + v_i
	\end{align*}
	with $\theta_1\sim \cN(\theta^1,\Phi_1)$ and $v_i\sim(0, \beta^i)$. Then $\theta_{t|t}=\theta^t$ and $\Sigma_{t|t}= \Phi_t$ for all $t\geq 1$.	
%	Assume  $m=1$, $Q_{t}=0$, and $A_{t}$ is the identity matrix. Furthermore, assume $R_t=\beta^t$, $X_t = x_t$, $y_t = y_t$, $\mu_1=\theta^1$, and $\Sigma_1=\Phi_1$. Then $\theta_{t|t}=\theta^t$ and $\Sigma_{t|t}= \Phi_t$ for all $t\geq 1$.
\end{lemma}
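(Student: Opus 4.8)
The plan is to show that the special Linear Gaussian Model in the statement degenerates the two-step Kalman recursion into a single step that is literally the RLS update, and then to conclude by induction on $t$. First I would read off the parameters of this special case of \ref{eq:LGM}: the state transition is $A_t=I_d$ with no process noise (so $Q_t=0$, since $\theta_i=\theta_{i-1}$), the observation ``matrix'' is the single column $X_t=x_t$ (hence $m=1$), and the scalar measurement-noise variance is $R_t=\beta^t$. Substituting $A_{t+1}=I_d$ and $Q_{t+1}=0$ into the prediction step \ref{eq:kf-pred} makes it trivial, $\theta_{t+1|t}=\theta_{t|t}$ and $\Sigma_{t+1|t}=\Sigma_{t|t}$, so the quantities that feed the correction step satisfy $\theta_{t|t-1}=\theta_{t-1|t-1}$ and $\Sigma_{t|t-1}=\Sigma_{t-1|t-1}$.

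Next I would specialize the correction step \ref{eq:kf-correction}. With $X_t=x_t$ and $R_t=\beta^t$, the Kalman gain is the vector $K_t=\Sigma_{t|t-1}x_t/(\beta^t+x_t^\top\Sigma_{t|t-1}x_t)$, and \ref{eq:kf-correction} becomes
\begin{align*}
	\theta_{t|t} &= \theta_{t|t-1} - \frac{\Sigma_{t|t-1}x_t}{\beta^t + x_t^\top \Sigma_{t|t-1}x_t}\,(x_t^\top\theta_{t|t-1}-y_t),\\
	\Sigma_{t|t} &= \Sigma_{t|t-1} - \frac{\Sigma_{t|t-1}x_t x_t^\top \Sigma_{t|t-1}}{\beta^t + x_t^\top \Sigma_{t|t-1}x_t}.
\end{align*}
Using $\theta_{t|t-1}=\theta_{t-1|t-1}$ and $\Sigma_{t|t-1}=\Sigma_{t-1|t-1}$ from the degenerate prediction step, these are term-for-term the RLS recursions \cref{eq:RLS-update-theta} and \cref{eq:RLS-update-Phi} under the correspondence $\theta_{t-1|t-1}\leftrightarrow\theta^{t-1}$ and $\Sigma_{t-1|t-1}\leftrightarrow\Phi_{t-1}$.

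Finally I would close the argument by induction on $t$. The base case $t=1$ holds by the choice of prior, since $\theta_1\sim\cN(\theta^1,\Phi_1)$ initializes the filter exactly at the RLS state $(\theta^1,\Phi_1)$, i.e. $\theta_{1|1}=\theta^1$ and $\Sigma_{1|1}=\Phi_1$. For the inductive step, assuming $\theta_{t-1|t-1}=\theta^{t-1}$ and $\Sigma_{t-1|t-1}=\Phi_{t-1}$, the matched update formulas above yield $\theta_{t|t}=\theta^t$ and $\Sigma_{t|t}=\Phi_t$, completing the induction.

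I expect the only real subtlety to be the bookkeeping around which covariance enters the correction step, and that is the step I would be most careful about. The RLS update in \cref{prop:RLS-AF-update} is driven by the \emph{previous posterior} $\Phi_{t-1}$, whereas the Kalman correction \ref{eq:kf-correction} is driven by the \emph{predicted} covariance $\Sigma_{t|t-1}$; these agree only because the degenerate prediction step ($A_t=I_d$, $Q_t=0$) makes prediction the identity map, so $\Sigma_{t|t-1}=\Sigma_{t-1|t-1}$. I would also be mindful of the indexing convention at the base case, noting that folding $(x_1,y_1)$ into the prior $\cN(\theta^1,\Phi_1)$ is precisely what aligns $\theta_{1|1}$ with the RLS iterate $\theta^1$ rather than with the regularization-only prior $\cN(0,I_d/\lambda)$.
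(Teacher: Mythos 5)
Your proposal is correct and follows essentially the same route as the paper, whose proof is the one-line remark that substituting the degenerate parameters ($A_t=I_d$, $Q_t=0$, $X_t=x_t$, $R_t=\beta^t$) into the KF updates reproduces the RLS recursions \cref{eq:RLS-update-theta} and \cref{eq:RLS-update-Phi}; you simply spell out the details the paper omits. Your closing remarks on the base case---that the prior $\cN(\theta^1,\Phi_1)$ folds $(x_1,y_1)$ into the initialization so the first processed observation is $y_2$, and that the degenerate prediction step is what lets the previous posterior $\Phi_{t-1}$ play the role of the predicted covariance $\Sigma_{t|t-1}$---correctly identify the only genuine subtleties in the equivalence.
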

\begin{proof}
	Using the assumptions to simplify the KF updates gives formulas identical to RLS udpates.
\end{proof}
\cref{lemma:kf-rls} reveals that \ref{eq:RLS-AF} implicitly relies on a shared solution to all tasks ($\theta_i = \theta_{i-1}$) and that it generalizes the \ref{eq:APA2} family by allowing noise $v_i$ added to measurements $y_i$.

\myparagraph{Extensions of KF} Similarly to \ref{eq:RLS-AF} and the \ref{eq:APA2} family, KF can be extended for deep networks (layer-wise), dynamically growing a linear classifier, and for nonlinear models. To illustrate the idea, we treat the case of nonlinear models. With nonlinear functions  $g_i, f_i$, consider the nonlinear version of \ref{eq:LGM}:
\begin{equation*}
	\begin{split}
		\theta_i &= g_i(\theta_{i-1}) + w_i  \ \ (i\geq 2), \\ 
		y_i &= f_i(\theta_i) + v_i, 
	\end{split}
\end{equation*}
Different from previous linearization \cref{eq:linearize-f}, we now have two functions $g_t,f_t$ to be linearized at each task $t$. Linearizing them around different points yields different variants. Similarly to \cref{eq:linearize-f}, the \textit{Extended Kalman Filter} (EKF) performs linearization around the \textit{most recent estimate}: It linearizes $g_{t+1}$ around $\theta_{t|t}$ and  $f_{t+1}$ around $\theta_{t+1|t}$. While EKF sets $\theta_{t+1|t}=g_{t+1}(\theta_{t|t})$, it computes other estimates by replacing the roles of $A_{t}, X_t$ in the KF updates (\ref{eq:kf-correction}, \ref{eq:kf-pred}) with the linearized functions.

\myparagraph{Positive Backward Transfer} A continual learning method is said to exhibit \textit{positive backward transfer}, if at the current task it \textit{improves}, not just \textit{maintains}, the performance on past tasks. This is a stronger property than not forgetting, but theoretical understanding of positive backward transfer is much more limited in the continual learning literature. The final topic of this tutorial is to illustrate that, in \ref{eq:LGM}, positive backward transfer follows naturally from combining KF and RTS (\cref{fig:kf-rts}).

Recall that, when $\theta_{t|t}$ is learned from task $t$, RTS computes $\theta_{i|t}$, an estimate for the $i$-th task ($i<t$). To claim this yields positive backward transfer, we need to prove $\theta_{i|t}$ is \textit{better} than any previous estimate $\theta_{i|s}$ ($i\leq s < t$).  Intuitively, $\theta_{i|t}$ might be better, as it uses  $(t-s)m$ more samples for estimation, but this is not absolutely clear, as the extra samples are from different tasks. Formally, we evaluate the quality of $\theta_{i|t}$ using the expected squared error $\bbE[\| \theta_i - \theta_{i|t} \|_2^2]$, that is the trace $\trace(\Sigma_{i|t})$, and we have:% the following result:

% To compare $\theta_{i|t}$ and $\theta_{i|s}$ mathematically, define the \textit{error covariance matrix}
% Is $\theta_{i|t}$ a \textit{better} estimate for the $i$-th task than $\theta_{i|s}$ ($i\leq s < t$)?
% We will show combining KF and RTS provably exhibits positive backward transfer. To formulate the problem, note that $\theta_{t|t}$ is an estimate of the $t$-th model $\theta_t$ using the $t$ samples seen thus far, and RTS transfers $\theta_{t|t}$ into an estimate  $\theta_{i|t}$ of the $i$-th model ($i<t$). To establish positive backward transfer, we need to prove $\theta_{i|t}$ is a \textit{better} solution to the $i$-th task than $\theta_{i|s}$, for any $i\leq s < t$. Intuitively, $\theta_{i|t}$ should be better, as it uses  $t-s$ more samples than $\theta_{i|s}$ for estimation. But we need a mathematical criterion to compare $\theta_{i|t}$ and $\theta_{i|s}$. For this, define the \textit{error covariance matrix}
\begin{theorem}\label{theorem:pbt}
	Under \ref{eq:LGM}, and with \ref{eq:map} estimate  $\theta_{i|t}$, we have $\Sigma_{i|t}\preceq \Sigma_{i|s}$ ($\forall i\leq s \leq t$) and thus
	\begin{align*}%\label{eq:pbt}
		\bbE \left[\| \theta_i - \theta_{i|t} \|_2^2 \right] \leq \bbE \left[\| \theta_i - \theta_{i|s} \|_2^2 \right].
		% \trace\left(\Sigma_{i|t}\right) \leq \trace\left(\Sigma_{i|s}\right). 
		% = \bbE \left[\| \theta_i - \theta_{i|t} \|_2^2 \right] \leq \bbE \left[\| \theta_i - \theta_{i|s} \|_2^2 \right] = \trace\left(\Sigma_{i|s}\right).
	\end{align*}
\end{theorem}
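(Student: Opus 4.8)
The plan is to exploit the fact, already established through \cref{lemma:ab+a|b}, that the variables $\theta_i$ and $y_{:t}$ arising from \ref{eq:LGM} are jointly Gaussian, so that the \ref{eq:map} estimate equals the conditional mean $\theta_{i|t}=\bbE[\theta_i\mid y_{:t}]$ and the error covariance $\Sigma_{i|t}$ in \cref{eq:Sigma-def} is exactly the conditional covariance $\mathrm{Cov}(\theta_i\mid y_{:t})$. The crucial Gaussian feature I would invoke is that this conditional covariance is a \emph{deterministic} matrix: it does not depend on the realized values of $y_{:t}$, as is visible from \cref{eq:a|b}, where $\Sigma_{a|b}$ is free of $b$. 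Granting this, the whole statement reduces to monotonicity of the conditional covariance under refinement of the conditioning information.

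Concretely, I would fix $i\le s\le t$ and apply the law of total covariance to $\theta_i$, conditioning first on the coarser information $y_{:s}$ and then on the finer information $y_{:t}$; this is legitimate because $s\le t$ gives the nesting $\sigma(y_{:s})\subseteq\sigma(y_{:t})$. This yields
\begin{align*}
\mathrm{Cov}(\theta_i\mid y_{:s}) = \bbE\!\left[\mathrm{Cov}(\theta_i\mid y_{:t})\mid y_{:s}\right] + \mathrm{Cov}\!\left(\bbE[\theta_i\mid y_{:t}]\mid y_{:s}\right).
\end{align*}
The left-hand side is $\Sigma_{i|s}$. In the first right-hand term, $\mathrm{Cov}(\theta_i\mid y_{:t})=\Sigma_{i|t}$ is the deterministic matrix noted above, so taking its conditional expectation leaves it unchanged and returns $\Sigma_{i|t}$. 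The second term is $\mathrm{Cov}(\theta_{i|t}\mid y_{:s})$, a genuine covariance matrix and hence positive semidefinite. Rearranging gives
\begin{align*}
\Sigma_{i|s} = \Sigma_{i|t} + \mathrm{Cov}(\theta_{i|t}\mid y_{:s}) \succeq \Sigma_{i|t},
\end{align*}
which is precisely $\Sigma_{i|t}\preceq\Sigma_{i|s}$.

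Finally I would convert this L\"owner ordering into the scalar claim. Taking the trace of \cref{eq:Sigma-def} and using once more that $\Sigma_{i|t}$ is deterministic, $\trace(\Sigma_{i|t})=\bbE[\|\theta_i-\theta_{i|t}\|_2^2\mid y_{:t}]=\bbE[\|\theta_i-\theta_{i|t}\|_2^2]$, and identically for $s$. Since $\Sigma_{i|t}\preceq\Sigma_{i|s}$ forces $\trace(\Sigma_{i|t})\le\trace(\Sigma_{i|s})$, the desired expected-error inequality $\bbE[\|\theta_i-\theta_{i|t}\|_2^2]\le\bbE[\|\theta_i-\theta_{i|s}\|_2^2]$ follows, establishing positive backward transfer.

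The main obstacle I anticipate is not the total-covariance manipulation, which is routine, but rigorously justifying the two places where I used that $\Sigma_{i|t}$ is observation-independent: that $\bbE[\mathrm{Cov}(\theta_i\mid y_{:t})\mid y_{:s}]=\Sigma_{i|t}$, and that the conditional expected squared error coincides with its unconditional counterpart. Both hinge on the jointly Gaussian structure supplied by \cref{lemma:ab+a|b}; I would make this explicit by tracing through \cref{eq:a|b} to confirm that $\Sigma_{i|t}$ carries no dependence on the realized data, and by verifying the nesting $\sigma(y_{:s})\subseteq\sigma(y_{:t})$ that legitimizes the nested conditioning step.
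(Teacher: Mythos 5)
Your proof is correct, but it takes a genuinely different route from the paper's. The paper argues algorithmically: it invokes the RTS covariance recursion \ref{eq:RTS-Sigma}, $\Sigma_{j-1|k} = \Sigma_{j-1|j-1} + L_{j-1}(\Sigma_{j|k}-\Sigma_{j|j-1})L_{j-1}^\top$, together with the correction-step inequality $\Sigma_{j|j}\preceq\Sigma_{j|j-1}$ from \ref{eq:kf-correction}, and runs two backward inductions --- first propagating $\Sigma_{j|t}\preceq\Sigma_{j|j}$ down from $j=t$ to reach $\Sigma_{s|t}\preceq\Sigma_{s|s}$, then comparing the $k=t$ and $k=s$ instances of \ref{eq:RTS-Sigma} to propagate $\Sigma_{j|t}\preceq\Sigma_{j|s}$ down from $j=s$ to $j=i$, using implicitly that conjugation by the backward gain $L_{j-1}$ preserves the L\"owner order. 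You bypass the KF/RTS algebra entirely: the law of total covariance under the nesting $\sigma(y_{:s})\subseteq\sigma(y_{:t})$, combined with the Gaussian fact that $\mathrm{Cov}(\theta_i\mid y_{:t})$ is a deterministic matrix, gives $\Sigma_{i|s}=\Sigma_{i|t}+\mathrm{Cov}(\theta_{i|t}\mid y_{:s})\succeq\Sigma_{i|t}$ in one stroke, and the trace step is then immediate. The determinism point you flag as the main obstacle is the right one to worry about, but it is standard: the paper establishes just after \ref{eq:map}, via \cref{lemma:ab+a|b}, that all $\theta_i$ and $y_i$ are jointly Gaussian, and the general Gaussian conditioning formula --- of which \cref{eq:a|b} is the one-step case --- produces a conditional covariance free of the realized observations; the same fact justifies identifying $\trace(\Sigma_{i|t})$ from \cref{eq:Sigma-def} with the unconditional expected squared error (the paper itself uses this identification silently). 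What your route buys is brevity and generality: it uses nothing about \ref{eq:LGM} beyond joint Gaussianity, so it proves monotonicity of the smoothed covariance for any jointly Gaussian model without ever writing down the smoother. What the paper's route buys is that it certifies the inequality directly on the covariance matrices that the KF/RTS recursions actually compute, stays within elementary matrix algebra rather than conditional-expectation machinery, and showcases the \ref{eq:RTS-Sigma} update, in keeping with the tutorial's expository aims.
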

\begin{proof}
	For any $k$, RTS begins with $\theta_{k|k}$ and computes the means $\theta_{k-1|k},\dots,\theta_{1|k}$ together their error covariances. For the proof, we need its update formula for the error covariances
	\begin{align}\label{eq:RTS-Sigma}
		\Sigma_{j-1|k} = \Sigma_{j-1|j-1} + L_{j-1} ( \Sigma_{j|k} -  \Sigma_{j|j-1}  ) L_{j-1}^\top, \tag{RTS-$\Sigma$}
	\end{align}
	which holds for all $2\leq j\leq k$ and where $L_{j-1}$ is known as the \textit{backward Kalman gain matrix} and is defined as $L_{j-1}:=\Sigma_{j-1|j-1} A_t^\top \Sigma_{j|j-1}^{-1}$. For a proof of \ref{eq:RTS-Sigma}, see, e.g., \cite[Section 8.2.3]{Murphy2023-book2}.
	
	% Next, we use \ref{eq:RTS-Sigma} to prove \cref{theorem:pbt}. 
	
	Substitute $k=t$ into \ref{eq:RTS-Sigma}, and the equality suggests: If $ \Sigma_{j|t} \preceq \Sigma_{j|j-1}$ then $\Sigma_{j-1|t} \preceq \Sigma_{j-1|j-1}$. Combining this with the inequality $\Sigma_{j|j}\preceq \Sigma_{j|j-1}$  (as indicated by  \ref{eq:kf-correction}) gives:
	\begin{align*}
		\textnormal{If $ \Sigma_{j|t} \preceq \Sigma_{j|j}$, then $\Sigma_{j-1|t} \preceq \Sigma_{j-1|j-1}$.}
	\end{align*}
	Since  $\Sigma_{t|t} \preceq \Sigma_{t|t}$ and the above holds for $j=t,t-1,\dots,s$, we get $\Sigma_{s|t} \preceq \Sigma_{s|s}$.
	
	Substitute $k=t$ and $k=s$ into \ref{eq:RTS-Sigma}, take the difference of the two equalities, and observe that: 
	\begin{align*}
		\textnormal{If $\Sigma_{j|t} \preceq \Sigma_{j|s}$, then $\Sigma_{j-1|t} \preceq \Sigma_{j-1|s}$.}
	\end{align*}
	Since $\Sigma_{s|t} \preceq \Sigma_{s|s}$ and the above holds for $j=s,s-1,\dots,i$, we get $\Sigma_{i|t} \preceq \Sigma_{i|s}$ and finish. %, proving \cref{theorem:pbt}.
	%	For a reader to verify \ref{eq:RTS-Sigma}, we highlight high-level steps:
	%	\begin{itemize}
		%		\item Since $\theta_{j-1}| y_{:j-1}$ is Gaussian and $\theta_{j}| y_{:j-1}$ is a linear function of $\theta_{j-1}| y_{:j-1}$ and $w_j$, we can obtain from \cref{lemma:ab+a|b} the mean and covariance of $\theta_{j-1}| \theta_{j}, y_{:j-1}$. These are the mean of covariance of $\theta_{j-1}| \theta_{j}, y_{:k}$, as \ref{eq:LGM} implies $\theta_{j-1}$ is independent of $y_j,\dots,y_k$ when conditioned on $\theta_{j}$.
		%		\item Given the mean and covariance of $\theta_{j-1}| \theta_{j}, y_{:k}$, and given the means and covariances computed by KF, we can derive \ref{eq:RTS-Sigma} by applying the definition $\Sigma_{j-1|k}$ in \cref{eq:Sigma-def} and \textit{law of total covariance}.
		%	\end{itemize}
\end{proof}

While $\theta_{i|t}$ and $\theta_{i|s}$ are both optimal in the sense of \ref{eq:map}, $\theta_{i|t}$ has a \textit{smaller} error covariance, so we consider $\theta_{i|t}$ to be \textit{better} than $\theta_{i|s}$. As discussed, this is an indicator of positive backward transfer. % To prove \cref{theorem:pbt}, it suffices to show $\Sigma_{i|t}\preceq \Sigma_{i|s}$. \textcolor{red}{We will do so via deriving the recursive updates of $\theta_{i|t}$ (and $\Sigma_{i|t}$) and inspecting the recursion. REVISE} 

%\begin{remark}[KF Versus Continual Learning]
%	content...
%\end{remark}

\myparagraph{Summary} For multiple linear regression tasks with a linear task relationship (\ref{eq:LGM}), KF continually estimates the solution to the most recent task, and this estimate can furthermore be transferred backward to better solve previous tasks. While transferring between different tasks requires the task relationship to be known, it is possible to solve all tasks continually and estimate their relationship simultaneously (e.g., \cite{Titsias-ICLR2024}). It is furthermore possible to do so without inferring the task relationship (e.g., \cite{Vidal-Automatica2008}).

% A drawback of KF is that it requires the task relationship to be known, therefore learning this relationship in conjunction with model parameters is a future natural direction of exploration (e.g., \cite{Titsias-ICLR2024}).

\section{Concluding Remarks}
Adaptive filtering and continual learning involve the art of balancing the past and present. This tutorial highlighted their shared principles and presented a family of interconnected methods (\cref{fig:method-connection}). While adaptive filtering in its classic sense is tailored for linear models, its ramifications in continual learning were evidenced by three ideas: linearization, layer-wise application, and combination with pre-trained models. Moreover, we consolidated the mathematical foundations of continual learning by revisiting adaptive filtering theory from a continual learning perspective. We believe a solid grasp of these ideas and mathematics is vital to pushing the theoretical and algorithmic boundaries of continual learning. 

Besides a few concrete open problems mentioned in the paper, we now comment on some general directions for future research. First, adaptive filtering has its roles in other signal processing topics (e.g., \textit{subspace tracking}, \textit{online dictionary learning}, \textit{kernel adaptive filtering}); how do they connect to continual learning (see, e.g., \cite{Balzano-Allerton2010,Liu-book2011,Ruvolo-ICML2013})? We have seen several algorithmic ideas for nonlinear models and deep networks; how about theorems (see, e.g., \cite{Peng-ICML2023,Min-CDC2022})? Finally, the tutorial here is concerned with the past and present, highlighting catastrophic forgetting; how about the future (see, e.g., \cite{De-CoLLA2023})? 

% Moreover, solving multiple tasks without knowing their relationship is related to clustering \cite{Vidal-Automatica2008}, but the continual learning aspect is. 

\begin{figure}
	\centering
	\begin{tikzpicture}[
		% Node styles
		linear/.style={
			rectangle,
			% dashed,
			draw=black,
			% fill=gray!20,
			minimum width=1.5cm,
			minimum height=0.6cm,
			text centered,
			% rounded corners
		},
		nonlinear/.style={
			rectangle,
			draw=black,
			fill=gray!20,
			minimum width=1.5cm,
			minimum height=0.6cm,
			text centered,
			rounded corners
		},
		dnn/.style={
			rectangle,
			draw=black,
			%fill=blue!20,
			minimum width=1.5cm,
			minimum height=0.6cm,
			text centered,
			rounded corners
		},
		% Arrow style
		arrow/.style={
			%->,
			%thick,
			>=stealth
		}
		]
		
		% Define method nodes
		\node[linear] (minnorm) {\ref{eq:min-norm}};
		\node[nonlinear, below=1cm of minnorm] (icl) {\ref{eq:ICL-AF}};

		\node[linear, left=1cm of icl] (apa) {\ref{eq:APA2}};
		\node[nonlinear, right=1cm of minnorm] (orfit) {\ref{eq:ORFit}};
		\node[nonlinear, below=1cm of orfit] (ogd) {\ref{eq:OGD-AF}};
		
		\node[linear, left=1cm of apa] (rls) {\ref{eq:RLS-AF}};
		
		\node[linear, above=0.4cm of apa] (lms) {\ref{eq:LMS}};
		
		\node[linear, above left= 0.5cm and -0.3cm of rls] (drls) {\ref{eq:D-RLS-AF}};

		\node[linear, left=1cm of rls] (kf) {\hyperref[eq:kf-correction]{\textcolor{cvprblue}{KF}}};
		\node[nonlinear, below=1cm of kf] (ekf) {EKF};
		\node[linear, left=1cm of ekf] (rts) {\hyperref[eq:RTS-Sigma]{\textcolor{cvprblue}{RTS}}};
		
		\node[nonlinear, below=1cm of rls] (lrls) {Linearized RLS};
		
		% \node[nonlinear, right=0.6cm of ekf] (lrls) {Linearized RLS};
		
		\node[dnn, below=1cm of apa] (ebp) {EBP};
		\node[dnn, below=1cm of icl] (owm) {OWM};
		\node[dnn, below=1cm of ogd] (gp) {\ref{eq:GP}};
		
		% \node[dnn, below=1cm of lrls] (llrls) {Locally Linearized RLS};
		%		\node[method, below right=1cm and 2cm of method1] (method2) {Method 2};
		%		\node[method, below left=1cm and 2cm of method1] (method3) {Method 3};
		%		\node[method, below=3cm of method1] (method4) {Method 4};
		
		%		% Define relationships
		\draw[arrow] (minnorm) -- (-5.25,0) -- (rls);
		\draw[arrow] (drls) -- (rls);
		
		\draw[arrow] (minnorm) -- (apa);
		
		\draw[arrow] (minnorm) -- (orfit);
		\draw[arrow] (minnorm) -- (icl);
		\draw[arrow] (apa) -- (icl);
		\draw[arrow] (apa) -- (lms);
		\draw[arrow] (orfit) -- (icl);
		\draw[arrow] (orfit) -- (ogd);
		\draw[arrow] (rls) -- (apa);
		\draw[arrow] (rls) -- (kf);
		\draw[arrow] (rls) -- (lrls);
		\draw[arrow] (rls) -- (ebp);
		\draw[arrow] (rls) -- (owm);
		
		\draw[arrow] (rts) -- (kf);
		\draw[arrow] (ekf) -- (kf);
		\draw[arrow] (ekf) -- (lrls);
		% \draw[arrow] (llrls) -- (lrls);
		
		\draw[arrow] (gp) -- (ogd);
		\draw[arrow] (gp) -- (owm);
		\draw[arrow] (ebp) -- (owm);
		% \draw[arrow] (ebp) -- (lrls);
		%		\draw[arrow] (method1) -- node[above right] {calls} (method2);
		%		\draw[arrow] (method1) -- node[above left] {uses} (method3);
		%		\draw[arrow] (method2) -- node[right] {depends on} (method4);
		%		\draw[arrow] (method3) -- node[left] {triggers} (method4);
		
		\node[black,anchor=west] at (-13,0)  {\footnotesize{Methods originally proposed for:} };
		% Basic rectangle
		% \draw (0,0) rectangle (4,3);
		
		% Rectangle with fill and color
		\draw[draw=black] (-12.5,-0.3) rectangle (-11.5,-0.7);
		\node[black,anchor=west] at (-11.5,-0.525)  {\footnotesize{linear (Gaussian) models} };
		
		\draw[fill=gray!20,  draw=black,rounded corners] (-12.5,-1) rectangle (-11.5,-1.4);
		\node[black,anchor=west] at (-11.5,-1.2)  {\footnotesize{nonlinear models} };
		
		\draw[draw=black,rounded corners] (-12.5,-1.7) rectangle (-11.5,-2.1);
		\node[black,anchor=west] at (-11.5,-1.925)  {\footnotesize{deep networks} };
		% Rectangle with rounded corners
		% \draw[rounded corners] (12,0) rectangle (16,3);
	\end{tikzpicture}
	\caption{Summary of various methods and their relations  (connected by lines) described in the paper. \label{fig:method-connection} }
\end{figure}
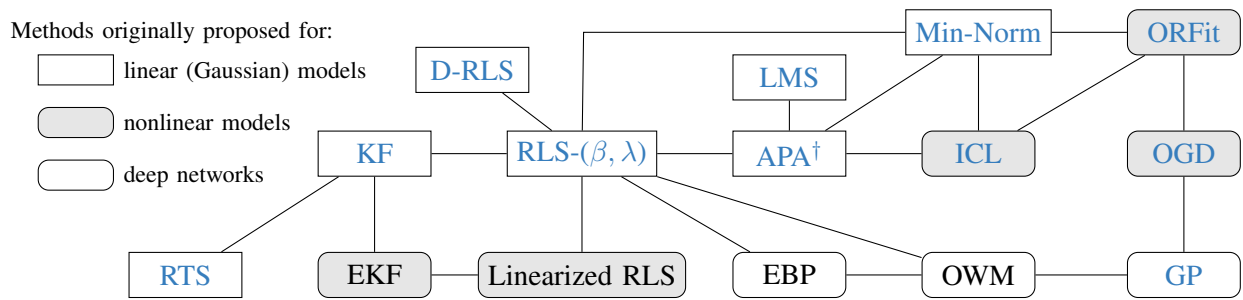

% \section{}
% \label{sec:abstract}
%    {\bf This is a template for SPM Special Issue (SI) articles, and please note that these do not have an abstract and the sections are not numbered. }

%     In addition, this document serves as an essential guide for prospective authors looking to contribute to the IEEE Signal Processing Magazine, specifically through the Special Issues (SI) articles. Besides defining the type and scope of SI articles, this document includes detailed information to assist authors in the preparation of their manuscripts. Key aspects covered are formatting requirements, best practices for content development, and comprehensive submission guidelines to streamline the review process. Authors are encouraged to familiarize themselves with these guidelines to ensure their contributions meet the publication standards and effectively communicate their research and perspectives to the broader signal processing community. Please note that SI articles do not have an Abstract and the sections are not numbered. 

%%%%%%%%%%%%%%%%%%%%%%%%%%%%%%%%%%%%%%%%%%%%%%%%%%%%%%%%%%%%%%%%%%%%%%%%%%%

%%%%%%%%%%%%%%%%%%%%%%%%%%%%%%%%%%%%%%%%%%%%%%%%%%%%%%%%%%%%%%%%%%%%%%%%%%%%%%
% ---------------------------------------------------
% Acknowledgments
% ---------------------------------------------------
%
% \section{Acknowledgment}
\myparagraph{Acknowledgment} This work is supported by the project ULEARN “Unsupervised Lifelong Learning”, funded by the Research Council of Norway (grant number 316080), and by NSF-Simons Research Collaborations on the Mathematical and Scientific Foundations of Deep Learning (NSF grant 2031985).

% ---------------------------------------------------
% References
% ---------------------------------------------------
%
% \begin{thebibliography}{1}
% \bibliographystyle{IEEEtran}

% \bibitem{ref1}
% {\it{IEEE Signal Processing Magazine}}, ``Information for authors—SPM,'' [Online]. Available: \href{https://signalprocessingsociety.org/publications-resources/ieee-signal-processing-magazine/information-authors-spm}{IEEE Signal Processing Magazine}

% \bibitem{ref2}
% {\it{IEEE Author Center}}, ``IEEE Template Selector,'' [Online]. Available: \href{https://template-selector.ieee.org/secure/templateSelector/publicationType}{IEEE Template Selector}

% \bibitem{ref4}
% {\it{ScholarOne Manuscript}}, ``IEEE SPM ScholarOne Portal,'' [Online]. Available: \href{https://mc.manuscriptcentral.com/sps-ieee}{ScholarOne}

% \bibitem{ref5}
% {\em Please remember that the references should provide a sufficiently broad view beyond authors' own contributions.} 

% \end{thebibliography}

\bibliographystyle{IEEEtran}
\bibliography{Liangzu}
%
%
%
%
% ---------------------------------------------------
% Biography
% ---------------------------------------------------
%
\section{Biographies}
\label{sec:bio}
% For SI articles you are asked to include a 100-word biography. Here is the required information:
% //Author Name// //(email address)// received his/her //highest degree// in //area// from //institution//. [Do not need to list the year but can use it if it is provided. OK to list location if provided but not required.] He/she is a //role// at //institution, city/state/postal code/country//. //Last name// serves as a //role// on //related publication or Society//. [List 2-3 major awards.] //His/her// research interests include //3-5//. //Last name// is a //IEEE membership level// (if applicable).
% \subsection*{Sample Bio}
{
\footnotesize \textbf{Liangzu Peng} is a PhD student at University of Pennsylvania. His advisor is Ren\'e Vidal. He received his M.S. degree from ShanghaiTech University and his B.S. degree from Zhejiang University. His current research focus is on continual learning.

\textbf{Ren\'e Vidal} (Fellow, IEEE) received his B.S. degree in electrical engineering from the Pontificia Universidad Cat\'olica de Chile in 1997, and his M.S. and Ph.D. degrees in electrical engineering and computer science from the University of California at Berkeley in 2000 and 2003, respectively. He is currently the Rachleff University Professor in the Departments of Electrical and Systems Engineering and Radiology at the University of Pennsylvania, where he also directs the Center for Innovation in Data Engineering and Science (IDEAS). He has coauthored the book Generalized Principal Component Analysis (Springer, 2016) and more than 300 articles in machine learning, computer vision, biomedical image analysis, signal processing, robotics and control. }

\end{document}